\theoremstyle{plain}
\newtheorem{theorem}{Theorem}[section]
\newtheorem{proposition}[theorem]{Proposition}
\newtheorem{lemma}[theorem]{Lemma}
\theoremstyle{definition}
\newtheorem{assumption}[theorem]{Assumption}
\theoremstyle{remark}
\newtheorem{remark}[theorem]{Remark}
\newcommand{\beq}{\vspace{0mm}\begin{equation}}
\newcommand{\eeq}{\vspace{0mm}\end{equation}}
\newcommand{\beqs}{\vspace{0mm}\begin{eqnarray}}
\newcommand{\eeqs}{\vspace{0mm}\end{eqnarray}}
\newcommand{\barr}{\begin{array}}
\newcommand{\earr}{\end{array}}
\newcommand{\cv}[0]{{\boldsymbol{c}}}
\newcommand{\sv}[0]{{\boldsymbol{s}}}
\newcommand{\xv}{\boldsymbol{x}}
\newcommand{\yv}{\boldsymbol{y}}
\newcommand{\zv}{\boldsymbol{z}}
\newcommand{\ones}[0]{\boldsymbol{1}}
\newcolumntype{L}[1]{>{\raggedright\let\newline\\\arraybackslash\hspace{0pt}}m{#1}}
\newcolumntype{C}[1]{>{\centering\let\newline\\\arraybackslash\hspace{0pt}}m{#1}}
\newcolumntype{R}[1]{>{\raggedleft\let\newline\\\arraybackslash\hspace{0pt}}m{#1}}
\newcommand*{\dif}{\mathop{}\!\mathrm{d}}
\newcommand{\R}{\mathbb{R}}
\newcommand{\E}{\mathbb{E}}
\newcommand{\argmin}{\operatorname{argmin}}
\newcommand{\argmax}{\operatorname{argmax}}
\newcommand{\sub}[1]{$_\text{#1}$}
\newcommand{\NAME}{O\textsc{verTone}}
\newcommand{\pr}{\text{Pr}}
\newcommand{\kl}{\text{D}_\text{KL}}
\newcommand{\ce}{\text{CE}}
\newcommand{\pil}{\pi_\theta}
\newcommand{\pit}{\pi_\text{tar}}
\newcommand{\pif}{\pi_\text{flt}}
\newcommand{\new}{\text{new}}
\newcommand{\old}{\text{old}}
\newcommand{\un}{\text{un}}
\newcommand{\de}[2]{%
  \delta_{#1}%
  \if\relax\detokenize{#2}\relax%
  \else(#2)%
  \fi%
}
\icmltitlerunning{ 
Mitigating Heterogeneous Token Overfitting in LLM Knowledge Editing
}
\begin{document}
\twocolumn[
\icmltitle{
Mitigating Heterogeneous Token Overfitting in LLM Knowledge Editing
}



\icmlsetsymbol{equal}{*}

\begin{icmlauthorlist}
\icmlauthor{Tianci Liu}{1}
\icmlauthor{Ruirui Li}{2}
\icmlauthor{Zihan Dong}{3}
\icmlauthor{Hui Liu}{2}
\icmlauthor{Xianfeng Tang}{2}
\icmlauthor{Qingyu Yin}{2}
\icmlauthor{Linjun Zhang}{3}
\icmlauthor{Haoyu Wang}{4}
\icmlauthor{Jing Gao}{1}
\end{icmlauthorlist}

\icmlaffiliation{1}{Purdue University}
\icmlaffiliation{2}{Amazon}
\icmlaffiliation{3}{Rutgers University}
\icmlaffiliation{4}{University at Albany}

\icmlcorrespondingauthor{Haoyu Wang}{hwang28@albany.edu}
\icmlcorrespondingauthor{Jing Gao}{jinggao@purdue.edu}

\icmlkeywords{Knowledge Editing, Large Language Models}

\vskip 0.3in
]



\printAffiliationsAndNotice{}  

\begin{abstract}
Large language models (LLMs) have achieved remarkable performance on various natural language tasks. However, they are trained on static corpora and their knowledge can become outdated quickly in the fast-changing world. 
This motivates the development of knowledge editing (KE) to update specific knowledge in LLMs without changing unrelated others or compromising their pre-trained capabilities. 
Previous efforts sought to update a small amount of parameters of a LLM and proved effective for making selective updates.  
Nonetheless, the edited LLM often exhibits degraded ability to reason about the new knowledge. 
In this work, we identify a key issue: \textit{heterogeneous token overfitting} (HTO), where the LLM overfits different tokens in the provided knowledge at varying rates.
To tackle this, we propose {\NAME}, a token-level smoothing method that mitigates HTO by adaptively refining the target distribution. 
Theoretically, {\NAME} offers better parameter updates with negligible computation overhead. 
It also induces an implicit DPO but does not require preference data pairs. 
Extensive experiments across four editing methods, two LLMs, and diverse scenarios demonstrate the effectiveness and versatility of our method. 

\end{abstract}

\section{Introduction}
\label{sec:intro}

Language models (LMs) parameterized by deep neural networks~\citep{vaswani2017attention,lewis2019bart,radford2019language,brown2020language}
demonstrate strong generalizability across various natural language generation and classification tasks~\citep{see2019massively,raffel2020exploring, ji2023survey}.
These successes underscore their versatility, establishing them as new foundations for natural language processing applications~\citep{bommasani2021opportunities,zhou2023comprehensive}. 
Furthermore, with model sizes continually increasing, 
large language models (LLMs) exhibit emerging abilities to follow natural language instructions~\citep{dong2022survey,ouyang2022training}, 
which empowers their zero-shot adaptations to unseen tasks~\citep{kojima2022large}, paving the way towards artificial general intelligence~\citep{bubeck2023sparks}.

Despite this remarkable potential, the real-world LLM deployment remains largely unresolved: 
LLMs are capable of comprehending a wide range of human instructions and queries, 
but they can only provide feedback based on their \textit{static} knowledge from the data they were trained on. 
In a fast-changing world, most knowledge quickly becomes outdated. 
For example, the updated knowledge about \textit{the president of United States} would refer to \textit{Donald Trump} rather than \textit{Joe Biden}. 
Failing to maintain update-to-date knowledge could amplify critical issues such as making factual fallacy~\citep{de2021editing} or producing harmful generations~\citep{hartvigsen2022toxigen}.
However, the significant computational cost of retraining makes it impractical to frequently incorporate new knowledge.



As a remedy, 
\textit{knowledge editing} (KE), whose goal is to update an LLM with some \textit{specific} knowledge without hurting irrelevant others and general ability, is proposed~\citep{wang2023knowledge, zhang2024comprehensive}.
Full fine-tuning of LLMs proved ineffective as it severely disrupted irrelevant knowledge~\citep{wang2023knowledge}, leading to an \textit{editing-locality} trade-off. 
Here \textit{locality} refers to the ability to maintain knowledge unrelated to the update, such as \textit{the prime minister of Canada} for the previous case.
To achieve a good locality, model updates need to be \textit{selective} and should rely on \textit{a small fraction} of parameters~\citep{wang2023knowledge}.
Following this principle, parameter-efficient fine-tuning (PEFT) methods such as LoRA~\citep{hu2021lora} have achieved good performance~\citep{wu2023eva}.
On the other hand, \citet{huang2023transformer, dong2022calibrating} restricted the updates to some pre-specified feed-forward network (FFN) layer that serves as knowledge storage~\citep{dai2021knowledge}.
\citet{meng2022locating, meng2022mass} refined the process by introducing a \textit{locating} stage to {identify} which layer the target knowledge is stored. These fine-grained manners have demonstrated impressive success in maintaining high locality~\citep{zhang2024comprehensive}.

Nevertheless, 
existing methods still suffered from losing LLM generalizability, 
especially when dealing with tasks that involve the edited knowledge, 
due to the so-called \textit{overfitting} of KE~\citep{zhang2024uncovering}. 
Specifically, 
KE often involves one piece of new knowledge to edit at a time, which entails updating (selected) parameters with single training instance. 
Consequently, edited LLMs tend to pay excessive attention to the edited subject, but fail to reason about the new knowledge~\citep{zhong2023mquake,zhang2024uncovering}. 
Previous works highlighted this challenge, and quantified this ability with a new metric known as \textit{portability}~\citep{zhong2023mquake,wang2024deepedit}. 
However, the underlying causes of overfitting and their relationship to the KE process remain under-explored, 
leaving \textit{if KE overfitting can be solved in a principled manner} an open question.

In this work,
we take the first step toward a deeper understanding of this overfitting, 
and pave the way for a principled solution to mitigate it. 
We first provide strong evidence that \textit{KE overfitting leads to catastrophic degradation of an LLM's reasoning ability}.
In particular, we showed that as the LLM is edited with new knowledge, 
the probability of correct reasoning consistently decreases.
To quantify this, we investigated the \textit{portability loss} at each fine-tuning step (lower indicates better reasoning ability).
We observed that while portability loss initially decreased, it grew up quickly thereafter. 
In addition, the final loss was significantly higher than the initial value. 
This finding confirms that {overfitting is a direct cause of suboptimal portability}. 


To understand this overfitting, 
we checked how new knowledge is fitted during the KE process. 
Based on our findings,
\textit{KE may only require learning a few pivotal tokens (words)},
as many tokens already exhibit small \textit{initial} loss values. 
Intuitively, an LLM's pre-trained knowledge may enable it to \textit{infer} remaining parts base on pivotal tokens. 
However, existing methods overlook this token-level difference in KE.
Even when selectively updating parameters, 
these methods aim to maximize the likelihoods of the {entire} sentence describing the new knowledge, 
which boils down to maximizing the probability of \textit{all} tokens indiscriminately~\citep{bengio2000neural,radford2019language,brown2020language}. 
As a result, this coarse-grained training paradigm leads to varying degrees of overfitting across tokens. 
We term this phenomenon \textit{heterogeneous token overfitting} (HTO) in KE.
Sec \ref{sec:problem} details our new insight on KE overfitting and its influence on portability. \textit{This is our first main contribution.}

In light of how HTO roots at a token level,
we propose {\NAME}, a new KE training paradigm to tackle it.
{\NAME} assigns each token an adaptive training target according to its (over)fitting state.
An efficient solution is proposed to construct these training objectives in a dynamic way that allows to maintain much pre-trained knowledge if possible. 
The theoretical advantage of our method lies in three folds. 
First, our solution induces negligible computation cost compared to standard training (much cheaper than a LLM forward).
{Second, our solution provides a better parameter update through the lens of importance function~\citep{koh2017understanding}.} 
Finally, {\NAME} has a close connection to direct preference optimization (DPO), a widely-used framework for LLM post-training~\citep{rafailov2024direct,zhang2024negative}, but does not require additional preference data pairs. 
Sec \ref{sec:method} covers these aspects in details.
\textit{The proposed {\NAME} and our theoretical analysis is another main technical contribution of this work.}
Remarkably, OVERTONE can be of interest to other tasks such as machine unlearning, where  selective updates of LLMs are desired. Moreover, when the training text is long, as the number of tokens to learn grows, we expect HTO to exacerbate, and OVERTONE to be helpful.

Our paper is organized as follows. 
Sec \ref{sec:problem} and Sec \ref{sec:method} details the new overfitting phenomenon in KE 
and our proposed {\NAME} for mitigation respectively. 
Extensive experimental results in Sec \ref{sec:experiment} demonstrate the superiority of our solution. 
In the remaining part of this paper, we review related works in Sec \ref{sec:background}, and conclude the paper in Sec \ref{sec:conclusion}. 

\section{Overfitting Issue in Knowledge Editing}
\label{sec:problem}

This section presents a new token-dependent overfitting phenomenon in knowledge editing (KE) that has been overlooked in the literature.
Background of KE is also provided. 

\subsection{Preliminaries}




Given a text
$\xv = (x_1, \dots, x_n)$, where each $x_i \in \mathcal V$ is a token from vocabulary $\mathcal V$, 
a large language model (LLM) parameterized by $\theta$ computes probability $\pi_\theta(\xv)$ based on chain rule~\citep{bengio2000neural}:
\begin{align}
    \pi_\theta(\xv) 
    &= \prod_{i=1}^n \pi_\theta (x_i \mid {x_1, \dots, x_{i-1}}) 
    \triangleq \prod_{i=1}^n \pi_\theta (x_i \mid \xv_{<i}), 
    \notag
\end{align}
where $\pi_\theta(x_i \mid \xv_{<i} )$ is the predicted distribution of token $x_i$ given previous $\xv_{<i}$. 
The LLM is usually trained with maximum likelihood estimation~\citep{hochreiter1997long,sutskever2014sequence,cho2014learning}. 
To generate a sentence $\xv$, the LLM computes $\pi_\theta(x_i \mid \xv_{<i})$ and draws $x_i$ from it; then $x_i$ is combined with $\xv_{<i}$ as new inputs for future steps. 
This process completes if a special token that marks the end of the sentence is returned, or if the maximum length is reached.

\textbf{Knowledge Editing (KE)} aims to update specific knowledge in a pre-trained LLM while preserving unrelated others. 
A knowledge can be represented by natural language $(\xv, \yv)$, $\xv$ describes the \textit{subject} and \textit{relation}, and $\yv$ entails corresponding \textit{object}. 
For instance, suppose $\xv$ is \textit{The president of United States is}, $\yv$ can be \textit{Donald Trump}. 
KE asks the LLM to respond given $\xv$ with new $\yv$, 
while satisfying the following criteria meanwhile~\citep{zhang2024comprehensive}:
(1) \textbf{Generality:} the edited model should generalize to all equivalent inquires about the \textit{US president}.
(2) \textbf{Portability:} questions reasoned from the new knowledge such as \textit{the first lady of United States} should be answered correctly.
(3) \textbf{Locality:} {unrelated} knowledge such as \textit{the prime minister of Canada} should be unchanged. 
These requirements of precisely updating specific knowledge proves non-trivial~\citep{wang2023knowledge,zhang2024comprehensive}.

\subsection{Overfitting in Knowledge Editing}
In response to {precise} KE requirements, 
existing attempts restrict the updates to only a minimal amount of parameters.
This design establishes remarkable progress in maintaining good locality~\citep{zhang2024comprehensive,wang2024wise}.
However, it proves insufficient to maintain good generalizability (generalilty and {portability}) due to the so-called \textit{overfitting} issue~\citep{zhong2023mquake,zhang2024uncovering}.

Namely,
many KE tasks involve one piece of new knowledge at a time,
requiring to fine-tune an LLM on single training instance. 
In such challenging scenarios, 
the LLM often encounters severe overfitting even only a few parameters are updated. 
This greatly restricts its ability to {generalize} the edited knowledge. 
As shown in \citet{zhong2023mquake,zhang2024uncovering}, 
edited LLMs usually pay excessive attention to the edited subject, 
but fail to address multi-hop reasoning questions involving the new knowledge.
As a result, this limitation results in suboptimal {portability}.

\begin{figure}[htb!]
\captionsetup[subfigure]{font=footnotesize,labelformat=parens,labelfont=footnotesize}
\def\subfigwidth{0.5\linewidth}
\centering
\begin{subfigure}[t]{\subfigwidth}
    \includegraphics[width=\linewidth]{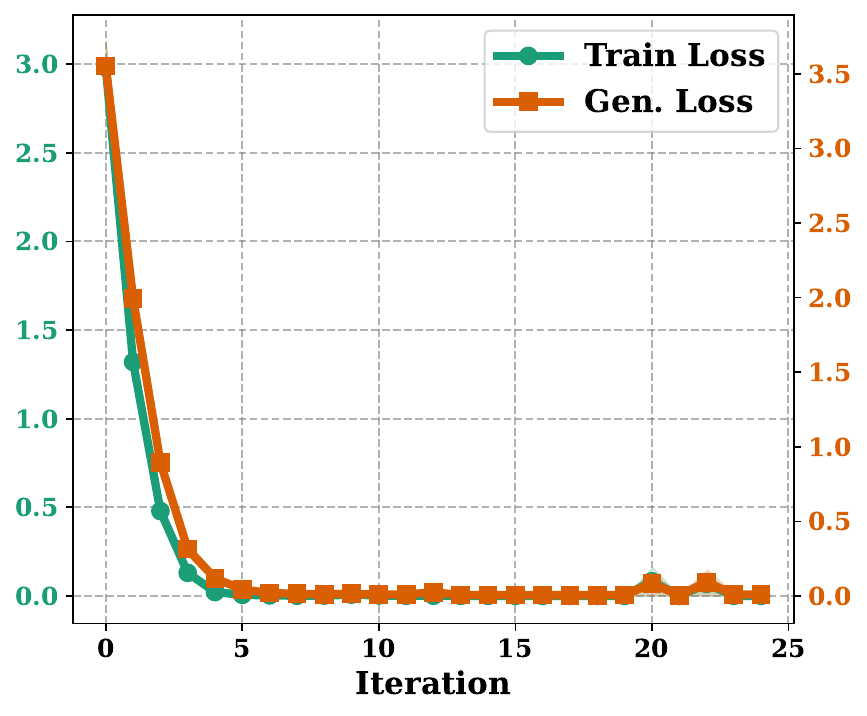}
\end{subfigure}%
\hfill%
\begin{subfigure}[t]{\subfigwidth}
    \includegraphics[width=\linewidth]{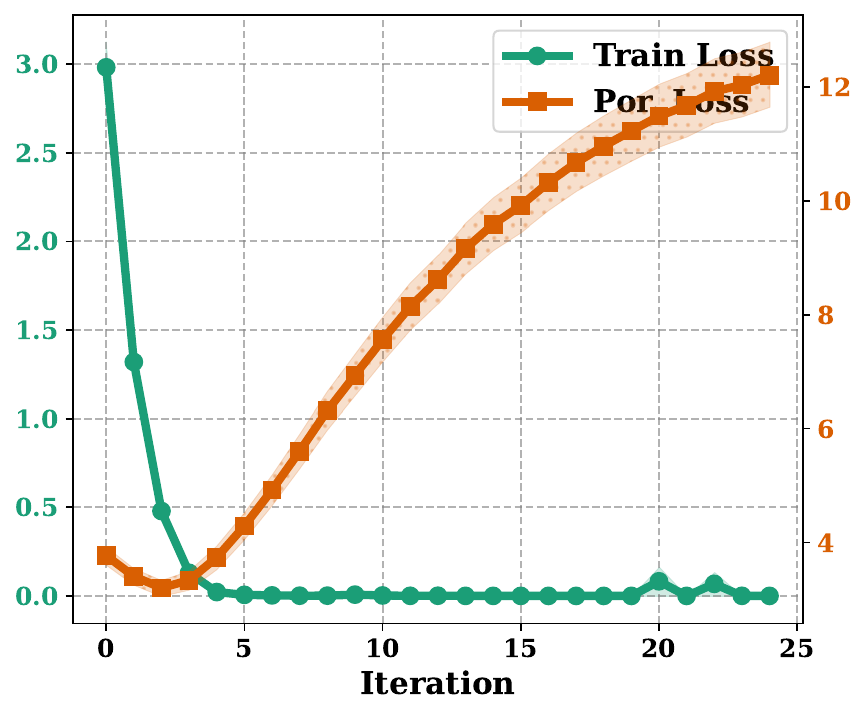}
\end{subfigure}%

\caption{
Loss (average) change of ground truth answers to \textit{generality} (rephrased, left)
and \textit{portability} (reasoning, right) questions. 
}
\label{fig:overfit}
\vspace{-0.2cm}
\end{figure}

As a direct evidence, Fig \ref{fig:overfit} shows the change of generality and portability loss\footnote{The perplexity loss of the ground truth answer to a question.} at different iterations from fine-tuning LLaMA2 7B~\citep{touvron2023llama} with LoRA, a representative KE baseline method~\citep{zhang2024comprehensive}. 
As the training goes on, the generality loss decreases. However, the portability loss decreases at the beginning of training, but starts to increase later.
This confirms the existence of overfitting.
More importantly, 
the ultimate portability loss is significantly larger than before editing, indicating that 
\textit{the reasoning ability is in fact undermined by the KE process},

\subsection{Heterogeneous Token Overfitting}

\begin{figure}[htb!]
\captionsetup[subfigure]{font=footnotesize,labelformat=parens,labelfont=footnotesize}
\def\subfigwidth{0.5\linewidth}
\centering
\begin{subfigure}[t]{\subfigwidth}
    \includegraphics[width=\linewidth]{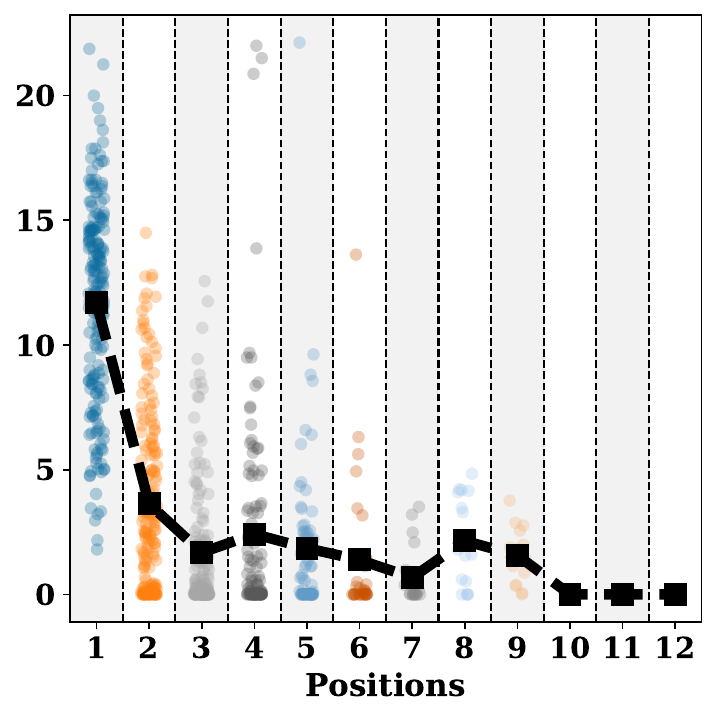}
    \caption{Initial loss.}
    \label{fig:init-loss}
\end{subfigure}%
\hfill%
\begin{subfigure}[t]{\subfigwidth}
    \includegraphics[width=\linewidth]{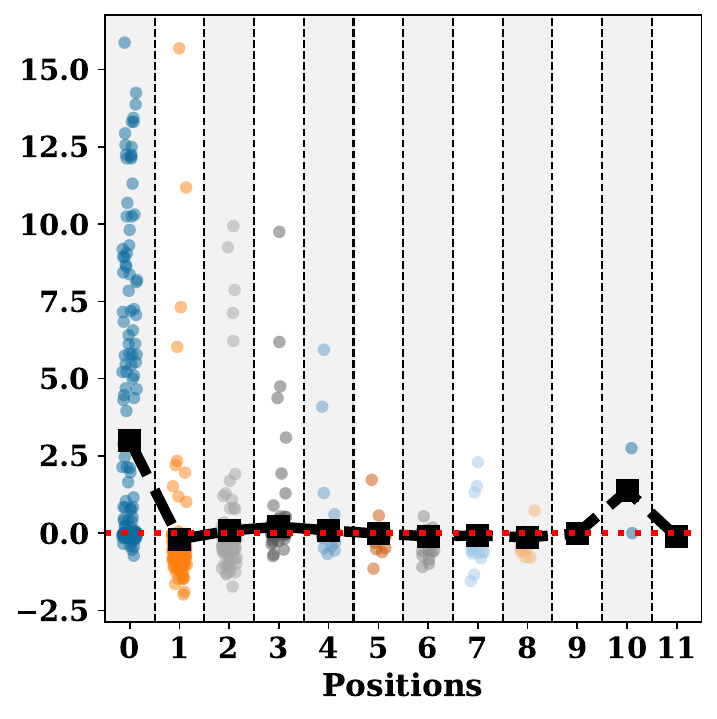}
    \caption{Underfitting degree (UD).}
    \label{fig:hto}
\end{subfigure}%

\caption{
Token-level initial loss and UD (negative indicates overfitted).
Dashed lines mark the mean values.
}
\vspace{-0.2cm}
\end{figure}

Towards a deeper understanding of this overfitting phenomenon, 
we check the loss of each token,
and find that \textit{different tokens tend to have distinct initial loss values}.
As depicted in Fig \ref{fig:init-loss}, 
{before} editing LLaMA2, only {certain} tokens (e.g., the beginning) have significant loss values. 
On the other hand, some tokens take small loss value and are \textit{initially-fitted} by nature.
As an intuitive explanation, consider the previous \textit{US president} example.
No matter a user wants to edit the answer to \textit{Donald Trump} or \textit{Joe Biden}, after seeing the first word \textit{Donald} or  \textit{Joe} as a \textit{hint}, 
the LLM is expected to be capable of infer the remaining part based on its pretrained knowledge. 

Nonetheless, 
existing KE methods overlook this token-level difference.
Consequently, they tend to overfit {tokens} that have varied losses at different speeds.
For verification, we compute the \textit{pre-edited} log-likelihood of tokens generated by the model with greedy decoding, and that of the editing instance during the KE process.
Note that our choice of greedy decoding is on purpose, as it reflects the unedited model's most confident knowledge proper that was valid in the past. 
By comparing the loss of the two, we can measure if a token is overfitted.
Specifically, we define \textit{underfitting degree} (UD) as the difference between the pre-edited and running log-likelihoods. Here negative UD indicates an overfitting.
Fig \ref{fig:hto} shows UD of different tokens when half of them are overfitted. 
Strong pattern of UD varies across different tokens confirms our concern. 
We dub this issue as \textit{heterogeneous token overfitting} (HTO) of KE.

\setlength{\intextsep}{12pt} 
\setlength{\columnsep}{20pt} 

HTO's direct cause lies in the training paradigm.
Formally, given editing instance $(\xv, \yv = [y_1, \dots, y_m])$ where $\yv$ contains $m$ tokens, 
many KE methods resort to a conventional LLM training objective\footnote{We restrict our study to the widely-used \textit{teacher-forcing} mechanism~\citep{lamb2016professor}.}.
In particular, they seek to maximize likelihood of $\pil (\yv \mid \xv)$ by minimizing an \textit{averaged} cross-entropy (CE) loss with gradient descent on
\begin{align}
\label{eq:ce-loss}
\ell_{\ce}(\theta) 
&\triangleq
\sum_{i=1}^m \ce[\de{y_i}{y} \| \pi_\theta(y \mid \xv \oplus \yv_{<i} )] \\
&= 
-\sum_{i=1}^m  \log \pi_\theta (y_i \mid \cv_i) \notag \\ 
\notag 
\nabla_\theta \ell_\ce(\theta) 
&=
- \sum_{i=1}^m \nabla_\theta \log \pil (y_i \mid \cv_i). 
\end{align}
Here $\cv_i = \xv \oplus \yv_{<i}$ denotes the context for token $y_i$, $\de{y_i}{y}$ is the Kronecker delta function%
\footnote{$\delta_{y_i}(y) = 1$ if $y=y_i$ else 0.},
and $\ce[\cdot \| \cdot]$ computes CE between two distributions. 

During training, gradient $\nabla_\theta \ell_{\ce} (\theta)$ maximizes the probability of $y_i$ whiling minimizing the probabilities of all other candidates.
When the model is repeatedly updated using gradient(s) from the \textit{single} datapoint, as in KE, 
the probabilities of \textit{initially-fitted} tokens become disproportionately large, 
while tokens with high initial loss values are gradually fitted.
That is to say, HTO lies in \textit{indiscriminately} optimizing CE loss of \textit{all} tokens, without considering their difference.
Existing attempts for mitigating overfitting such as early stopping~\citep{yao2007early} and label smoothing~\citep{szegedy2016rethinking,muller2019does} also ignore this token-level difference, making them conceptually less suitable for HTO.

\section{Propose Method}
\label{sec:method}

Given the importance of token-level difference in HTO, 
we propose {\NAME} to offer a granular control that applies to various KE methods, theoretical analysis is also provided.  

\subsection{Counteract HTO with {\NAME}}

We present {\NAME}, a token-level strategy for HTO mitigation. 
Our method \textit{smooths} $\yv$'s distribution for fitting in an adaptive way. 
Specifically,
we replace
each delta distribution $\de{y_i}{y}$ with a unique smoothed \textit{target} distribution $\pit(y \mid \cv_i)$, 
and refine the cross entropy by a clipped \textit{forward} KL divergence.
Our complete loss is given by 
\begin{align}
\label{eq:dks-loss}
\ell_{\NAME}(\theta)
&\triangleq 
\sum_{i=1}^m \max(\kl [\pit (y \mid \cv_i) \| \pi_\theta(y \mid \cv_i)], \epsilon),
\end{align}
where clipped $\max(\cdot, \epsilon)$ imposes a \textit{token-level} early stopping when predicted $\pil$ is close enough to $\pit$. 

\textbf{Principles of $\pit$ design.}
We note that two principles should be met in order to make $\pit$ a good distribution to target on. 
First,  $\pit$ should convey that ground truth token $y_i$ is most probable, otherwise, the objective may lead to incorrect knowledge. 
Second, compared to uniform prior that smooths all tokens equally, 
the model's own pre-trained knowledge is a better prior to help mitigate forgetting problem~\citep{zhang2020self,lee2022adaptive}.

In light of the two principles, we use
$\de{y_i}{}$ and the LLM's \textit{current} knowledge from its predicted distribution $\pil$ to construct target $\pit$.
However, as will be verified later, directly use $\pil$ can be suboptimal due to the non-negligible noise it carries~\citep{hewitt2022truncation,tang2024top}. 
Specifically, \citet{tang2024top} argued that $\pil$ mixes a distinct subset of \textit{informative} tokens, and a subset of \textit{noisy} tokens associating with small \textit{logits} that fall outside $n\sigma$-distant away from the maximal value. 
By filtering out noisy tokens in $\pil$, the LLM performance can be boosted at inference time. 
We bring this insight to the training (editing) phase and mix the \textit{filtered} distribution\footnote{For brevity $\pif^{(i)} = \pif( y \mid \cv_i)$, $\pit^{(i)}$ is defined similarly. Plain $\pif$ and $\pit$ will be used when discussing the general idea.} $\pif^{(i)}$ with $\de{y_i}{}$ by
\begin{align}
\label{eq:mix}
\pit^{(i)} \triangleq
\begin{cases}
\pit^\text{can} \triangleq \lambda \de{y_i}{} + (1 - \lambda) \pif^{(i)} & \text{if $y_i = \argmax_y \pit^\text{can}$}, \\[1ex]
\de{y_i}{} & \text{otherwise},
\end{cases}
\end{align}
where $\lambda$ is a hyper-parameter. 
Namely, we adopt the candidate mixture $\pit^\text{can}$ if it correctly assigns the maximal probability to $y_i$, otherwise, we \textit{skip} the mixing and use $\de{y_i}{}$. 
This \textit{skip} mechanism helps reduce potential knowledge conflicts by discarding $\pif^{(i)}$ (from $\pil$) when it heavily relies on outdated knowledge, which often happens in the first few training steps, empirical benefit is shown in Sec \ref{sec:exp:ablation}. 
Algo \ref{alg:ours} outlines the process of our solution.

\begin{algorithm}[!t]
\caption{OVERTONE Training Paradigm}
\label{alg:ours}
\begin{algorithmic}[1]
\STATE \textbf{Input:}
Editing data $(\xv, \yv = [y_1, \dots, y_m])$, 
LM parameters $\theta_0$, 
mixing hyper-parameter $\lambda$,
early-stopping threshold $\epsilon$, 
filtering threshold $n$,
total training steps $T$. 

\STATE \textbf{Initialize: }
$\theta = \theta_0$.

\FOR{$t=1, \dots, T$} 

\STATE
\textit{{\# Inner loop is parallelized in practice, unroll for better readability.}}
\FOR{$i=1, \dots, m$}

\STATE
Set context $\cv_i = \xv \oplus \yv_{<i}$.

\STATE 
Compute logits from the LM as $\sv^{(i)} = f_\theta(\cv_i) \in \R^{|\mathcal V|}$.
Take softmax and get $\pil^{(i)}$. 

\STATE
Top $n\sigma$-filter~\citep{tang2024top}:
Compute $s^{(i)}_{\max} = \max_k \sv^{(i)}$, $\sigma = \text{std}(\sv^{(i)})$. 
Define filtered logit $\tilde s^{(i)}_k = -\infty$ if $s^{(i)}_k \leq s^{(i)}_{\max} - n \sigma$ else $\tilde s^{(i)}_k = s^{(i)}_k$. 

\STATE
Take softmax on filtered $\tilde \sv$ and get filtered $\pif^{(i)}$.

\STATE
Compute target $\pit^{(i)}$ based on Eq \eqref{eq:mix}.

\STATE
Compute loss 
\begin{align*}
    \ell_{\NAME}^{(i)} = 
    \max(\kl[\pit^{(i)} \| \pil^{(i)}], \epsilon).
\end{align*}

\ENDFOR

\STATE
Compute sample loss 
\begin{align*}
    \ell_{\NAME}(\theta) = \sum_{i=1}^m \ell_{\NAME}^{(i)}. 
\end{align*}

\STATE
Update with learning rate $\alpha$
\begin{align*}
    \theta \leftarrow \theta - \alpha \nabla_\theta \ell_{\NAME}(\theta)
\end{align*}

\ENDFOR 
\OUTPUT Edited parameter $\theta$. 
\end{algorithmic}
\end{algorithm}

\subsection{Theoretical Advantages of {\NAME}}

This section provides theoretical analysis on key factors that merit {\NAME} for KE.
All proofs and more in-depth technical background are deferred to App \ref{app:proof}. 

\textbf{Merit 1. {\NAME} is universal and efficient.}

While seemingly distinct, {\NAME} is in fact a generalization of CE loss.
Moreover, our choice of $\pit$ makes it computationally efficient, with computation overhead negligible compared to LLM forward operation.
\begin{proposition}
\label{prop:gen-ce}
{\NAME} loss generalizes CE loss and reduces to the latter when $\epsilon = 0, \lambda = 1$.
\end{proposition}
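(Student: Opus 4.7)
The plan is to verify the two simplifications separately and show they collapse $\ell_{\NAME}$ to $\ell_\ce$ term-by-term. The two degrees of freedom in the OVERTONE loss are the mixing weight $\lambda$ (which controls how $\pit^{(i)}$ is built in Eq.~\eqref{eq:mix}) and the clipping threshold $\epsilon$ (which controls the outer $\max$ in Eq.~\eqref{eq:dks-loss}). Setting $\lambda=1$ should strip the smoothing back to a Dirac at $y_i$, and setting $\epsilon=0$ should make the $\max$ a no-op since KL is non-negative.

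First, I would substitute $\lambda=1$ into Eq.~\eqref{eq:mix}. The candidate mixture $\pit^{\text{can}} = 1\cdot \de{y_i}{} + 0\cdot \pif^{(i)} = \de{y_i}{}$, and this trivially satisfies the argmax condition (the Dirac puts all mass on $y_i$), so the ``skip'' branch is never triggered and $\pit^{(i)} = \de{y_i}{}$ for every $i$. This step is essentially definitional; the only subtlety worth a sentence is noting that the argmax tie-breaking is irrelevant because $\de{y_i}{}$ assigns probability $1$ to $y_i$ and $0$ elsewhere.

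Next, I would simplify the KL divergence. With $\pit^{(i)} = \de{y_i}{}$, we have
\begin{align*}
\kl[\de{y_i}{} \,\|\, \pi_\theta(y\mid \cv_i)]
= \sum_{y\in\mathcal V} \de{y_i}(y)\log\frac{\de{y_i}(y)}{\pi_\theta(y\mid \cv_i)}
= -\log \pi_\theta(y_i \mid \cv_i),
\end{align*}
using the standard convention $0\log 0 = 0$ so that terms with $y\neq y_i$ vanish. Setting $\epsilon=0$ and using non-negativity of KL gives $\max(\kl[\cdot\|\cdot],0) = \kl[\cdot\|\cdot]$, so the clip is inactive.

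Putting the two reductions together, $\ell_{\NAME}(\theta) = -\sum_{i=1}^m \log \pi_\theta(y_i\mid \cv_i)$, which matches $\ell_\ce(\theta)$ in Eq.~\eqref{eq:ce-loss} exactly; the ``generalizes'' direction is immediate since for any other $(\lambda,\epsilon)$ the target $\pit^{(i)}$ is a genuinely smoothed distribution and the clip may be active, so $\ell_\ce$ is recovered as a boundary case of a strictly richer family. There is essentially no hard step here; the only care needed is the $0\log 0 = 0$ convention and a line confirming the argmax branch of Eq.~\eqref{eq:mix} is the one selected when $\lambda=1$.
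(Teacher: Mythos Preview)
Your proof is correct and more direct than the paper's. The paper proceeds via an auxiliary lemma (Lemma~\ref{lem:to-ce}) that decomposes $\ce[\pit \| \pil]$ linearly as $\lambda\,\ce[\de{y_i}{} \| \pil] + (1-\lambda)\,\ce[\pif \| \pil]$, then writes $\kl = \ce - H(\pit)$ and absorbs the entropy into a $\theta$-independent constant, and finally sets $\epsilon=0$ to drop an indicator and $\lambda=1$ to kill the second CE term. You instead substitute $\lambda=1$ directly into Eq.~\eqref{eq:mix}, collapse $\pit^{(i)}$ to $\de{y_i}{}$, compute $\kl[\de{y_i}{} \| \pil] = -\log \pil(y_i \mid \cv_i)$ in one line, and use non-negativity of KL to neutralize the $\max(\cdot,0)$ exactly. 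The lemma-based route earns the paper a reusable identity that feeds into the later influence-function and DPO results, whereas your argument is self-contained and sidesteps the paper's slightly informal step $\max(a,\epsilon) = a\,\ones(a>\epsilon)$, which holds only up to an additive constant.
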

%
\begin{proposition}
\label{prop:efficiency}
Using Alg \ref{alg:ours}, the additional computation complexity induced by {\NAME} is $\mathcal O(|\mathcal V|)$ when fitting a token, where $|\mathcal V|$ is the vocabulary size. 
\end{proposition}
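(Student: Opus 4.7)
The plan is a straightforward per-token accounting argument that tallies, step by step, the incremental work in Algorithm~\ref{alg:ours} beyond what plain CE already requires, and shows each piece is at most linear in $|\mathcal V|$.

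First I would fix the baseline. Standard CE optimization per token already requires (i) the LLM forward yielding logits $\sv^{(i)} \in \R^{|\mathcal V|}$, (ii) a softmax over $\sv^{(i)}$ to obtain $\pil^{(i)}$, and (iii) looking up one entry $\pil^{(i)}(y_i)$. Steps (i) and (ii) are therefore not counted as \emph{additional} cost. The question reduces to: given $\sv^{(i)}$ and $\pil^{(i)}$ in memory, how expensive are the remaining operations in Algorithm~\ref{alg:ours}?

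Second, I would walk through lines 7--10 of the algorithm and bound each. (a) Computing $s^{(i)}_{\max}$ and the standard deviation $\sigma$ of $\sv^{(i)}$ each require a single pass over $|\mathcal V|$ entries, hence $\mathcal O(|\mathcal V|)$. (b) Constructing the filtered logit vector $\tilde\sv^{(i)}$ by thresholding against $s^{(i)}_{\max} - n\sigma$ is an elementwise operation in $\mathcal O(|\mathcal V|)$. (c) A softmax on $\tilde\sv^{(i)}$ to produce $\pif^{(i)}$ is $\mathcal O(|\mathcal V|)$. (d) Forming the candidate mixture $\pit^{\text{can}} = \lambda \de{y_i}{} + (1-\lambda)\pif^{(i)}$ and checking whether $y_i = \argmax_y \pit^{\text{can}}$ (hence whether to keep $\pit^{\text{can}}$ or fall back to $\de{y_i}{}$) are both elementwise / single-pass operations, $\mathcal O(|\mathcal V|)$. (e) Evaluating $\kl[\pit^{(i)}\,\|\,\pil^{(i)}] = \sum_{y} \pit^{(i)}(y)\log(\pit^{(i)}(y)/\pil^{(i)}(y))$ is a single pass over the vocabulary, $\mathcal O(|\mathcal V|)$; when $\pit^{(i)} = \de{y_i}{}$ the sum collapses to a single term and is $\mathcal O(1)$. (f) The clipping $\max(\cdot,\epsilon)$ is $\mathcal O(1)$.

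Summing a constant number of $\mathcal O(|\mathcal V|)$ operations yields the claimed $\mathcal O(|\mathcal V|)$ additional per-token cost, giving the proposition. I expect no real obstacle here: the analysis is routine, and the only subtlety worth stating explicitly in the writeup is that the logits $\sv^{(i)}$ and the distribution $\pil^{(i)}$ are byproducts of the LLM forward that any CE-based KE method must compute, so {\NAME}'s overhead is genuinely additive rather than multiplicative and, as noted in the main text, negligible compared to the forward pass itself.
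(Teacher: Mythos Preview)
Your proposal is correct and follows the same approach as the paper: a per-token accounting of the extra operations in Algorithm~\ref{alg:ours}, each shown to be at most linear in $|\mathcal V|$. Your version is in fact more thorough than the paper's, which only cites lines 8--10 and the filtering/softmax steps; you additionally account for the mixture construction, the $\argmax$ check, and the KL evaluation, all of which are also $\mathcal O(|\mathcal V|)$ and deserve mention.
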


\textbf{Merit 2. {\NAME} provides better updates.}

{\NAME} leads to more effective parameter updates, as demonstrated through the lens of the influence function~\citep{koh2017understanding}, outlined in the following informal theorem. Due to page limitations, the formal version and corresponding assumptions are deferred to Appendix \ref{app:proof:better}.

\begin{theorem}[Informal] \label{thm:better} Under regularity conditions, compared to optimizing the vanilla CE loss, {\NAME} provides a more favorable update direction for the parameters and has less influence on unrelated knowledge. \end{theorem}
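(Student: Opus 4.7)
The plan is to invoke the influence-function framework of \citet{koh2017understanding}, which quantifies how an infinitesimal editing update perturbs both parameters and downstream losses. I would start by deriving a clean decomposition of the OVERTONE gradient. Dropping the clip threshold $\epsilon$ for the analysis and using $\pit^{(i)} = \lambda\, \de{y_i}{} + (1-\lambda)\, \pif^{(i)}$ from Eq \eqref{eq:mix}, the per-token gradient reads
\[
\nabla_\theta \ell_{\NAME}^{(i)} \;=\; \lambda\, \nabla_\theta \ell_{\ce}^{(i)} \;+\; (1-\lambda)\, \boldsymbol{g}_{\text{prior}}^{(i)},
\]
where $\boldsymbol{g}_{\text{prior}}^{(i)} = -\sum_y \pif^{(i)}(y)\, \nabla_\theta \log \pil(y \mid \cv_i)$ encodes the model's own top-$n\sigma$-filtered pre-trained beliefs. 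This decomposition is the analytic backbone of both claims.

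Next, I would invoke the standard influence identity at the (locally) optimal parameter $\hat\theta$: up-weighting the editing sample $z$ infinitesimally perturbs $\hat\theta$ by $-H_{\hat\theta}^{-1} \nabla_\theta \ell(z;\hat\theta)$, and the first-order change in a test loss at $z_{\text{test}}$ equals the bilinear form $-\nabla_\theta \ell(z_{\text{test}};\hat\theta)^\top H_{\hat\theta}^{-1} \nabla_\theta \ell(z;\hat\theta)$. Plugging in the decomposition for a portability/reasoning test point, I would argue that $\boldsymbol{g}_{\text{prior}}^{(i)}$ is aligned with gradients evaluated at tokens the pre-trained LLM already assigns non-trivial mass to, which are precisely the entities appearing in downstream reasoning chains. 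The $(1-\lambda)$ component therefore contributes a favorable term to the bilinear form, whereas vanilla CE, by pushing every non-ground-truth token down uniformly, either cancels or actively opposes that signal.

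For the locality claim, I would plug the same decomposition into the influence identity for an unrelated test point $z_{\un}$ and bound
\begin{align*}
\bigl|\nabla_\theta \ell(z_{\un};\hat\theta)^\top H_{\hat\theta}^{-1} \nabla_\theta \ell_{\NAME}^{(i)}\bigr| \;\leq\; \bigl|\nabla_\theta \ell(z_{\un};\hat\theta)^\top H_{\hat\theta}^{-1} \nabla_\theta \ell_{\ce}^{(i)}\bigr|.
\end{align*}
The argument is that minimising $\kl[\pit^{(i)} \| \pil^{(i)}]$ already reports zero gradient whenever $\pil^{(i)}$ is close to $\pit^{(i)}$ (amplified by the $\max(\cdot,\epsilon)$ clip), so in expectation $\nabla_\theta \ell_{\NAME}^{(i)}$ injects strictly less energy in directions orthogonal to $\de{y_i}{}$ than the hard-target CE gradient; projecting this inequality through $H_{\hat\theta}^{-1}$ and pairing with $\nabla_\theta \ell(z_{\un};\hat\theta)$ yields the locality bound.

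The hard part will be making the alignment claim for portability rigorous rather than heuristic. Concretely, one must rule out the pathological case where $\pif^{(i)}$ concentrates on tokens that are irrelevant to reasoning test points while still being pre-trained-plausible. I would handle this with a realizability-style regularity condition that places the portability test gradient (approximately) in the span of the pre-edit gradients of the surviving top-$n\sigma$ tokens, together with positive-definiteness of $H_{\hat\theta}$ and standard smoothness of the log-likelihood; both are the ``regularity conditions'' alluded to in the informal statement. Once this assumption is in place, the favorable-direction conclusion and the locality bound both drop out of the bilinear form of the influence function with no further machinery beyond the gradient decomposition above.
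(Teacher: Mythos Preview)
Your gradient decomposition $\nabla_\theta \ell_{\NAME}^{(i)} = \lambda\,\nabla_\theta \ell_{\ce}^{(i)} + (1-\lambda)\,\boldsymbol{g}_{\text{prior}}^{(i)}$ is exactly the one the paper uses (in the paper's notation, $\lambda a + (1-\lambda)c$). The divergence is in what you do with it. For the ``favorable direction'' claim, you target a portability test gradient and try to argue $\boldsymbol{g}_{\text{prior}}$ is aligned with it via a realizability condition. The paper's target is different and more tractable: the \emph{ideal retraining gradient} $G \propto \nabla_\theta \ell_{\ce}(z^{\new};\hat\theta^{\old}) - \nabla_\theta \ell_{\ce}(z^{\old};\hat\theta^{\old}) = a + b$, i.e., the direction one would take if the outdated training point $z^{\old}$ were removed and $z^{\new}$ were added. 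The central insight you are missing is that the prior term $c = \boldsymbol{g}_{\text{prior}}$ serves as a surrogate for $b = -\nabla_\theta \ell_{\ce}(z^{\old};\hat\theta^{\old})$, because the filtered $\pif$ still places most of its mass near the old answer; the regularity assumption is a quantitative bound on $\cos(b,c)$. One then picks $\lambda$ explicitly and shows $\cos(\lambda a + (1-\lambda)c,\,a+b) > \cos(a,\,a+b)$ by elementary triangle-inequality manipulations. Your realizability-with-portability-gradients route is much harder to make rigorous and is not what the informal statement is actually formalized to mean.

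For locality, your proposed pointwise inequality $\bigl|\nabla_\theta \ell(z_{\un})^\top H^{-1}\nabla_\theta \ell_{\NAME}\bigr| \le \bigl|\nabla_\theta \ell(z_{\un})^\top H^{-1}\nabla_\theta \ell_{\ce}\bigr|$ will not hold for arbitrary $z_{\un}$: nothing prevents $H^{-1}\nabla_\theta \ell(z_{\un})$ from being more aligned with $c$ than with $a$. The paper sidesteps this by dropping $H^{-1}$ altogether (one-step gradient descent, first-order Taylor of $\pi_\theta(z^{\un})$) and by taking an \emph{expectation} over unrelated data under an isotropy assumption on $W = \nabla_\theta \pi_\theta(Z^{\un})$ (namely $W/\|W\|_2$ uniform on the sphere). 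Under isotropy, $\E|v^\top W|$ is proportional to $\|v\|_2$, so the comparison reduces to the scalar assumption $\|c\|_2 < \|a\|_2$, after which the triangle inequality gives $\E|(\lambda a + (1-\lambda)c)^\top W| < \E|a^\top W|$. Your ``less energy in directions orthogonal to $\de{y_i}{}$'' heuristic does not give you this without a distributional assumption on $z_{\un}$.
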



\textbf{Merit 3. {\NAME} has close connection to DPO and other constrained optimizations.}

One might question whether {\NAME} is conceptually superior to constrained optimization approaches, such as fine-tuning only a small set of specific parameters~\citep{dong2022calibrating,dai2021knowledge}, limiting update magnitudes~\citep{zhu2020modifying}, or employing low-rank updates~\citep{hu2021lora}.
We emphasize that {\NAME} introduces a new objective that can be solved with any optimization methods, regardless of whether constraints are imposed. 
In other words, {\NAME} can be seamlessly combined with existing constrained optimization-based solutions for KE.

Below theorem  draws a connection between {\NAME} and direct preference optimization (DPO), which has shown superior performance of maintaining pretrained knowledge in LLM post-training~\citep{wang2023making}.
\begin{theorem}
Let $\epsilon = 0$, 
optimizing {\NAME} can be seen as optimizing an unbiased estimate of a DPO objective plus some additional KL penalty. 
\end{theorem}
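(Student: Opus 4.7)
The plan is to unpack the {\NAME} loss under $\epsilon = 0$ and rearrange it into a DPO-style log-ratio term plus an explicit KL penalty. First I would write the per-token loss as $\kl[\pit^{(i)} \| \pil(\cdot \mid \cv_i)]$ and expand the mixture $\pit^{(i)} = \lambda \de{y_i}{} + (1-\lambda) \pif^{(i)}$ from Eq \eqref{eq:mix}; the ``skip'' branch collapses to vanilla CE and is the limiting case $\lambda = 1$, so it suffices to treat the non-skip branch in detail. Dropping $\theta$-independent entropies, the per-token loss reduces to the cross-entropy $-\lambda \log \pil(y_i \mid \cv_i) - (1-\lambda) \sum_y \pif^{(i)}(y) \log \pil(y \mid \cv_i)$.

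Second, I would split this sum into its two natural pieces. The target-token piece $-\lambda \log \pil(y_i \mid \cv_i)$ can be rewritten against a fixed reference $\pib$ (e.g.\ the pre-edit model) as $-\lambda \log(\pil(y_i \mid \cv_i) / \pib(y_i \mid \cv_i))$ up to a constant in $\theta$. The second piece, after adding and subtracting the entropy of $\pif^{(i)}$, equals $(1-\lambda)\,\kl[\pif^{(i)} \| \pil(\cdot \mid \cv_i)]$ plus constants; this is the promised ``additional KL penalty'' and pulls $\pil$ toward the model's own filtered predictions.

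Third, to recognize the first piece as a DPO estimate, I would use the log-ratio form of DPO in which a preferred token enters through $+\log(\pil(y_+)/\pib(y_+))$ and a dispreferred token through $-\log(\pil(y_-)/\pib(y_-))$, with the negative drawn from some proposal $q$. Taking $q = \pif^{(i)}$ (the same distribution supplying the KL penalty) and treating $y_i$ as a positive against a random sample from $\pif^{(i)}$, a single draw gives an unbiased estimator whose positive-side contribution matches $-\lambda \log(\pil(y_i \mid \cv_i) / \pib(y_i \mid \cv_i))$ up to the factor $\lambda$; summing over $i$ assembles the full statement.

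The main obstacle will be pinning down the precise DPO variant, since the classical DPO loss carries a sigmoid nonlinearity that does not surface automatically from the KL algebra. I expect this to be resolved either by invoking the linearized / IPO-style form of DPO (whose positive side is exactly a log-ratio) or by appealing to the negative-preference variant of \citet{zhang2024negative}, in which case the ``unbiased estimate'' clause refers to the fact that a single draw from $\pif^{(i)}$ provides an unbiased gradient estimate of the expected log-ratio. The remaining bookkeeping --- matching constants, verifying the scaling factor $\lambda$, and checking that the skip branch respects the same decomposition --- should be routine.
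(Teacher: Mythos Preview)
Your high-level plan (expand the mixture, separate a log-ratio piece from a KL piece) is the right shape, but the decomposition you propose leaves a genuine gap. You split the cross-entropy into the two ``natural'' pieces with weights $\lambda$ and $(1-\lambda)$, so your ``DPO piece'' is only the one-sided term $-\lambda \log(\pil(y_i\mid\cv_i)/\pib(y_i\mid\cv_i))$. That is not a DPO objective: DPO is a \emph{contrast} between a preferred and a dispreferred sample, and your first piece contains no $\pif$-dependent negative term at all---you have already committed all of the $\pif$-weighted mass to the KL penalty. Saying ``the negative is drawn from $\pif$'' does not fix this, because nothing in your algebra produces the corresponding $-\lambda\,\E_{y^-\sim\pif}[\log(\pil(y^-)/\cdot)]$ term.

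The paper's key step is a different rearrangement: writing $-\lambda A-(1-\lambda)B=-\lambda(A-B)-B$ with $A=\ce[\de{y_i}{}\|\pil]$ and $B=\ce[\pif\|\pil]$. This creates a genuine contrast $\lambda\bigl(\log\pil(y_i\mid\cv_i)-\E_{y^-\sim\pif}[\log\pil(y^-\mid\cv_i)]\bigr)$, which is then recognized as an unbiased estimate of a DPO objective (the ``preferred'' $y_i$ is viewed as a draw from an unknown $\pi^+$), while the \emph{entire} $-B$ term---not $(1-\lambda)B$---supplies the KL penalty $\kl[\pif\|\pil]$. Two further differences from your plan: the reference model is $\pif$ itself (the filtered current model), not a fixed pre-edit $\pib$, which is what makes the add/subtract of $\log\pif$ constants line up cleanly; and the sigmoid obstacle you flag is handled not via IPO or a negative-preference variant but by identifying the induced preference model as a clipped-ReLU Bradley--Terry, valid on the regime where the log-ratio lies in $[0,1]$.
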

Compared with conducting explicit DPO, {\NAME} does not require collecting preference data, and is more efficient thereof. 
Furthermore, as highlighted in \citet{rozner2024knowledge},
another challenge of applying DPO to KE is that determining \textit{win-loss} data pairs can be unstraightforward in KE. 
In contrast, {\NAME}  walks around this challenge by refraining from treating any token as \textit{unpreferred}, and instead acts on a distribution level.

\section{Experiments}
\label{sec:experiment}

We evaluate the proposed {\NAME} paradigm on four performant KE methods applying to two representative large language models (LMs) over five benchmarking datasets. 
Ablation studies are also conducted to help understand its effectiveness. 
Results show that {\NAME} helps improve editing performance by a large margin on all methods. 
More conceptual discussions can be found in Appendix~\ref{app:discuss}.

\subsection{Experiment Setup}

\textbf{Base Models.}
We conduct experiments on {two} representative LMs, \textbf{LLaMA 2-7b-Chat}~\citep{touvron2023llama} and \textbf{LLaMA 3-8b-Instruct}~\citep{dubey2024llama3}, which have been widely studied in the literature~\citep{zhang2024comprehensive, wang2024wise}. 
From now on, we refer to the two LMs as \textbf{LLaMA 2} and \textbf{LLaMA 3} for brevity.

\textbf{Tasks.}
Following \citet{wang2023knowledge, zhang2024comprehensive}, 
we edit different kinds of knowledge: WikiData\sub{recent}, WikiData\sub{counterfact}~\citep{cohen2024evaluating}, WikiBio~\citep{hartvigsen2024aging}, and ZsRE~\citep{yao2023editing}. 
Besides the four popular benchmarks, 
we also explore more complex MQuAKE~\citep{zhong2023mquake,wang2024deepedit}.
Due to page limitation, we refer readers to \citet{zhang2024comprehensive} for more benchmark details.
When editing an LLM, we consider two scenarios: 
(1) \textbf{Single Editing}: one piece of knowledge is edited at a time. 
(2) \textbf{Continual Editing}: multiple pieces of knowledge are edited in a sequential way. This is more challenging due to forgetting and knowledge conflicting~\citep{hartvigsen2024aging, wang2024wise}.

\textbf{Editing Methods.}
We apply {\NAME} to four representative KE methods from different families that have achieved state-of-the-art performance~\citep{zhang2024comprehensive, wang2024easyedit}.
\textbf{FT-M}~\citep{zhang2024comprehensive}
fine-tunes a special layer identified by causal-tracing analysis wherein the knowledge is stored.
\textbf{LoRA}~\citep{hu2021lora}
learns additive low-rank updates for model parameters on the new knowledge. 
\textbf{MELO}~\citep{yu2024melo} and \textbf{WISE}~\citep{wang2024wise} 
incorporates additional parameter copies to learn new knowledge, along with some gating mechanism to determine whether original or new knowledge should be used at inference time.
Despite incorporating certain explicit or implicit constraints on the learnable parameters, these methods are all trained to minimize the CE loss.  
For better benchmarking, we also report results from two widely-studied methods \textbf{ROME}~\citep{meng2022locating} and \textbf{MEMIT}~\citep{meng2022mass}.
ROME applies a causal-tracing analysis to identify the layer wherein the knowledge is stored and then solves an analytic rank-one update,
and 
MEMIT extends ROME by identifying a series of layers to edit and finding the updates as least-squares solutions. 
\textit{To reflect the challenging nature of KE under data scarcity regime, 
we focus on KE methods that {do not require a larges-scale hard-to-access training data, or training additional models}. 
No data augmentation were applied during the editing.}

\textbf{Evaluation Criteria.}
We evaluate the performance from four aspects as discussed in Sec \ref{sec:problem}: \textbf{reliability (Rel.)}, \textbf{generality (Gen.)}, \textbf{portability (Por.)}, and \textbf{locality (Loc.)}.
Due to page limits we refer readers to \citet{zhang2024comprehensive, wang2024wise} for their formulations. 
We report the average of different metrics for more complete comparisons.

\textbf{Implementation Details.}
All of our experiments are implemented in EasyEdit~\citep{wang2024easyedit}.
More details and hyper-parameters can be found in App \ref{app:implementation}.

\subsection{Single Editing Performance}
\label{sec:exp:single}

We evaluate the effectiveness of {\NAME} in conducting Single Editing 
on ZsRE, WikiData\sub{recent}, WikiData\sub{counterfact}, and WikiBio with different KE methods. 
WISE was tested on ZsRE, the only benchmark that contains additional irrelevant data during the editing time that is required by WISE. 

Single Editing results are reported in Tab \ref{tab:single}. 
From the table, 
all KE methods gained significant improvement from the proposed {\NAME} paradigm.
Specifically, 
The four methods hardly performed comparable to baselines ROME and MEMIT from normal training,
but were capable of exceeding them when trained with {\NAME}. 
For instance, 
without {\NAME}, 
ROME achieved the highest and the second-highest average performance for editing LLaMA 2 and LLaMA 3 respectively on Wiki\sub{recent}. 
However, 
when equipped with {\NAME}, FT-M, LoRA, and MELO outperformed ROME on both tasks. 

We next check where the improvement was made.
From the table, 
the first gain was from improved portability. 
To see this, 
note that when editing LLaMA 2 on ZsRE, 
LoRA reached a portability that was nearly {three times} of the base version. 
Similarly, MELO also reached an almost doubled portability. 
More evidence can be found from editing LLaMA 3 as well. 
In addition, 
all methods, especially those initially fall short in maintaining good locality, achieved excellent performance in this regard.
As an evidence, 
LoRA's reached a nearly five times locality improvements when editing both LLaMA 2 and LLaMA 3 on Wiki\sub{counterfact}. 
We want to highlight that, all these improvements were made \textit{without compromising editing reliability}.
That is to say, 
all the four methods achieved better trade-offs between reliability and reasoning (and locality) from the proposed {\NAME}.
More importantly, this success was established in a \textit{model-agnostic} manner, 
in the sense that {\NAME} is not specialized for any particular KE method studied here. 
Instead, it offers a highly flexible and generic paradigm that can be combined with existing solutions in a plug-and-play manner.

\textbf{More Complex Editing task}. 
To further evaluate how {\NAME} performs on complex benchmark in the filed of KE, 
we test FT-M and LoRA with editing the two LLMs on MQuAKE-2002~\citep{wang2024deepedit}\footnote{This is a cleaned version of MQuAKE by fixing knowledge conflicts~\citep{wang2024deepedit}.}, following \citet{zhong2023mquake}. 
This task requires the edited LLM to answer single- and multi-hop reasoning questions about the edited knowledge. 
Experiment results are reported in Table \ref{tab:mquake}. 
As before, {\NAME} was capable of achieving better portability without hurting the editing performance.

These empirical results echo well with our theoretical analysis, and confirm the superiority of {\NAME}.

\begin{table*}[htb!]
\definecolor{verylightgray}{gray}{0.9}

\centering
\caption{
Single Editing performance. 
Four KE methods gained improvement from {\NAME} training paradigm.
WISE requires additional irrelevant data for training, which is only available in ZsRE benchmark. 
}
\label{tab:single}
\resizebox{0.95\linewidth}{!}{
\renewcommand{\tabcolsep}{4pt}
\begin{tabular}{
>{\bfseries}r 
ccccc c 
cccc c 
cccc c 
ccc c 
}
\toprule[0.4ex]

& \multicolumn{5}{c}{\bf ZsRE} && \multicolumn{4}{c}{\bf Wiki\sub{recent}} && \multicolumn{4}{c}{\bf Wiki\sub{counterfact}} && \multicolumn{3}{c}{\bf WikiBio} \\

\midrule[0.2ex]
& \multicolumn{19}{c}{\bf LLaMA 2-7b-chat} \\
\cmidrule[0.2ex]{2-20}
&\bf Rel. &\bf Gen. &\bf Por. &\bf Loc. &\bf Avg. &&\bf Rel. &\bf Por. &\bf Loc. &\bf Avg. &&\bf Rel. &\bf Por. &\bf Loc. & \bf Avg. &&\bf Rel. &\bf Loc. &\bf Avg. \\
\cmidrule{2-6} \cmidrule{8-11} \cmidrule{13-16} \cmidrule{18-20}
ROME  & 96.61 & 83.91 & 55.7 & 96.96 & 83.3  && 99.02 & 54.21 & 55.91 & 69.71  && 97.2 & 56.85 & 50.4 & 68.15  && 96.41 & 59.14 & 77.78 \\
MEMIT & 94.22 & 88.2 & 57.91 & 98.28 & 84.65  && 97.71 & 52.93 & 55.05 & 68.56  && 96.38 & 59.34 & 45.7 & 67.14  && 93.78  & 56.74 & 75.26 \\
\noalign{\vskip 0.2ex}\cdashline{2-20}\noalign{\vskip 0.2ex}

FT-M & 99.75 & 99.33 & 54.32 & 93.01 & 86.60 && 100.0 & 62.93 & 45.92 & 69.62 && 100.0 & 74.7 & 54.86 & 76.52 && 100.0 & 90.04 & 95.02 \\
\rowcolor{gray!15}
+ Ours & 99.75 & 96.8 & 57.08 & 96.54 & 87.54 &&  100.0 & 63.91 & 60.4 & 74.77 &&   100.0 & 73.62 & 75.34 & 82.99 &&  100.0 & 93.46 & 96.73 \\
\noalign{\vskip 0.2ex}\cdashline{2-20}\noalign{\vskip 0.2ex}

LoRA & 100.0 & 100.0 & 23.34 & 30.44 & 63.45 &&  100.0 & 55.41 & 28.29 & 61.23 &&  100.0 & 71.92 & 9.99 & 60.64 &&  100.0 & 48.84 & 74.42 \\
\rowcolor{gray!15}
+ Ours & 100.0 & 94.31 & 61.16 & 87.2 & 85.67 &&  100.0 & 63.67 & 58.72 & 74.13 &&  100.0 & 73.96 & 57.85 & 77.27 &&  97.68 & 68.45 & 83.06 \\
\noalign{\vskip 0.2ex}\cdashline{2-20}\noalign{\vskip 0.2ex}

MELO & 100.0 & 96.77 & 27.11 & 92.35 & 79.06 &&  99.13 & 54.04 & 40.96 & 64.71 &&  99.0 & 71.78 & 55.83 & 75.54 &&  99.97 & 80.77 & 90.37 \\
\rowcolor{gray!15}
+ Ours  & 100.0 & 93.31 & 50.36  & 97.2 & 85.22 &&  100.0 & 60.25 & 66.48 & 75.58 &&  99.91 & 71.81 & 78.09 & 83.27 &&  99.68 & 82.58 & 91.13 \\
\noalign{\vskip 0.2ex}\cdashline{2-20}\noalign{\vskip 0.2ex}

WISE & 92.42 & 70.86 & 54.57 & 100.0 & 79.46 && - & - & - & -  && - & - & - & - && - & - & - \\
\rowcolor{gray!15}
+ Ours & 97.55 & 76.09 & 54.17 & 100.0 & 81.95 && - & - & - & -  && - & - & - & - && - & - & - \\

\midrule[0.2ex]
& \multicolumn{19}{c}{\bf LLaMA 3-8b-Instruct} \\
\cmidrule[0.2ex]{2-20}
&\bf Rel. &\bf Gen. &\bf Por. &\bf Loc. &\bf Avg. &&\bf Rel. &\bf Por. &\bf Loc. &\bf Avg. &&\bf Rel. &\bf Por. &\bf Loc. & \bf Avg. &&\bf Rel. &\bf Loc. &\bf Avg. \\
\cmidrule{2-6} \cmidrule{8-11} \cmidrule{13-16} \cmidrule{18-20}

ROME  & 99.17 & 97.91 & 58.12 & 95.9 & 87.78  && 98.84 & 54.76 & 49.74 & 67.78  && 99.94 & 58.0 & 42.94 & 66.96  && 92.43  & 72.63 & 82.53 \\
MEMIT & 96.67 & 92.46 & 58.78 & 98.23 & 86.53  && 98.51 & 53.65 & 48.45 & 66.87  && 99.44 & 57.81 & 42.73 & 66.66  && 96.26  & 71.23 & 83.75  \\
\noalign{\vskip 0.2ex}\cdashline{2-20}\noalign{\vskip 0.2ex}

FT-M & 100.0 & 99.75 & 40.43 & 79.43 & 79.90 &&  100.0 & 57.13 & 30.01 & 62.38 &&  100.0 & 72.62 & 31.47 & 68.03 &&  100.0  & 92.96 & 96.48 \\
\rowcolor{gray!15}
+  Ours & 100.0 & 99.75 & 48.63 & 94.78 & 85.79 &&  100.0 & 60.88 & 44.67 & 68.52 &&  100.0 & 73.5 & 58.29 & 77.26 &&  99.99  & 94.87 & 97.43 \\
\noalign{\vskip 0.2ex}\cdashline{2-20}\noalign{\vskip 0.2ex}

LoRA & 100.0 & 100.0 & 26.55 & 38.85 & 66.35 &&  100.0 & 52.99 & 26.46 & 59.82 &&  100.0 & 71.1 & 9.02 & 60.04 && 100.0 & 59.77 & 79.88 \\
\rowcolor{gray!15}
+ Ours & 100.0 & 98.5 & 51.57 & 93.13 & 85.80 &&  100.0 & 61.46 & 56.1 & 72.52 &&  100.0 & 72.8 & 57.54 & 76.78 && 98.16 & 77.24 & 87.7 \\
\noalign{\vskip 0.2ex}\cdashline{2-20}\noalign{\vskip 0.2ex}

MELO & 100.0 & 96.84 & 39.63 & 98.8 & 83.82 &&  100.0 & 59.07 & 65.78 & 74.95 &&  100.0 & 71.55 & 87.77 & 86.44 && 100.0  & 98.56 & 99.28 \\
\rowcolor{gray!15}
+ Ours & 100.0 & 95.77 & 43.08 & 98.8 & 84.41 &&  100.0 & 58.72 & 69.1 & 75.94 &&  100.0 & 70.26 & 89.81 & 86.69 && 99.98  & 98.56 & 99.27 \\
\noalign{\vskip 0.2ex}\cdashline{2-20}\noalign{\vskip 0.2ex}

WISE    & 71.67 & 51.29 & 49.27 & 100.0 & 68.06  && - & - & - & -  && - & - & - & - && - & - & - \\
\rowcolor{gray!15}
+ Ours  & 82.67 & 62.34 & 47.54 & 100.0 & 73.14  && - & - & - & -  && - & - & - & - && - & - & - \\

\bottomrule[0.4ex]
\end{tabular}
}
\vspace{-0.2cm}
\end{table*}

\begin{table}[htb!]
\definecolor{verylightgray}{gray}{0.9}

\centering
\caption{
Editing performance on MQuAKE.
}
\label{tab:mquake}
\resizebox{0.99\linewidth}{!}{
\renewcommand{\tabcolsep}{3pt}
\begin{tabular}{
>{\bfseries}r 
cccc c 
cccc
}
\toprule[0.4ex]

& \multicolumn{4}{c}{\bf LLaMA 2-7b-chat} && \multicolumn{4}{c}{\bf LLaMA 3-8b-Instruct} \\

\midrule[0.2ex]
\bf & \bf Rel. &\bf Sng-Hop. & \bf Mlt-Hop. & \bf Avg. &&\bf Rel. &\bf Sng-Hop. &\bf Mlt-Hop. &\bf Avg. \\
\cmidrule{2-5} \cmidrule{7-10}

FT-M & 100.0 & 83.0 & 30.0 & 71.0 && 100.0 & 82.0 & 24.0 & 68.67\\
\rowcolor{gray!15}
+ Ours & 99.86 & 89.0 & 37.0 & 75.29 && 100.0 & 85.0 & 30.0 & 71.67 \\
\noalign{\vskip 0.2ex}\cdashline{2-10}\noalign{\vskip 0.2ex}

LoRA & 100.0 & 95.0 & 39.0 & 78.0 && 100.0 & 98.0 & 35.0 & 77.67 \\
\rowcolor{gray!15}
+ Ours & 99.75 & 93.0 & 48.0 & 80.25 && 100.0 & 95.0 & 40.0 & 78.33 \\

\bottomrule[0.4ex]
\end{tabular}
}
\vspace{-0.2cm}
\end{table}

\subsection{Continual Editing Performance}
\label{sec:exp:continual}

We next study the more challenging scenarios, where massive edits are conducted in a continual (sequential) way.
Experiments were again run on the four benchmarks. 

Due to page limit, We defer the complete results to App \ref{app:results}, and visualize the average of reliability, generality, portability, and locality in Fig \ref{fig:continual}. 
Specifically, we evaluate the performance after new $T$ pieces of knowledge length are edited sequentially. 
Different KE methods are represented in separate colors.
Solid boxes indicate normal training performance, and transparent boxes show results from training with {\NAME}. 
The unfilled area within the boxes quantifies the improvements form {\NAME}.

As in Single Editing scenarios,
{\NAME} again improved the performance of four KE methods, 
enabling them to surpass ROME and MEMIT by a large margin across diverse settings.
Furthermore, on three out of the four benchmarks (ZsRE, Wiki\sub{recent}, and Wiki\sub{counterfact}), 
the improvements were even more pronounced when the editing sequence is longer ($T=10, 100$).
Notably, according to our results on ZsRE, 
LoRA (and FT-M) achieved highly competitive continual editing performance when enhanced with {\NAME}, 
on par with specialized continual editing methods like MELO and WISE. 
In contrast, in previous works~\citep{zhang2024comprehensive,wang2024wise}, vanilla LoRA is generally considered unsuitable for continual editing unless significant adaptations are implemented.

To conclude, these results clear demonstrated the flexibility and power of {\NAME} in diverse KE scenarios.

\begin{figure*}[htb!]
    \captionsetup[subfigure]{font=footnotesize,labelformat=parens,labelfont=footnotesize}
    \def\subfigwidth{0.24\linewidth}
    \centering
    \begin{subfigure}[t]{\subfigwidth}
        \includegraphics[width=\linewidth]{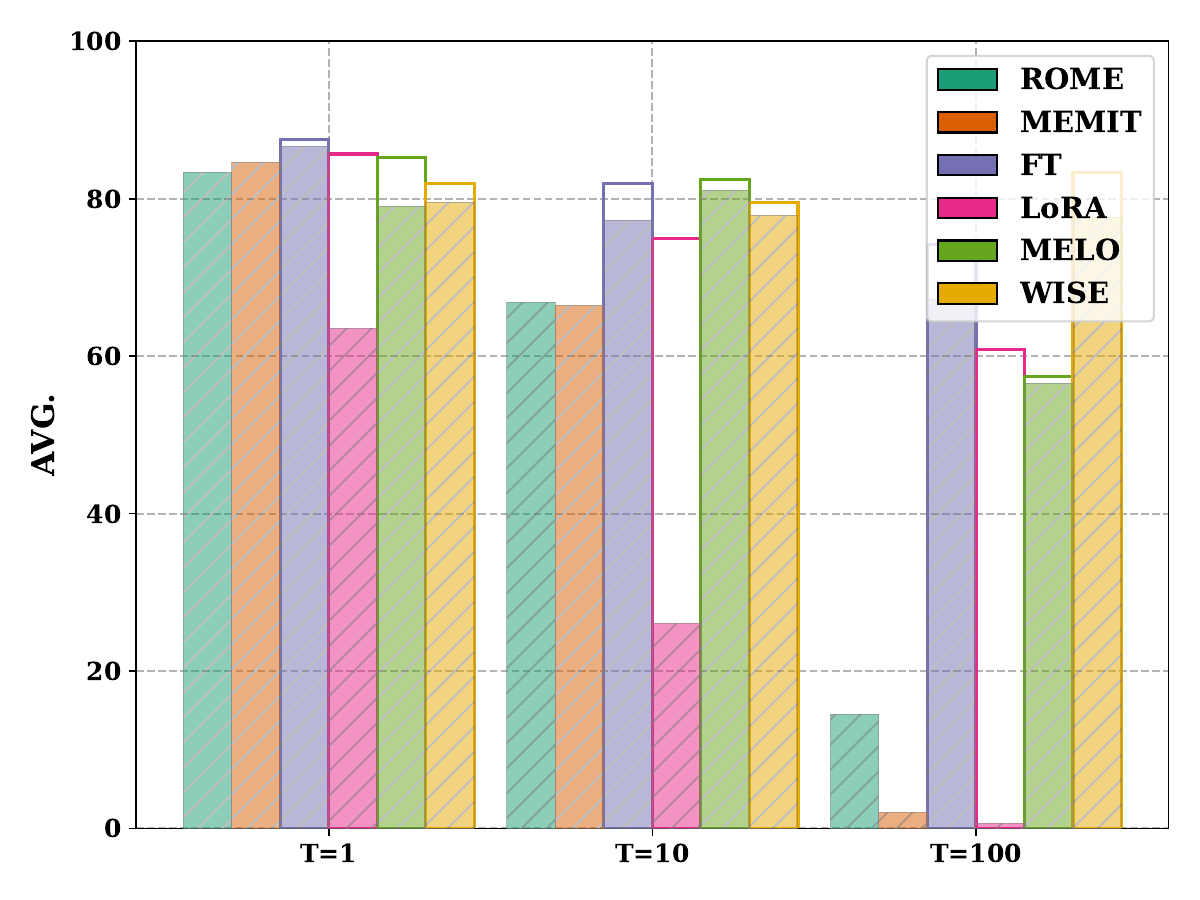}
        \caption{\footnotesize  \bf  LLaMA 2 (ZsRE)}
        \label{fig:subfig-a}
    \end{subfigure}%
    \hfill%
    \begin{subfigure}[t]{\subfigwidth}
        \includegraphics[width=\linewidth]{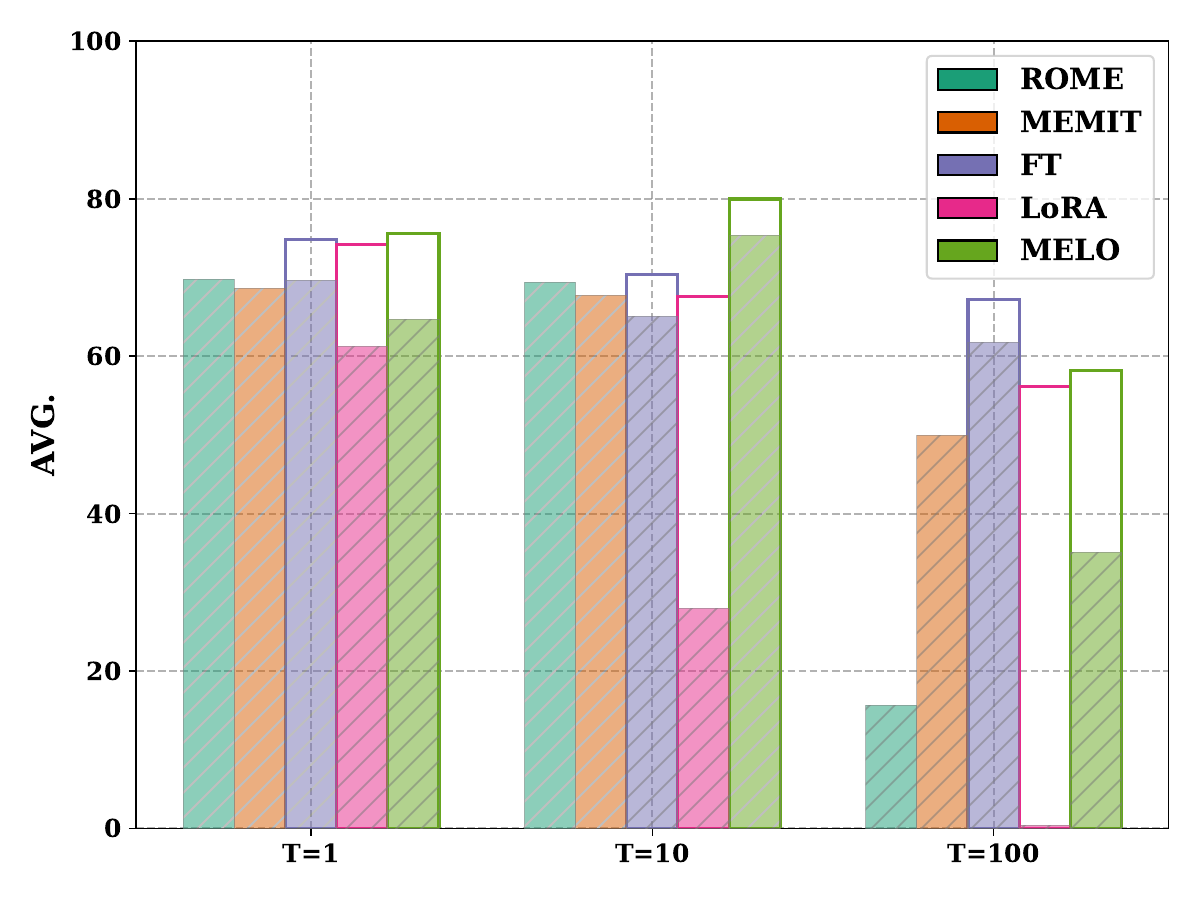}
        \caption{\footnotesize  \bf  LLaMA 2 (Wiki\sub{recent})}
        \label{fig:subfig-b}
    \end{subfigure}%
    \hfill%
    \begin{subfigure}[t]{\subfigwidth}
        \includegraphics[width=\linewidth]{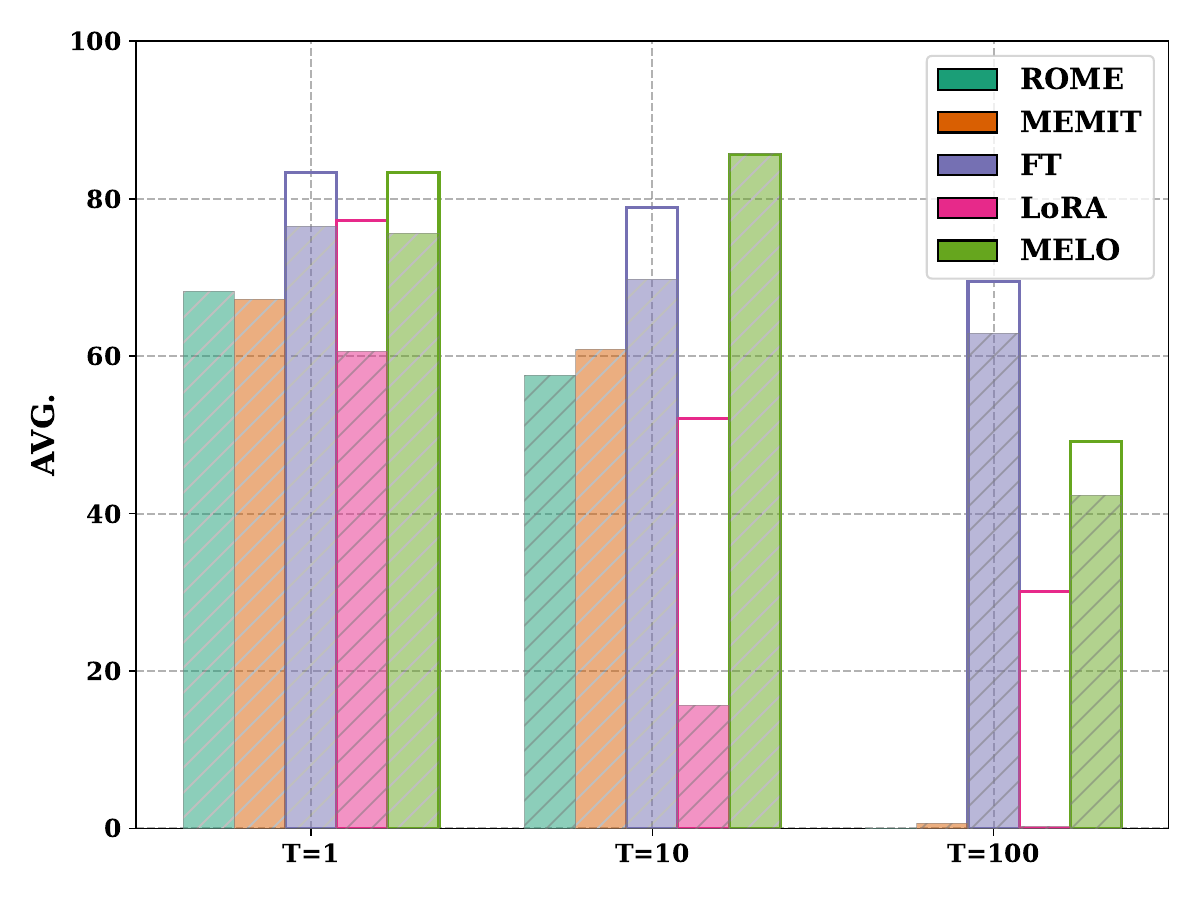}
        \caption{\footnotesize  \bf LLaMA 2 ( Wiki\sub{counterfact})}
        \label{fig:subfig-c}
    \end{subfigure}
    \hfill%
    \begin{subfigure}[t]{\subfigwidth}
        \includegraphics[width=\linewidth]{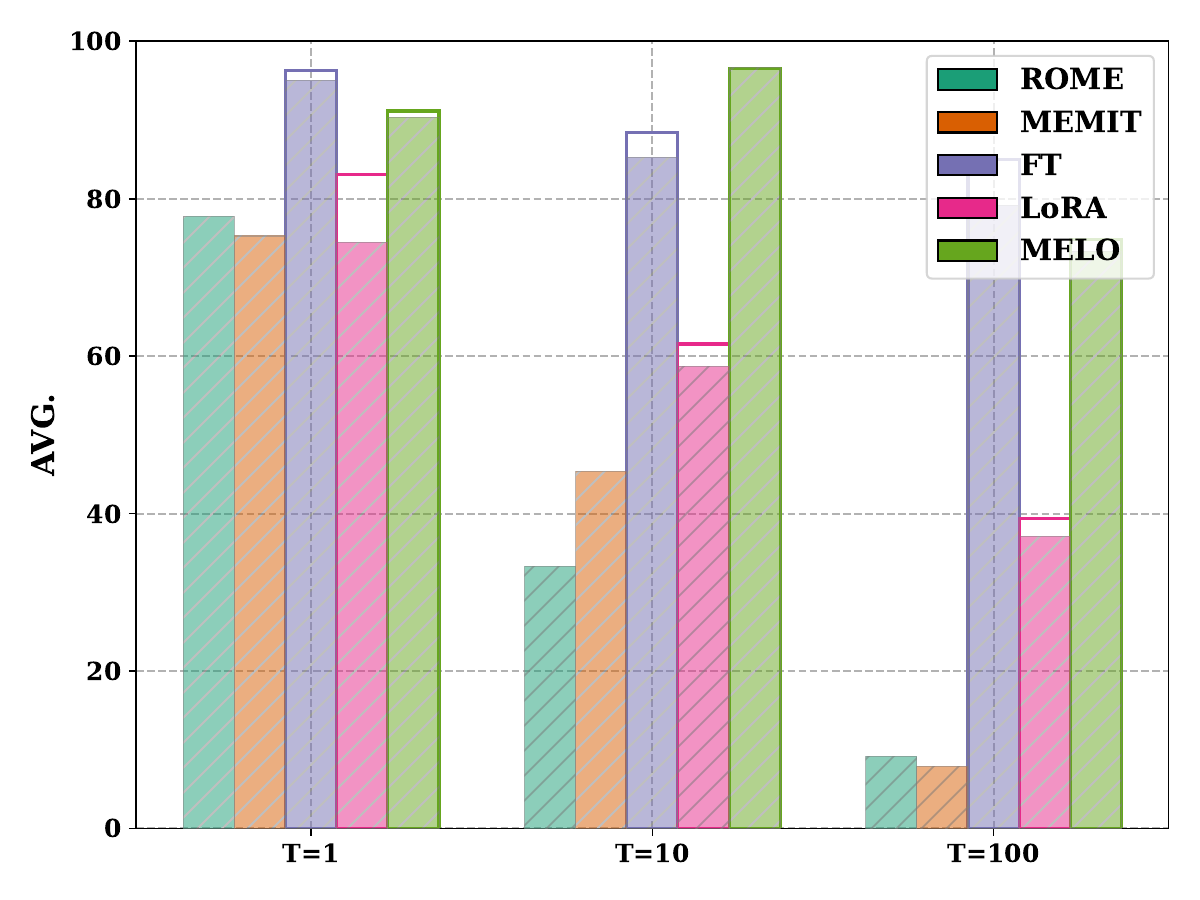}
        \caption{\footnotesize  \bf LLaMA 3 (WikiBio)}
        \label{fig:subfig-d}
    \end{subfigure}

    \vspace{1em}

    \begin{subfigure}[t]{\subfigwidth}
        \includegraphics[width=\linewidth]{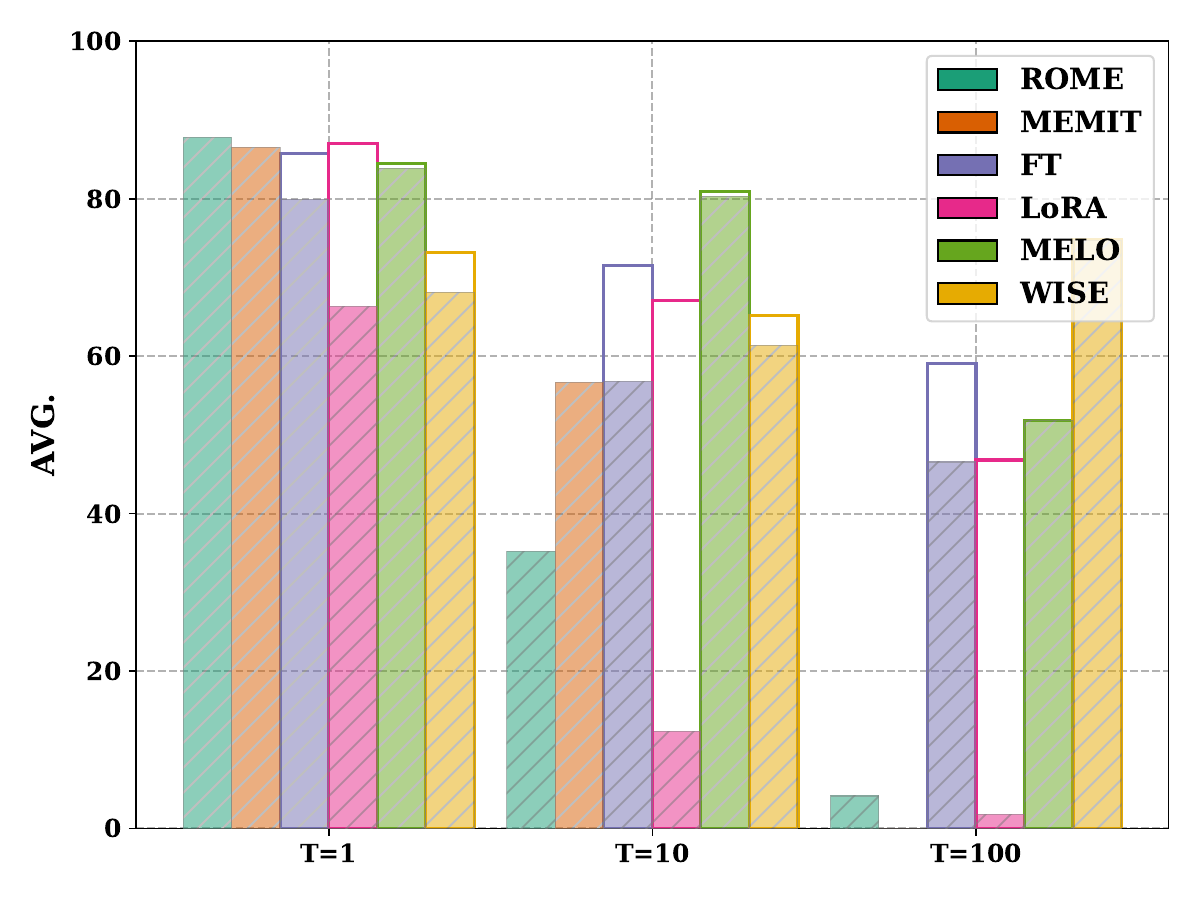}
        \caption{\footnotesize  \bf LLaMA 3 (ZsRE)}
        \label{fig:subfig-a2}
    \end{subfigure}%
    \hfill%
    \begin{subfigure}[t]{\subfigwidth}
        \includegraphics[width=\linewidth]{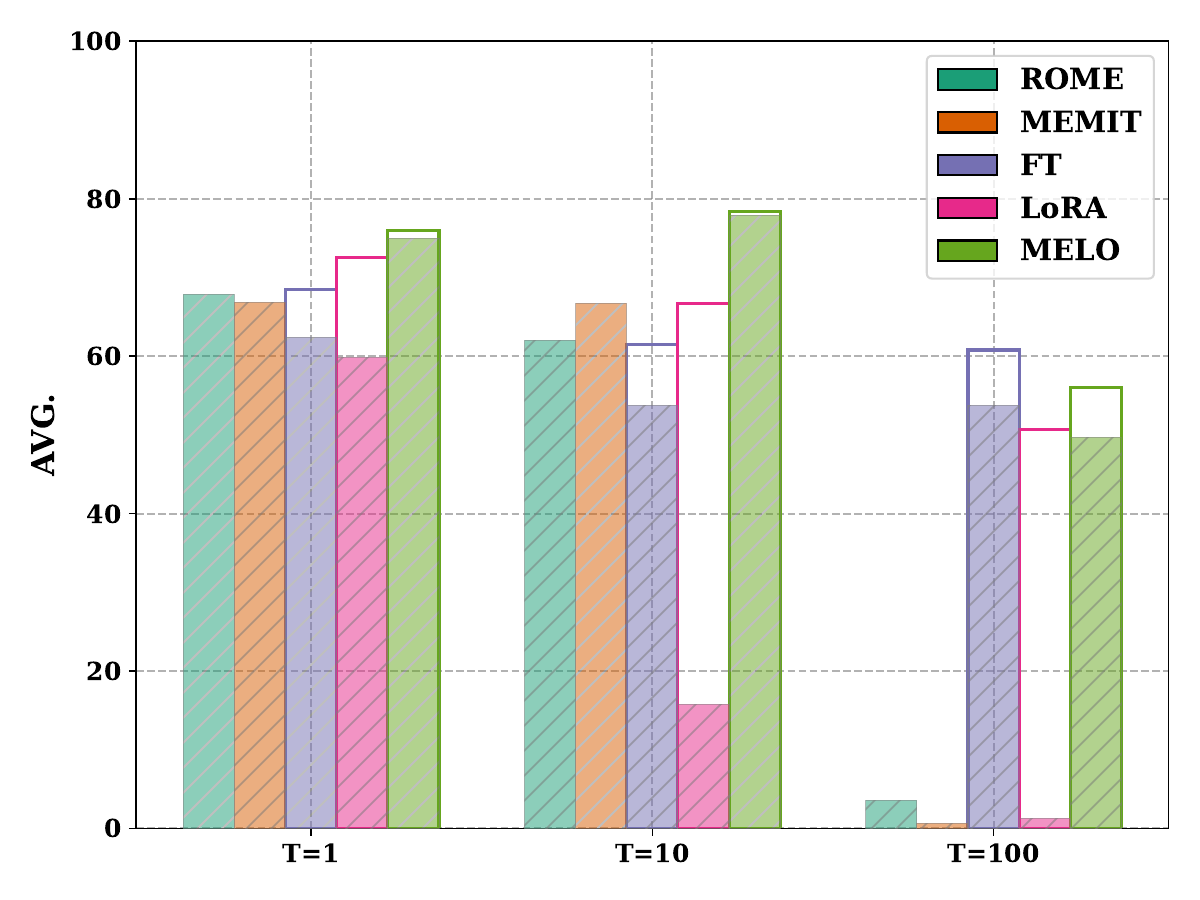}
        \caption{\footnotesize \bf LLaMA 3 (Wiki\sub{recent})}
        \label{fig:subfig-b2}
    \end{subfigure}%
    \hfill%
    \begin{subfigure}[t]{\subfigwidth}
        \includegraphics[width=\linewidth]{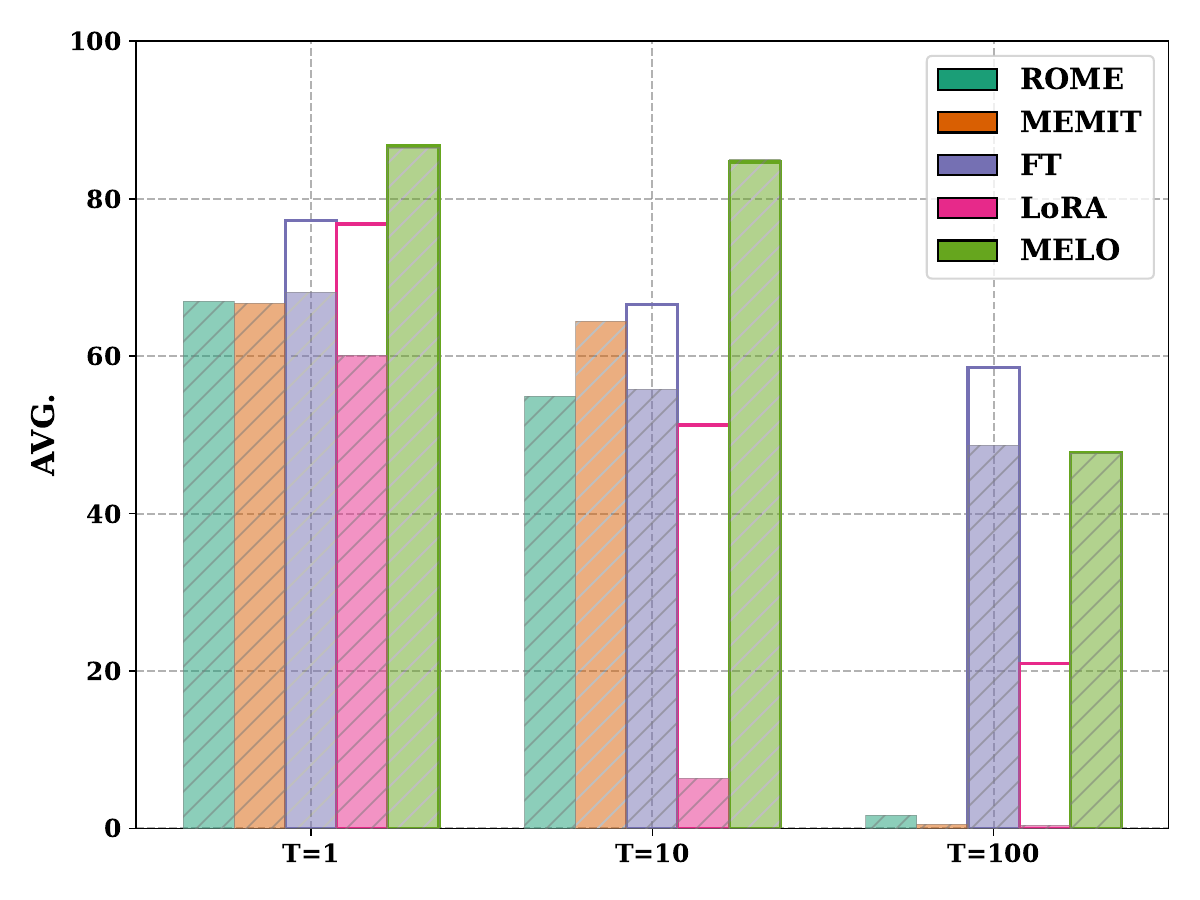}
        \caption{\footnotesize \bf  LLaMA 3 (Wiki\sub{counterfact})}
        \label{fig:subfig-c2}
    \end{subfigure}
    \hfill%
    \begin{subfigure}[t]{\subfigwidth}
        \includegraphics[width=\linewidth]{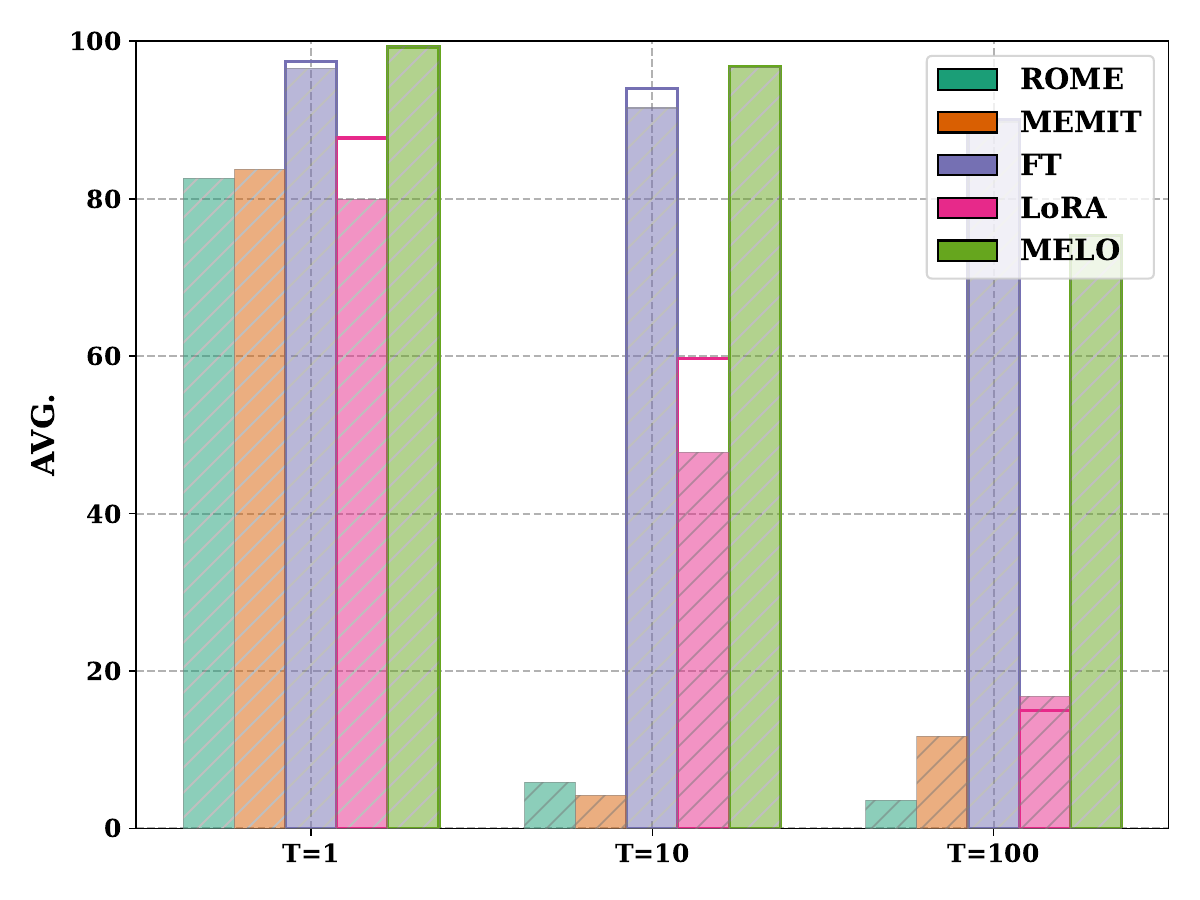}
        \caption{\footnotesize \bf LLaMA 3 (WikiBio)}
        \label{fig:subfig-d2}
    \end{subfigure}

    \caption{
    Continual Editing performance under different sequence length $T$. 
    Solid and transparent bars show performance with and without {\NAME}. 
    Unfilled area marks the performance gap. 
    ROME and MEMIT didn't use {\NAME}. 
    }
    \label{fig:continual}
\vspace{-0.2cm}
\end{figure*}

\subsection{Ablation Studies}
\label{sec:exp:ablation}

We end this section with an ablation study on {\NAME} to showcase how each component contributes to its final performance.
Results from editing LLaMA 2 on ZsRE with LoRA are presented in Tab \ref{tab:ablation}. 
According to the table, we note the following findings. 
First, pure token-level smoothing (``w/o clip'') increases both portability and locality, confirming that overfiting due to CE loss indeed hurts editing performance. 
Additionally, the way to smooth target distribution plays a critical role: 
using the unedited predicted distributions (``w/o dyn-$\pif$'') leads to significant drop, due to the conflicts raise from the outdated internal knowledge. 
Extra evidence can be seen from (``w/o chk-$\pif$''), where the mixture (Eq \eqref{eq:mix}) is always applied without checking if the probability of label $y_i$ is the largest.
Finally, the noise in predicted distribution $\pil$ also hinders the editing process: 
without filtering them out (``w/o flt-$\pif$''), both generality and portability decreased. 
All empirical results aligns well with our analysis in Sec \ref{sec:method}.

\begin{table}[htb!]
\definecolor{verylightgray}{gray}{0.9}

\centering
\caption{
Ablation studies on {\NAME}, 
``{w/o clip}'' sets $\epsilon = 0$,
``{w/o dyn-$\pif$}'' uses unedited prediction,
``{w/o chk-$\pif$}'' always adopt the mixture in Eq \eqref{eq:mix},
``{w/o flt-$\pif$}'' uses full $\pil$ without filtering out tail (noisy) regions.
}
\label{tab:ablation}
\resizebox{0.8\linewidth}{!}{
\begin{tabular}{
>{\bfseries}r 
cccc c 
cccc
}
\toprule[0.4ex]

& \multicolumn{4}{c}{\bf LLaMA 2-7b-chat} \\

\midrule[0.2ex]
&\bf  Rel. &\bf  Gen. &\bf  Por. &\bf  Loc. &\bf  Avg. \\
\cmidrule{2-6}

LoRA & 100.0 & 100.0 & 23.34 & 30.44 & 63.45 \\
\noalign{\vskip 0.2ex}\cdashline{2-6}\noalign{\vskip 0.2ex}
w/o clip & 100.0  &  99.75  &  26.6  &  41.08  &  66.86 \\
w/o dyn-$\pif$ & 99.18  &  97.67  &  36.32  &  51.57  &  71.18 \\
w/o chk-$\pif$ & 95.35  &  86.51  &  57.92  &  90.08  &  82.47 \\
w/o flt-$\pif$ & 100.0  &  83.93  &  58.2  &  90.36  &  83.12 \\
\noalign{\vskip 0.2ex}\cdashline{2-6}\noalign{\vskip 0.2ex}
\rowcolor{gray!15}
+ Ours & 100.0 & 94.31 & 61.16 & 87.2 & 85.67 \\


\bottomrule[0.4ex]
\end{tabular}
}
\vspace{-0.2cm}
\end{table}

\section{Related Works}
\label{sec:background}


Existing KE methods mainly fall into two classes. 

\textbf{Internal Storage} updates model parameters for the adaptation. 
Early studies fine-tuned a LLM directly but suffered from severe forgetting problem~\citep{wang2023knowledge}.
For more precise editing, \citet{zhu2020modifying} imposed a relaxed $\ell_2$ norm constraint on parameter updates, 
and \citet{dong2022calibrating,huang2023transformer} limited the updates to some specific feed-forward network (FFN) layer(s),
based on findings that knowledge is often stored therein~\citep{dai2021knowledge}. 
For further refinement, the \textit{locate-and-edit} paradigm~\citep{meng2022locating, meng2022mass} first identifies the layer storing the knowledge, then modifies its parameters in an analytic form or through least squared solution. 
On the other hand, PEFT methods such as LoRA- ~\citep{hu2021lora,wang2024roselora} and ReFT-family~\citep{wu2024reft,liu2025unlocking} also performed competitive to locating-based solutions~\citep{wu2023eva,zhang2024comprehensive}. 
In general, these works primarily focus on identifying a small set of parameters most relevant to the new knowledge.
However, these approaches are typically trained with instance-level loss, overlooking the token-level differences.
Therefore,
they remain susceptible to HTO in a similar manner and cannot be mitigated by advanced PEFT methods~\citep{chen2024large,miao2025coeff}. 
This work addresses HTO, an orthogonal aspect of the KE process, and complements existing studies in a model-agnostic manner.
Our {\NAME} is established without assumptions about which parameters are updated,
allowing it to be seamlessly integrated with existing methods without compromising their selective nature.
We validate our approach by showing that {\NAME} enhances the performance of two representative internal stage methods across diverse scenarios.

\textbf{External Storage} resorts to external memories without updating original parameters. 
This category includes meta-learning-based MEND~\citep{mitchell2021fast} and its multi-task varient InstructEdit~\citep{zhang2024instructedit}, 
in-context learning-based IKE~\citep{zheng2023can}, retrieval-based LTE~\citep{jiang2024learning}, augmentation-based StableKE~\citep{wei2024stable}, and proxy model-based SERAC~\citep{mitchell2022memory}.
Notwithstanding, these methods often require large-scale, \textit{hard-to-access} dataset for retrieval (e.g., IKE, LTE) as in retrieval-augmented generation (RAG, \citep{gao2023retrieval,wang2024blendfilter,xu2024simrag,yu2025rankrag,liu2025roserag,xu2025collab}),
or for training auxiliary models (e.g., MEND, InstructEdit, SERAC). 
As a result, their practicality is limited, and they struggle with Continual Editing that needs frequent updates~\citep{wang2024wise}.
Recently, specialized methods for Continual Editing have been proposed.
These approaches introduce adapters (GRACE~\citep{hartvigsen2024aging}), LoRAs (MELO~\citep{yu2024melo}), or weight copies (WISE~\citep{wang2024wise}) to memorize new knowledge, and learn gating mechanism to determine whether to use original or new knowledge.
The gating mechanisms are often learned through additional representation-distance-based codebooks~\citep{yu2024melo} or distinct margin losses~\citep{wang2024wise}, making external storage methods more complex. 
However, like internal storage methods, they optimize editing parameters using instance-level loss functions, ignoring token-level differences.
Consequently, they may also suffer from HTO and can benefit from our {\NAME} framework.
Experiments with two external storage methods demonstrate that our solution can be straightforwardly incorporated to more complex KE methods, highlighting 
the flexibility and versatility of {\NAME}.

\textbf{Overfitting and Mitigation}
Recent works have identified different forms of KE overfitting and mitigation solutions. 
\citet{zhang2024uncovering, qi2024context} use in-context prompted distribution as the target distribution to help improve generalizability~\citep{lampinen2025generalization}, and \citet{ma2024neighboring} focuses on neighboring knowledge perturbation due to the answer-level overfitting.
We focuses on understanding and developing genergalizable KE. Unlike previous efforts, {\NAME} uses the model's own prediction as a pretrained knowledge through an adaptive token-level distribution mixing, in light of the token-level HTO dynamic revealed in this paper. 
\section{Conclusion}
\label{sec:conclusion}


We study HTO, a token-dependent overfitting in KE, and show how it degrades an edited LLM’s reasoning ability. 
Inspired by an in-depth analysis on its cause, we propose {\NAME}, which adaptively assigns each token a unique smoothed distribution for better control to mitigate HTO. 
Our solution enjoys several theoretical advantages, and achieves superior performance on diverse tasks. 
Encouraged by promising results, we plan to work on the following direction. First, we plan to generalize our method on broader KE methods that involves more specialized losses or free-form editing data.
Second, we would like to explore unifying HTO and other types of KE overfiting, thereby providing a more universal solution. 
Finally, in this work we follow existing works~\citep{wang2024easyedit} and report averaged performance over different editing samples. 
We note that conducting multiple runs on each editing sample can further enhance the reliability. we advocate for and will follow this new practice in our next work.

\section*{Impact Statement}

This paper presents work whose goal is to advance the field
of Machine Learning. There are many potential societal
consequences of our work, none which we feel must be
specifically highlighted here.

\section*{Acknowledgment}
This work is supported in part by the US National Science Foundation under grant NSF IIS-2141037. Any opinions, findings, and conclusions or recommendations expressed in this material are those of the author(s) and do not necessarily reflect the views of the National Science Foundation.

\bibliography{ref}
\bibliographystyle{icml2025}

\newpage
\appendix
\onecolumn

\clearpage
\onecolumn

\section{Omitted Theorems and Proofs}
\label{app:proof}

In this section we present the full theoretical analysis.
All theorems are (re)stated in a formal manner for the convenience of reading. 

\subsection{Notations}
For completeness we highlight important notations that will be used. 
Throughout this paper, we use $\ce[\cdot \| \cdot]$ and $\kl[\cdot \| \cdot]$ to compute cross-entropy and Kullback–Leibler divergence between two distributions respectively. 
Specifically, given two discrete distributions $p, q$, $\ce [p \| q] 
= \sum_{i} - p_i \log q_i$, and $\kl [p \| q] = \sum_{i} - p_i \log \frac{q_i}{p_i}$.
In addition, $\ones(\cdot)$ is the indicator function such that $\ones(a) = 1$ if event $a$ holds and 0 otherwise. 
For $a \in \R^{p}$, define the $l_2$ norm as $\|a\|_2 = \sqrt{\sum_{i =1}^p a_i^2}$.
For $a, b \in \R^{p}$, define the inner product as $\langle a, b \rangle = a ^\top b$.
Define the cosine similarity $\cos(a, b) = \frac{\langle a, b \rangle}{\|a\|_2 \|b\|_2}$.

\subsection{{\NAME} is universal and efficient}

The first merit of {\NAME}, as stated in the main body, lies in its universality and efficiency.

\begin{proposition}
{\NAME} loss generalizes CE loss and reduces to the latter when $\epsilon = 0, \lambda = 1$.
\end{proposition}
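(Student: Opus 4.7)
The plan is to unwind the two definitional reductions in order and then recognize the resulting expression as the standard cross entropy. First, I would substitute $\lambda = 1$ into the mixture rule in Eq.~\eqref{eq:mix}: the candidate becomes $\pit^{\text{can}} = \de{y_i}{} + 0 \cdot \pif^{(i)} = \de{y_i}{}$, which trivially assigns its maximum to $y_i$, so the first branch is selected and $\pit^{(i)} = \de{y_i}{}$ for every $i$. Next, setting $\epsilon = 0$ makes the outer $\max(\cdot,\epsilon)$ operator idempotent on the non-negative KL term, so each summand of $\ell_{\NAME}$ collapses to $\kl[\de{y_i}{y} \,\|\, \pi_\theta(y \mid \cv_i)]$.

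The second step is to evaluate this KL in closed form. Using the standard convention $0 \log 0 = 0$, only the $y = y_i$ term contributes to the sum over the vocabulary, yielding
\[
\kl[\de{y_i}{y} \,\|\, \pi_\theta(y \mid \cv_i)]
= 1 \cdot \log \frac{1}{\pi_\theta(y_i \mid \cv_i)}
= -\log \pi_\theta(y_i \mid \cv_i).
\]
Summing over $i = 1, \dots, m$ recovers exactly $\ell_{\ce}(\theta)$ as written in Eq.~\eqref{eq:ce-loss}. Equivalently one can note that $\ce[p \| q] = H(p) + \kl[p \| q]$ and that the delta distribution $\de{y_i}{}$ has zero entropy, so cross entropy and KL coincide in this case.

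For the ``generalizes'' clause, I would simply observe that the OVERTONE family is parametrized by $(\lambda, \epsilon, n)$ and the filtered distribution $\pif$, and that the CE loss corresponds to the single boundary point $(\lambda,\epsilon) = (1, 0)$; other choices yield strictly different objectives (the $\epsilon$-clipping introduces a per-token early stop, and $\lambda < 1$ introduces a non-trivial smoothing target), which establishes proper generalization.

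The argument involves no real obstacle; the only subtlety worth flagging explicitly is the $0\log 0 = 0$ convention when evaluating the KL against a degenerate target, and verifying that the $\argmax$ tie-breaking in Eq.~\eqref{eq:mix} is automatically satisfied when $\lambda = 1$ so that the ``skip'' branch is never triggered. Both are immediate once stated.
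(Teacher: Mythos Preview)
Your proof is correct and follows essentially the same route as the paper: set $\epsilon=0$ to drop the clipping, reduce the target to $\de{y_i}{}$, and identify the resulting KL with the per-token negative log-likelihood. The paper routes the last step through the decomposition $\kl = \ce - H$ together with Lemma~\ref{lem:to-ce} (the mixture-CE identity), whereas you compute the KL against the delta directly; your version is slightly more elementary since Lemma~\ref{lem:to-ce} is unnecessary once $\lambda=1$ has already collapsed the mixture.
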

%
\begin{proposition}
Using Alg \ref{alg:ours}, the additional computation complexity induced by {\NAME} is $\mathcal O(|\mathcal V|)$ when fitting a token, where $|\mathcal V|$ is the vocabulary size. 
\end{proposition}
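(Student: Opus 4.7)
The plan is to establish the bound by a straightforward line-by-line accounting of Algorithm \ref{alg:ours}, identifying which operations are present in the baseline CE training pipeline and which are genuinely additional, then showing each additional operation runs in $\mathcal O(|\mathcal V|)$ time per token. Since the inner loop over $i$ produces per-token costs that aggregate linearly, it suffices to bound the extra work for a single token $y_i$ and invoke the statement ``per token'' verbatim.

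First, I would isolate the baseline cost. Standard CE training already requires the forward pass that yields the logit vector $\sv^{(i)} \in \R^{|\mathcal V|}$ (the expensive LLM forward) and a softmax / log-softmax on $\sv^{(i)}$ to obtain $\pil^{(i)}$ or at least $\log \pil^{(i)}(y_i)$; these steps appear in both paradigms and therefore contribute nothing to the \emph{additional} complexity. Next, I would enumerate the work that is unique to {\NAME}: (i) computing $s^{(i)}_{\max}$ via a single pass over $\sv^{(i)}$, which is $\mathcal O(|\mathcal V|)$; (ii) computing $\sigma = \text{std}(\sv^{(i)})$, which is one mean pass and one squared-deviation pass, both $\mathcal O(|\mathcal V|)$; (iii) the elementwise thresholding producing $\tilde \sv^{(i)}$, $\mathcal O(|\mathcal V|)$; (iv) the softmax on $\tilde \sv^{(i)}$ to obtain $\pif^{(i)}$, $\mathcal O(|\mathcal V|)$; (v) the check $y_i = \argmax \pit^\text{can}$ together with the convex combination $\lambda \de{y_i}{} + (1-\lambda)\pif^{(i)}$ defining $\pit^{(i)}$ via Eq.~\eqref{eq:mix}, each $\mathcal O(|\mathcal V|)$; (vi) the KL divergence $\sum_{y \in \mathcal V} \pit^{(i)}(y) \log \tfrac{\pit^{(i)}(y)}{\pil^{(i)}(y)}$, which is a single summation of $|\mathcal V|$ terms, hence $\mathcal O(|\mathcal V|)$; and (vii) the scalar $\max(\cdot, \epsilon)$ clipping, which is $\mathcal O(1)$.

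Summing these contributions gives a per-token additional cost that is a constant number of $\mathcal O(|\mathcal V|)$ passes, which is itself $\mathcal O(|\mathcal V|)$ per token, establishing the claim. The main (mild) obstacle is simply bookkeeping: one must be careful to treat the LLM forward pass and the final-layer projection to $\R^{|\mathcal V|}$ as baseline cost (they are needed regardless) so that the ``additional'' qualifier is honest, and one must verify that the sparsification induced by $\tilde s^{(i)}_k = -\infty$ does not require any sorting or other super-linear primitive---a threshold comparison against $s^{(i)}_{\max} - n\sigma$ is elementwise and incurs only linear time. No macro or environment beyond those already defined in the excerpt is needed.
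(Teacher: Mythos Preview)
Your proposal is correct and follows essentially the same approach as the paper: a line-by-line accounting of Algorithm~\ref{alg:ours} showing that each operation unique to {\NAME} is $\mathcal O(|\mathcal V|)$. The paper's own proof is in fact briefer than yours---it only cites lines 8--10 (max, std, thresholding, softmax on $\tilde\sv$) and asserts linearity---whereas you additionally account for the mixture construction, the $\argmax$ check, the KL summation, and the clipping, which makes your argument more complete without changing the strategy.
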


Our proofs rely on the following lemma, which plays a key role in connecting {\NAME} to a regularized loss. 

\begin{lemma}
\label{lem:to-ce}
Given $y_i$, for an arbitrary token $y$ and context $\cv$, and $\pit = \lambda \de{y_i}{y} + \pif (y)$, we have 
\begin{align}
\label{eq:ce-reg}
\ce [\pit (y \mid \cv) \| \pil(y \mid \cv)]
&= 
\lambda \ce [\de{y_i}{y} \| \pil (y \mid \cv)] + (1 - \lambda) \ce [\pif(y \mid \cv) \mid (y \mid \cv)]. 
\end{align}
\end{lemma}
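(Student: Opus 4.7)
The plan is to exploit the fact that cross-entropy is a linear functional in its first argument, so the claim should reduce to a one-line rearrangement. Before executing this, I would flag a minor typographical issue in the statement: for $\pit$ to be a bona fide probability distribution and to match Eq.~\eqref{eq:mix} in the main text, the mixture should read $\pit = \lambda \de{y_i}{y} + (1-\lambda)\pif(y)$, i.e.\ there is a missing $(1-\lambda)$ coefficient on $\pif$ as printed. I will therefore establish the identity for this corrected form, which is clearly what is intended since the right-hand side of Eq.~\eqref{eq:ce-reg} already carries the $(1-\lambda)$ weight.

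First I would unfold the definition of cross-entropy, writing
\[
\ce[\pit(y \mid \cv) \,\|\, \pil(y \mid \cv)] = -\sum_{y \in \mathcal V} \pit(y \mid \cv) \log \pil(y \mid \cv).
\]
Next, I would substitute the convex combination $\pit(y \mid \cv) = \lambda\,\de{y_i}{y} + (1-\lambda)\,\pif(y \mid \cv)$ into this expression. Finally, by linearity of the finite sum over $\mathcal V$, I would split the result into two pieces of respective weights $\lambda$ and $1-\lambda$, and identify each piece with the corresponding cross-entropy against $\pil$ directly from the definition. Collapsing these two summations recovers the right-hand side of Eq.~\eqref{eq:ce-reg}.

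There is essentially no obstacle to this argument: nothing is used beyond linearity of the inner sum, and in particular the calculation does not require $\lambda \in [0,1]$, nor that $\pif$ be normalized, nor that $\de{y_i}{\cdot}$ and $\pif$ be mutually singular. The only item deserving care in the write-up is flagging (and fixing) the missing $(1-\lambda)$ factor in the stated form of $\pit$ so that the lemma is consistent with the mixture used throughout Section~\ref{sec:method}; once that is done, the proof is a two-step expand-and-regroup.
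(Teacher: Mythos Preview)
Your proposal is correct and matches the paper's proof essentially line for line: expand the cross-entropy sum, substitute the mixture $\lambda\,\de{y_i}{y}+(1-\lambda)\pif$, split by linearity, and re-identify the two cross-entropies. Your observation about the missing $(1-\lambda)$ in the printed statement is also accurate; the paper's own derivation silently uses the corrected form.
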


\begin{proof}
The proof is based on the definition of cross entropy \citep{cover1999elements}. 
\begin{align}
\notag 
& \ce [\pit (y \mid \cv) \| \pil(y \mid \cv)] \\
&= 
\notag 
- \sum_{i=1}^{|\mathcal V|} \pit(y \mid \cv) \log \pil (y \mid \cv) \\
&= 
\notag 
- \sum_{i=1}^{|\mathcal V|}  \left( \lambda \de{y_i}{y} + (1 - \lambda) \pif (y \mid \cv) \right)  \log \pil (y \mid \cv) \\ 
&= 
\notag 
- \left( \lambda \sum_{i=1}^{|\mathcal V|} \de{y_i}{y} \log \pil (y \mid \cv) + (1 - \lambda) \sum_{i=1}^{|\mathcal V|} \pif (y \mid \cv)  \log \pil (y \mid \cv)  \right) \\ 
&= 
\label{eq:to-ce} 
\lambda \ce [\de{y_i}{y} \| \pil (y \mid \cv)] + (1 - \lambda) \ce [\pif(y \mid \cv) \| \pil (y \mid \cv)].
\end{align}
This completes our proof. 
\end{proof}
We are ready to prove Prop \ref{prop:gen-ce}. 

\begin{proof}
The proof is based on the fact that {\NAME} objective minimizes a \textit{forward} KL-divergence, which is equivalent to minimizing cross-entropy \citep{cover1999elements,bishop2006pattern}. 
Namely,
\begin{align}
\notag
\ell_{\NAME}(\theta)
&\triangleq 
\notag
\sum_{j=1}^m \max(\kl [\pit (y \mid \cv_i) \| \pil(y \mid \cv_i)], \epsilon) \\
&=
\notag
\sum_{j=1}^m \kl [\pit (y \mid \cv_i) \| \pil(y \mid \cv_i)] \ones\left (\kl [\pit (y \mid \cv_i) \| \pil(y \mid \cv_i)] > \epsilon \right) \\
&\overset{(a)}{=} 
\notag
\sum_{j=1}^m \left(\ce [\pit^{(j)} \| \pil^{(j)}] + H (\pit^{(j)}) \right) \ones\left (\kl [\pit^{(j)}] \| \pil^{(j)}] > \epsilon \right) \\ 
&= 
\label{eq: overtone final}
\sum_{j=1}^m \ce [\pit^{(j)} \| \pil^{(j)}] \ones\left (\kl [\pit^{(j)}] \| \pil^{(j)}] > \epsilon \right) + C. 
\end{align}
Starting from step $(a)$, we denote $\pit^{(j)} = \pit (y \mid \cv_i)$ and $\pil^{(j)}$ similarly for brevity, $C$ denotes terms that are constant to learnable parameter $\theta$. 
Therefore, setting $\epsilon = 0$ gets us rid of the indicator term. Further plug in Eq \eqref{eq:to-ce}, we see that setting $\lambda = 1$ reduces to the standard CE loss. This completes the proof. 

\end{proof}


In terms of Prop \ref{prop:efficiency}, the computation overhead can be seen by checking Algo \ref{alg:ours}. 

\begin{proof} 
The additional computation complexity of {\NAME} is due to line 8-10 in Algo \ref{alg:ours}. These steps involve finding the maximal logits, pruning small logits, and compute the probability with softmax function from the pruned logits. All of them have linear time complexity $|\mathcal V|$.
This completes our proof. 


\end{proof}

\subsection{{\NAME} provides better updates}
\label{app:proof:better}

We present the formal analysis of how {\NAME} provides better parameters update as outlined in Thm \ref{thm:better}. 
Our analysis is established in the same spirit of influence function \citep{koh2017understanding}.

We first restate Thm \ref{thm:better}, which outlines the two aspects where {\NAME} is better than training standard CE loss. 

\begin{theorem}[Informal]
Under regularity conditions, compared to optimizing the vanilla CE loss, {\NAME} provides a more favorable update direction for the parameters and has less influence on unrelated knowledge.
\end{theorem}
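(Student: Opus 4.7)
The plan is to leverage Lemma \ref{lem:to-ce} to decompose the OVERTONE gradient into an interpretable sum, and then analyze both claims with influence-function machinery in the spirit of Koh \& Liang. Setting $\epsilon = 0$ and treating $\pit^{(i)}$ as a stop-gradient target (as is explicit in Algorithm \ref{alg:ours}), the token-wise gradient decomposes as
\begin{align*}
\nabla_\theta \ell_{\NAME}^{(i)}(\theta) \;=\; \lambda\, \nabla_\theta \ce[\de{y_i}{y}\,\|\,\pil(y\mid \cv_i)] \;+\; (1-\lambda)\, \nabla_\theta \ce[\pif(y\mid \cv_i)\,\|\,\pil(y\mid \cv_i)],
\end{align*}
so that, writing $\gv_{\ce}^{(i)}$ and $\gv_{\pif}^{(i)}$ for the two summands, $\nabla_\theta \ell_{\NAME}^{(i)} = \lambda \gv_{\ce}^{(i)} + (1-\lambda) \gv_{\pif}^{(i)}$. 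Since $\pif$ is a filtered version of the model's current belief, $\gv_{\pif}^{(i)}$ should be read as a self-distillation term pulling $\pil$ back toward the pre-trained predictive distribution on the edit context.

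For the \emph{more favorable direction} claim, I would introduce an idealized oracle update $\gv^\star$ that minimizes the proximally regularized editing loss $\ell_{\ce}(\theta) + \tfrac{1}{2}(\theta - \theta_0)^\top F(\theta_0)(\theta - \theta_0)$, where $F(\theta_0)$ is the Fisher information of the pre-trained model. A second-order expansion of $\gv_{\pif}^{(i)}$ around $\theta_0$ reveals that $\gv_{\pif}^{(i)} \approx F(\theta_0)(\theta - \theta_0)$ up to a truncation residual controlled by the top-$n\sigma$ filter, so $\nabla_\theta \ell_{\NAME}^{(i)}$ matches $\gv^\star$ to first order while $\gv_{\ce}^{(i)}$ alone does not. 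Under the regularity conditions (positive definiteness of $F(\theta_0)$ and Lipschitz smoothness of the log-likelihood) an explicit expansion of the cosine similarity then yields $\cos(\nabla_\theta \ell_{\NAME}^{(i)}, \gv^\star) \ge \cos(\gv_{\ce}^{(i)}, \gv^\star)$.

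For the \emph{less influence on unrelated knowledge} claim, I would invoke the standard influence identity: to leading order, the change in loss at an unrelated test point $z_{\text{test}}$ after one step on the editing instance $z$ is $-\eta\, \nabla \ell(z_{\text{test}};\theta_0)^\top H(\theta_0)^{-1} \nabla \ell_{\text{train}}(\theta_0)$. Because $z_{\text{test}}$ is already well-predicted at $\theta_0$, on the relevant contexts $\pif$ agrees with $\pil$ on its retained support, so $\gv_{\pif}^{(i)}$ has vanishing inner product with $H(\theta_0)^{-1}\nabla \ell(z_{\text{test}};\theta_0)$ up to truncation noise. Linearity of the influence functional combined with the decomposition above then gives
\begin{align*}
\bigl|I_{\NAME}(z, z_{\text{test}})\bigr| \;\le\; \lambda \bigl|I_{\ce}(z, z_{\text{test}})\bigr| + (1-\lambda)\, o(1) \;<\; \bigl|I_{\ce}(z, z_{\text{test}})\bigr|,
\end{align*}
the strict inequality holding whenever $I_{\ce}(z, z_{\text{test}}) \ne 0$.

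The main obstacle I expect is making the step ``$\gv_{\pif}^{(i)}$ behaves as a Fisher-type penalty'' genuinely rigorous. Because $\pif$ is produced by a hard $n\sigma$ truncation rather than by a soft tempering, the mapping $\theta \mapsto \pif(\cdot\mid \cv_i)$ is only piecewise smooth, and one has to bound the truncation residual uniformly in $\theta$ on a neighborhood of $\theta_0$; I would handle this by Taylor-expanding only on the retained head of the softmax and controlling the tail mass via the $n\sigma$ logit gap. A secondary technical point is justifying the stop-gradient treatment of $\pit^{(i)}$ so that the gradient decomposition used throughout is exact rather than approximate; this should follow directly from the construction in Algorithm \ref{alg:ours}, where $\pit^{(i)}$ is formed and then held fixed before the backward pass.
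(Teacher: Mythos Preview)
Your gradient decomposition $\nabla_\theta \ell_{\NAME}^{(i)} = \lambda\, \gv_{\ce}^{(i)} + (1-\lambda)\,\gv_{\pif}^{(i)}$ matches the paper, but both halves of the subsequent argument diverge from the paper's route and contain genuine gaps.

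\textbf{Favorable direction.} The paper does not use a Fisher-regularized oracle. Its ideal direction $G$ is the gradient, evaluated at $\hat\theta^\old$, of the \emph{retrain-from-scratch} objective that removes $z^\old$ and inserts $z^\new$; under stationarity (Assumption~\ref{asmp:stationary}) this collapses to $G\propto a+b$ with $a=\nabla_\theta\ell_\ce(z^\new;\hat\theta^\old)$ and $b=-\nabla_\theta\ell_\ce(z^\old;\hat\theta^\old)$. Writing $c=\sum_i\gv_{\pif}^{(i)}$, the paper \emph{assumes} that $\cos(b,c)$ is large (Assumption~\ref{asmp:gradnorm}), i.e.\ that the $\pif$-term points roughly along the negative old-knowledge gradient, and then exhibits an explicit $\lambda=\|c\|_2/(\|b\|_2+\|c\|_2)$ for which $\cos(\lambda a+(1-\lambda)c,\,a+b)>\cos(a,\,a+b)$ via a triangle-inequality computation. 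Your approximation $\gv_{\pif}^{(i)}\approx F(\theta_0)(\theta-\theta_0)$ is vacuous at the point where the comparison is actually made: the whole argument lives at $\theta=\hat\theta^\old$, where your expansion yields zero, yet $\gv_{\pif}^{(i)}$ is nonzero there precisely because of the $n\sigma$ filter. The paper treats that nonzero filtered-score contribution as the \emph{signal} (it is what aligns with $b$), not as a truncation residual to be bounded away; if you do bound it away you are left with the direction of $\lambda a$, which is identical to that of $a$ and yields no improvement in cosine similarity.

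\textbf{Less influence on unrelated knowledge.} The paper's mechanism is a norm comparison under isotropy, not an orthogonality statement. It first-order expands $\pi_\theta(Z^\un)$, assumes the unrelated-gradient direction $W/\|W\|_2$ is uniform on the sphere and independent of $\|W\|_2$ (Assumption~\ref{asm: un-dist}) so that $\E\bigl[|v^\top W|\bigr]\propto\|v\|_2$, and then uses $\|c\|_2<\|a\|_2$ (Assumption~\ref{asm: norm ratio of a and c}) plus the triangle inequality to conclude $\E\bigl[|(\lambda a+(1-\lambda)c)^\top W|\bigr]<\E\bigl[|a^\top W|\bigr]$ for every $\lambda\in[0,1)$. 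Your claim that $\gv_{\pif}^{(i)}$ has vanishing inner product with $H^{-1}\nabla\ell(z_{\text{test}})$ because ``$\pif$ agrees with $\pil$ on its retained support'' does not follow: agreement on the retained head (up to renormalization) says nothing about orthogonality of the resulting gradient to an arbitrary influence direction, and without that step your final strict inequality is unsupported.
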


The formal statement is as follows. 

\begin{theorem}[Formal]
\label{thm: formal main thm}
    Let $G$ be the ideal gradient of retraining the LLM using $\hat{\theta}^\old$ as the initial value, as defined in Eq~\eqref{eq: G definition}.
    Considering the simplified case where $\epsilon = 0$ in Eq \eqref{eq: overtone final}, under Assumptions~\ref{asmp:stationary} and~\ref{asmp:gradnorm}, 
    there exists some $\lambda \in [0,1]$ such that
    \begin{equation*}
        \cos \qty( \nabla_\theta \ell_\ce (z^\new; \hat{\theta}^\old), G ) < \cos \qty( \nabla_\theta \ell_\NAME (z^\new; \hat{\theta}^\old), G ).
    \end{equation*}
    In other words, using the {\NAME} loss provides a better approximation of the direction of $G$ compared to the standard CE loss, meaning the gradient direction is closer to $G$.

    Now, denote the new estimator obtained through either $\ell_\ce$ or $\ell_\NAME$ by $\hat{\theta}^\new_\ce$ or $\hat{\theta}^\new_\NAME$, respectively. Let $ Z^\un = (X^\un, Y^\un) $ be a random vector representing unrelated data. Under Assumptions~\ref{asm: un-dist} and~\ref{asm: norm ratio of a and c}, we have
    \begin{equation*}
        \E_{Z^\un} \qty[\qty| \pi_{\hat{\theta}^\new_\NAME} (Z^\un) 
        - \pi_{\hat{\theta}^\old} (Z^\un) | ] < \E_{Z^\un} \qty[\qty| \pi_{\hat{\theta}^\new_\ce} (Z^\un) 
        - \pi_{\hat{\theta}^\old} (Z^\un)| ].
        \end{equation*}
    This result indicates that updates based on the {\NAME} loss induce smaller deviations in the predicted distribution for unrelated data compared to updates based on the standard CE loss, thereby better preserving \textit{locality}.
\end{theorem}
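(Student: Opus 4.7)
The approach splits naturally along the two conclusions of the theorem, with a common starting point provided by Lemma~\ref{lem:to-ce}. When $\epsilon = 0$ this lemma lets me write, up to a constant that is killed by $\nabla_\theta$,
\begin{equation*}
\nabla_\theta \ell_{\NAME}(z^\new; \hat\theta^\old) \;=\; \lambda\, \nabla_\theta \ell_{\ce}(z^\new; \hat\theta^\old) \;+\; (1-\lambda)\, g_{\pif}(\hat\theta^\old),
\end{equation*}
where $g_{\pif}$ abbreviates the gradient of the self-distillation term $\sum_{j} \ce[\pif^{(j)} \| \pil^{(j)}]$. Intuitively, $\nabla_\theta \ell_{\ce}$ aggressively pushes toward the new target on every token, while $g_{\pif}$ pulls toward predictions the model already makes and therefore acts as a built-in preservation regularizer. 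The whole theorem reduces to analyzing this convex combination at $\hat\theta^\old$.

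For the first claim (alignment with $G$), my plan is to view $\cos(\cdot, G)$ as a function of $\lambda$ and invoke calculus. At $\lambda = 1$ OVERTONE reduces to CE by Proposition~\ref{prop:gen-ce}, so it suffices to exhibit a nonempty subinterval of $[0,1)$ on which the derivative $\frac{d}{d\lambda}\cos\!\bigl(\lambda\nabla_\theta\ell_{\ce} + (1-\lambda)g_{\pif},\,G\bigr)$ is strictly negative; any $\lambda$ in that subinterval then witnesses the claim. A direct differentiation reduces this derivative condition to the single scalar inequality
\begin{equation*}
\langle g_{\pif} - \nabla_\theta \ell_{\ce},\; G\rangle \,\|\nabla_\theta \ell_{\ce}\|^2 \;>\; \langle g_{\pif} - \nabla_\theta \ell_{\ce},\, \nabla_\theta \ell_{\ce}\rangle \,\langle \nabla_\theta \ell_{\ce},\, G\rangle,
\end{equation*}
which I would verify using Assumption~\ref{asmp:stationary} (the stationarity of the pre-edit model makes $g_{\pif}$ vanish on initially-fitted tokens and line up with the preservation component of $G$ on pivotal ones) together with Assumption~\ref{asmp:gradnorm} to pin down relative magnitudes.

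For the locality claim, I would run a standard first-order influence-function expansion. Writing $\hat\theta^\new_\ell = \hat\theta^\old - \alpha\, \nabla_\theta \ell(z^\new;\hat\theta^\old)$ for $\ell \in \{\ell_{\ce}, \ell_{\NAME}\}$ and Taylor-expanding $\pi_\theta(Z^\un)$ about $\hat\theta^\old$ gives
\begin{equation*}
\pi_{\hat\theta^\new_\ell}(Z^\un) - \pi_{\hat\theta^\old}(Z^\un) \;=\; -\alpha\, \bigl\langle \nabla_\theta \pi_{\hat\theta^\old}(Z^\un),\; \nabla_\theta \ell(z^\new;\hat\theta^\old)\bigr\rangle \;+\; O(\alpha^2).
\end{equation*}
Taking absolute values and expectations, the comparison reduces to bounding $\E_{Z^\un}[|\langle \nabla_\theta \pi_{\hat\theta^\old}(Z^\un),\, g\rangle|]$ for the two candidate gradients. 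Assumption~\ref{asm: un-dist} (on the unrelated-data distribution) combined with the fact that $g_{\pif}$ is computed against $\pi_{\hat\theta^\old}$ itself implies that $g_{\pif}$ is approximately orthogonal to the covariance of $\nabla_\theta \pi_{\hat\theta^\old}(Z^\un)$, while Assumption~\ref{asm: norm ratio of a and c} quantitatively compares the projected norms of $\nabla_\theta \ell_{\ce}$ and $g_{\pif}$. Plugging the convex combination of the first paragraph through these bounds and invoking the triangle inequality yields the strict inequality.

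The main obstacle I anticipate is coordinating a single $\lambda$ that simultaneously witnesses both conclusions, since $\hat\theta^\new_\NAME$ is the same object in each part. Each part defines a subset of $[0,1]$ on which its inequality holds; continuity makes each an open interval containing values arbitrarily close to $1$, and I would argue the two intervals intersect because \emph{both} improvements are driven by the same mechanism, namely the preservation direction supplied by $g_{\pif}$. A secondary technical nuisance is controlling the $O(\alpha^2)$ Taylor remainder uniformly in $Z^\un$; this should follow from bounded second derivatives of $\pi_\theta$ in $\theta$, which I would fold into the statement's regularity conditions.
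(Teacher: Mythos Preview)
Your skeleton---decompose $\nabla_\theta\ell_{\NAME}=\lambda a+(1-\lambda)c$ (in the paper's notation $a=\nabla_\theta\ell_{\ce}(z^\new;\hat\theta^\old)$, $c=g_{\pif}$) and then Taylor-expand $\pi_\theta(Z^\un)$ for the second part---is exactly how the paper proceeds. The gaps are in how you plan to invoke the four assumptions; several of your descriptions do not match what the assumptions actually say, and at least one would not lead to a valid argument.

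\textbf{Part 1.} Assumption~\ref{asmp:stationary} is only used to reduce $G$ to $\tfrac{1}{N}(a+b)$ with $b=-\nabla_\theta\ell_{\ce}(z^\old;\hat\theta^\old)$; it says nothing about $g_{\pif}$ ``vanishing on initially-fitted tokens.'' Assumption~\ref{asmp:gradnorm} is a quantitative lower bound on $\cos(b,c)$, i.e.\ it asserts that $c$ points nearly in the direction of $b$. The paper does \emph{not} differentiate in $\lambda$; it fixes the explicit value $\lambda=\|c\|_2/(\|b\|_2+\|c\|_2)$, so that $\lambda a+(1-\lambda)c$ is proportional to $a+b+\|b\|_2\delta_{bc}$ with $\delta_{bc}=c/\|c\|_2-b/\|b\|_2$, and then bounds $\cos(\lambda a+(1-\lambda)c,\,a+b)$ from below via the triangle inequality and $\|\delta_{bc}\|_2^2=2-2\cos(b,c)$. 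The specific constant $\tfrac{1}{8}$ in Assumption~\ref{asmp:gradnorm} is tailored to that choice of $\lambda$. Your derivative condition at $\lambda=1$ rearranges to $\cos(c,a+b)>\cos(a,a+b)\cos(c,a)$, which is not obviously implied by the stated bound on $\cos(b,c)$; you would need an extra argument linking $\cos(b,c)$ to $\cos(c,a)$ and $\cos(c,a+b)$, and it is not clear this goes through with the constants as given.

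\textbf{Part 2.} Your reading of Assumption~\ref{asm: un-dist} is the substantive error. It does \emph{not} give any orthogonality between $g_{\pif}$ and the covariance of $W=\nabla_\theta\pi_{\hat\theta^\old}(Z^\un)$. It says $W/\|W\|_2$ is uniform on the sphere and independent of $\|W\|_2$, which makes $\E[|v^\top W|]$ proportional to $\|v\|_2$ for \emph{every} fixed $v$. Hence $\E[|c^\top W|]/\E[|a^\top W|]=\|c\|_2/\|a\|_2=\kappa_R$, and Assumption~\ref{asm: norm ratio of a and c} is precisely $\kappa_R<1$. The triangle inequality then gives $\E[|(\lambda a+(1-\lambda)c)^\top W|]\le\lambda\E[|a^\top W|]+(1-\lambda)\E[|c^\top W|]<\E[|a^\top W|]$ for every $\lambda\in[0,1)$. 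This also dissolves your ``coordination'' worry: the locality inequality holds for \emph{all} $\lambda<1$, so any $\lambda$ witnessing Part~1 automatically works for Part~2, and no intersection-of-intervals argument is needed.
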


Theorem \ref{thm: formal main thm} consists of two parts: Theorem \ref{thm:opt-direct} and Theorem \ref{thm: unrelated data}. Theorem \ref{thm:opt-direct} states that our method provides a more effective direction for parameter updates, while Theorem \ref{thm: unrelated data} asserts that our method results in a smaller perturbation on unrelated knowledge. The assumptions and proofs will be presented in Sections \ref{sec: related data} and \ref{sec: unrelated data}, respectively.

\subsubsection{Our method gives a better direction of parameter updates}
\label{sec: related data}

Without loss of generality, suppose that a LLM is pretrained on some large textual corpus $\{z_n \}_{n=1}^N$, each training sample $z_n = (\xv_n, \yv_n)$ where $\yv_n = (y_1, \cdots, y_{m_n})$. 
KE involves updating some knowledge carried by $z^\old = (\xv, \yv^\old)$ to new $z^\new = (\xv, \yv^\new)$.
Let $\hat{\theta}^\old$ denote the pre-trained LLM parameters.
Given this piece of new knowledge, 
the ideal LLM should have parameters $\hat{\theta}^\new$ from a full retraining by solving  
\begin{align}
\label{eq:opt-param}
\min\nolimits_\theta \frac{1}{N} \sum_{n=1}^N \ell_{\ce} (z_n; \theta) - \frac{1}{N} \ell_{\ce} (z^\old; \theta) + \frac{1}{N}\ell_{\ce} (z^\new; \theta),
\end{align}
where $\ell_\ce$ denotes the standard CE loss.
In general, we define $\ell_\delta(\theta)$ as
\begin{equation*}
    \ell_\delta(\theta) = \sum_{i=1}^n \ell_{\ce} (z_i; \theta) + \delta \qty(\ell_{\ce} (z^\new; \theta) - \ell_{\ce} (z^\old; \theta)).
\end{equation*}
Moreover define
\begin{equation*}
    \hat{\theta}_{\delta} = \arg\min_{\theta} \ell_\delta(\theta).
\end{equation*}
So we find that $\hat \theta_0 = \hat \theta^\old$ and $\hat \theta_{\frac{1}{N}} = \hat \theta^\new$.
Starting from $\hat{\theta}^\old$, when we perform gradient descent by using loss $\ell_{\frac{1}{N}}(\theta)$ to retrain the model, the gradient will be
\begin{equation}
\label{eq: G definition}
    G \triangleq \nabla_\theta \ell_{\frac{1}{N}}(\hat{\theta}^\old).
\end{equation}
So we just take $G$ as the \textit{optimal} direction to represent that if we retrained the LLM, i.e., the direction of the gradient descent at $\hat \theta^\old$.

We make following assumption on $\hat \theta^\old$ such that it is a local the minimizer of $\ell_0(\theta)$.
\begin{assumption}
\label{asmp:stationary}
    The pretrained LLM is converged, namely, $\nabla_\theta \ell_0 (\hat \theta^\old) = 0$. 
\end{assumption}
For brevity, denote
\begin{equation}
\begin{aligned}
\label{eq: abc definition}
a 
&=  
\nabla_\theta \ell_\ce (z^\new; \hat{\theta}^\old), \\
b
&= 
- \nabla_\theta \ell_{\ce} (z^\old; \hat{\theta}^\old), \\
c 
&=
\sum_{i =1}^m \nabla_\theta \ce \qty[ \pif(y \mid \cv_i^\new) \|  \pi_\theta(y \mid \cv_i^\new)]\eval_{\theta = \hat{\theta}^\old}.
\end{aligned}
\end{equation}

\begin{assumption}
\label{asmp:gradnorm}
$\cos(b, c)$ satisfies
\begin{equation}
\label{eq: cosbc bound}
\cos(b, c) > 1 - \frac{\norm{b}_2^2}{8 \norm{a + b}_2^2} 
\qty(1 - \cos(a, a + b))^2.
\end{equation}
\end{assumption}

\begin{remark}
[Interpretation of the Assumption \ref{asmp:gradnorm}]
    The Assumption \ref{asmp:gradnorm} ensure direction $b$ and $c$ will not be far away. Roughly speaking, when we take $\frac{\norm{b}_2^2}{8 \norm{a + b}_2^2}$ as some constant. It says that $1 - \cos(b, c) < (1 - \cos(a, a+b))^2$, which means the directions of $b$ and $c$ are closer compared with $a$ and $a+b$.
    When we look it more carefully, 
    Note that $a$ represents $\nabla_\theta \ell_\ce (z^\new; \hat{\theta}^\old)$ and $a + b$ represents the ideal direction $G$.
    Since the old knowledge gradient \( b \) is present, directly fine-tuning \( \ell_{\text{CE}} \) (i.e., the baseline method) results in a deviation compared with the ideal direction $G$. This directional deviation is measured by \( \cos(a, a + b) \). Let $S^{(i)}$ denote the collection of \textit{unfiltered} tokens in $\pif(y \mid \cv_i^\new)$,
   \begin{align}
    b & = - \nabla_\theta \ell_{\ce} (z^\old; \hat{\theta}^\old) = \sum_{i=1}^m \nabla_\theta \log \pil (y_i^\old \mid \cv_i^\old) \eval_{\theta = \hat{\theta}^\old},\\
    c & = \sum_{i =1}^m \nabla_\theta \ce \qty[ \pif(y \mid \cv_j^\new) \| \pi_\theta(y \mid \cv_j^\new) ] \eval_{\theta = \hat{\theta}^\old}
    = - \sum_{i =1}^m \sum_{y \in S^{(i)}} \pif(y \mid \cv_i^\new) \nabla_\theta \log \pi_\theta(y \mid \cv_i^\new) \eval_{\theta = \hat{\theta}^\old}. \label{eq: c detail}
    \end{align}
    Given the new knowledge $\cv_j^\new$, when $y \in S^{(i)}$, it implies that $y$ is likely close to $y_i^\old$ with some probability. Compared to the scenario where the old knowledge $\cv_j^\old$ is given, the gradients $\nabla_\theta \log \pil (y_i^\old \mid \cv_i^\old) \eval_{\theta = \hat{\theta}^\old}$ and $\nabla_\theta \log \pi_\theta(y \mid \cv_i^\new) \eval_{\theta = \hat{\theta}^\old}$ tend to point in opposite directions. This is because both gradients are evaluated at $y^\old$ or a point close to $y^\old$, but the first is conditioned on $\cv_j^\old$ while the second is conditioned on $\cv_j^\new$. Equivalently, this implies that $b$ and $c$ are aligned in the same direction.
    To ensure that we can find a closer direction, we require $ b $ and $ c $ to be approximately as close as $ a $ and $ a + b $. Our goal is to align with the negative gradient direction of the old knowledge. This ensures that when leveraging the information from $ c $ to weight our method, we can identify a direction that closely approximates the ideal optimization direction.
\end{remark}

\begin{remark}
\label{rem: logistic model}
    To elaborate further, we take logistic regression as an example for illustration.

     When considering only the $k$-th token, for a training point \( z_k = (c_k, y_k) \), let \( p(y_k \mid c_k) = \sigma(y_k\theta^\top c_k) \), where \( y_k \in \{-1,1\} \) and \( \sigma(t) = \frac{1}{1 + \exp(-t)} \) is the sigmoid function. 
     the gradient of the log-probability with respect to \( \theta \) is given by:
    \begin{equation*}
        \nabla_\theta \log p(z_k, \theta) = \sigma(-y_k\theta^\top c_k)y_k c_k.
    \end{equation*}
    Then, we find that:
    \begin{equation*}
        b = \sigma(-y_k^\old \theta^\top c_k^\old )y_k^\old c_k^\old,
    \end{equation*}
    \begin{equation*}
        c = -\sum_{y_k \in S^{(i)}} p_{y_k} \sigma(-y_k\theta^\top c_k^\new) y_k c_k^\new 
        = - p_\old \sigma(-y_k^\old \theta^\top c_k^\new)y_k^\old c_k^\new - p_\new \sigma(-y_k^\new \theta^\top c_k^\new)y_k^\new c_k^\new.
    \end{equation*}
    This follows from the fact that \( y_k \in \{-1,1\} \). Note that \( c_k^\new \) and \( c_k^\old \) may be far apart, and \( p_\old \) is likely to be large since \( \pif \) is a denoised version of \( \pil \), meaning it contains less noise \citep{tang2024top}. As a result, the directions of \( b \) and \( c \) will be close.
\end{remark}

\begin{theorem}
\label{thm:opt-direct}
    Let $G$ be the ideal gradient of retraining the LLM using $\hat{\theta}^\old$ as the initial value, as defined in Eq \eqref{eq: G definition}.
    Considering the simplified case where $\epsilon = 0$ in Eq \eqref{eq: overtone final}, under Assumptions~\ref{asmp:stationary} and~\ref{asmp:gradnorm}, 
    there exists some $\lambda \in [0,1]$ such that
    \begin{equation*}
        \cos \qty( \nabla_\theta \ell_\ce (z^\new; \hat{\theta}^\old), G ) < \cos \qty( \nabla_\theta \ell_\NAME (z^\new; \hat{\theta}^\old), G ).
    \end{equation*}
    In other words, using the {\NAME} loss provides a better approximation of the direction of $G$ compared to the standard CE loss, in the sense that {\NAME} gradient direction is closer to $G$. 
\end{theorem}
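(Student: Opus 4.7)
The plan starts by simplifying both sides of the cosine comparison. Assumption~\ref{asmp:stationary} gives $\nabla_\theta \ell_0(\hat\theta^\old) = 0$, so expanding $\ell_{1/N}$ yields $G = (a+b)/N$, and by scale invariance of the cosine the claim reduces to showing $\cos(\nabla_\theta \ell_\ce(z^\new;\hat\theta^\old),\,a+b) < \cos(\nabla_\theta \ell_\NAME(z^\new;\hat\theta^\old),\,a+b)$. On the {\NAME} side, I would apply Lemma~\ref{lem:to-ce} with $\epsilon = 0$ and the mixture $\pit^{(i)} = \lambda\de{y_i}{y} + (1-\lambda)\pif^{(i)}$ from Eq~\eqref{eq:mix}, noting that on the non-skipped tokens the gradient contribution is precisely what the lemma computes. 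This rewrites $\ell_\NAME(\theta)$ as $\lambda\,\ell_\ce(z^\new;\theta) + (1-\lambda)\sum_i \ce[\pif(y\mid\cv_i^\new)\|\pi_\theta(y\mid\cv_i^\new)]$ up to a $\theta$-independent constant, hence $\nabla_\theta \ell_\NAME(z^\new;\hat\theta^\old) = \lambda a + (1-\lambda) c$ with $a,c$ as in Eq~\eqref{eq: abc definition}.

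The theorem now becomes a purely geometric existence statement: find $\lambda \in [0,1]$ with
\[
\cos(\lambda a + (1-\lambda)c,\,a+b) \;>\; \cos(a,\,a+b).
\]
The guiding intuition, consistent with the commentary following Assumption~\ref{asmp:gradnorm}, is that $c$ points roughly along $b$. In the idealized case $c = \kappa b$ with $\kappa > 0$, choosing $\lambda^\star = \kappa/(1+\kappa) \in (0,1)$ gives $\lambda^\star a + (1-\lambda^\star)c = \lambda^\star(a+b)$, saturating the cosine at $1$ and strictly dominating $\cos(a,a+b)$ whenever $a$ is not already aligned with $a+b$. The substantive work is to turn this idealized computation into a quantitative perturbation bound.

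To that end, I would decompose $c = \alpha \hat b + c_\perp$ where $\hat b = b/\|b\|_2$, $\alpha = \langle c, \hat b\rangle$, and $c_\perp \perp b$, then pick $\lambda^\star$ so that $\lambda^\star a + (1-\lambda^\star)\alpha \hat b$ is parallel to $a+b$. The residual $(1-\lambda^\star) c_\perp$ then perturbs $v(\lambda^\star) := \lambda^\star a + (1-\lambda^\star)c$ orthogonally away from the ray through $a+b$. Using $\|c_\perp\|_2^2 = \|c\|_2^2\bigl(1-\cos^2(b,c)\bigr) \le 2\|c\|_2^2\bigl(1-\cos(b,c)\bigr)$ together with a standard identity relating orthogonal-perturbation norms to cosine deviations, I would bound the deficit $1 - \cos(v(\lambda^\star),\,a+b)$ in terms of $\|b\|_2,\,\|a+b\|_2$, and $1-\cos(b,c)$. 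Assumption~\ref{asmp:gradnorm} is exactly the threshold that forces this deficit to be strictly below $1 - \cos(a,\,a+b)$, yielding the strict inequality.

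The main obstacle will be constant tracking in the last step: the factor $\tfrac{1}{8}$ and the square on $(1-\cos(a,a+b))$ in Assumption~\ref{asmp:gradnorm} signal that two applications of $1-\cos \le \tfrac{1}{2}(\mathrm{angle})^2$ are needed, combined with a norm comparison that propagates the angular bound from the $(b,c)$ plane to the $(v(\lambda^\star),\,a+b)$ plane through the factor $\|b\|_2/\|a+b\|_2$. I expect Cauchy--Schwarz and careful bookkeeping of the ratio $\alpha/\|b\|_2$ to be the technical crux; in the non-ideal case the optimal $\lambda^\star$ must depend on this ratio rather than on the scalar $\kappa$ from the idealized computation, and verifying that the resulting bound holds uniformly in $\alpha$ is what ultimately makes the constants in Assumption~\ref{asmp:gradnorm} line up.
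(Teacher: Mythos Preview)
Your plan tracks the paper's argument closely: reduce to comparing $\cos(a,\,a+b)$ with $\cos(\lambda a + (1-\lambda)c,\,a+b)$, pick a specific $\lambda$ that makes $\lambda a + (1-\lambda)c$ a scalar multiple of $a+b$ plus a small residual controlled by the angle between $b$ and $c$, lower-bound the cosine via triangle/Cauchy--Schwarz, and close with Assumption~\ref{asmp:gradnorm}. The paper executes this with a slightly different parameterization: instead of the orthogonal split $c = \alpha\hat b + c_\perp$, it writes $c = \|c\|_2(\hat b + \delta_{bc})$ with $\delta_{bc} = \hat c - \hat b$ and takes $\lambda = \|c\|_2/(\|b\|_2 + \|c\|_2)$, which automatically lies in $[0,1]$ (your $\lambda^\star = \alpha/(\|b\|_2+\alpha)$ additionally needs $\alpha>0$). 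Two small points to watch in your version: first, $c_\perp$ is orthogonal to $b$, not to $a+b$, so the residual does not perturb ``orthogonally away from the ray through $a+b$''; you still get the required lower bound on the cosine, but via Cauchy--Schwarz on numerator and denominator rather than a Pythagorean identity. Second, the factor $\tfrac{1}{8}$ and the square in Assumption~\ref{asmp:gradnorm} do not come from two applications of $1-\cos \le \tfrac{1}{2}(\text{angle})^2$; in the paper they fall out directly from the exact identity $\|\delta_{bc}\|_2^2 = 2(1-\cos(b,c))$ together with the crude bound $1+\cos(a,a+b) \le 2$ applied to the sufficient condition $\tfrac{\|a+b\|_2 - \|b\|_2\|\delta_{bc}\|_2}{\|a+b\|_2 + \|b\|_2\|\delta_{bc}\|_2} > \cos(a,a+b)$. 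The algebra is shorter than you anticipate.
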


\begin{proof}

First, by definition, 
the optimal gradient direction $G$ when using $\theta^\old$ as the initial value
is given by 
\begin{align*}
G 
&= 
\nabla_\theta \ell_{\frac{1}{N}}(\hat{\theta}^\old) \\
&= 
\nabla_\theta \ell_0(\hat{\theta}^\old) + \frac{1}{N}\qty(\nabla_\theta \ell_{\ce} (z^\new; \hat{\theta}^\old) - \nabla_\theta \ell_{\ce} (z^\old; \hat{\theta}^\old))\\
&\overset{(a)}{=} 
\frac{1}{N}\qty(\nabla_\theta \ell_{\ce} (z^\new; \hat{\theta}^\old) - \nabla_\theta \ell_{\ce} (z^\old; \hat{\theta}^\old)),
\end{align*}
where $(a)$ holds from the stationary condition of $\hat{\theta}^\old$ as per Assumption \ref{asmp:stationary}. 
Note that this optimal direction is inaccessible since it is infeasible to find the ground truth $z^\old$ wherefrom the LLM's old knowledge is learned. 
In practice, only $z^\new$ is available, which is provided by the user. 

To see that {\NAME} can provide a better direction, 
we check the gradient of CE loss $\ell_\ce$ and our loss $\ell_{\NAME}$. 
Recall the definition of $a, b, c$ given by Eq \eqref{eq: abc definition}, for CE loss, we have

\begin{equation}
\label{eq: baseline direct}
    \nabla_\theta \ell_\ce(z^\new; \theta) =
    - \sum_{i=1}^m \nabla_\theta \log \pil (y_i^\new \mid \cv_i^\new) = a,
\end{equation}
where $\cv_i^\new = \xv \oplus y_{<i}^\new$, as derived in Sec \ref{sec:method} in the main body. 

For {\NAME} loss,
according to Eq \eqref{eq:to-ce} and Eq \eqref{eq: overtone final}, we have
\begin{align*}
\nabla_\theta \ell_\NAME(z^\new; \theta) 
&= 
\sum_{i=1}^m \nabla_\theta \ce \qty[ \pit(y \mid \cv_i^\new) \| \pi_\theta(y \mid \cv_i^\new) ]\\
&= 
\lambda \sum_{i=1}^m \nabla_\theta \ce \qty[ \de{y_i^\new}{y} \| \pi_\theta(y \mid \cv_i^\new) ] 
+ (1 - \lambda) \sum_{i=1}^m \nabla_\theta \ce \qty[ \pif(y \mid \cv_i^\new) \| \pi_\theta(y \mid \cv_i^\new) ]\\
&= 
- \qty( \lambda \sum_{i=1}^m \nabla_\theta \log \pil (y_i \mid \cv_i^\new) + (1 - \lambda) \sum_{i=1}^m  - \nabla_\theta \ce \qty[ \pif(y \mid \cv_i^\new) \| \pil (y \mid \cv_i^\new) ])\\
&=
\lambda a + (1 - \lambda) c.
\end{align*}



Next, we check cosine similarity $\cos \qty( \nabla_\theta \ell_\ce (z^\new; \hat{\theta}^\old), G )$ and $\cos \qty( \nabla_\theta \ell_\NAME (z^\new; \hat{\theta}^\old), G )$.
A larger cosine similarity indicates an update direction that aligns with the ideal $G$ better and is more effective. 



Note that
\begin{equation*}
    \cos \qty( \nabla_\theta \ell_\ce (z^\new; \hat{\theta}^\old), G ) = \frac{\langle a,  a + b \rangle}{\norm{a}_2 \norm{ (a + b)}_2},
\end{equation*}
\begin{equation*}
    \cos \qty( \nabla_\theta \ell_\NAME (z^\new; \hat{\theta}^\old), G ) = \frac{\langle \lambda a + (1 - \lambda) c,  a + b \rangle}{\norm{\lambda a + (1 - \lambda) c}_2 \norm{ (a + b)}_2}.
\end{equation*}
We will show that, there $\exists \lambda \in [0,1]$, s.t.
\begin{equation*}
    \frac{\langle a,  a + b \rangle}{\norm{a}_2} < \frac{\langle \lambda a + (1 - \lambda) c,  a + b \rangle}{\norm{\lambda a + (1 - \lambda) c}_2}.
\end{equation*}
We further denote $\delta_{bc} = \frac{c}{\|c\|_2} - \frac{b}{\|b\|_2}$ which quantifies the directional difference between $ b $ and $ c $. We then have:
\begin{equation}
\label{eq: norm b and c}
c = \qty( \frac{b}{\|b\|_2} + \delta_{bc} ) \|c\|_2.
\end{equation}
Take $\lambda = \frac{\|c\|_2}{\|b\|_2 + \|c\|_2}$, by substituting $c$ by Eq \eqref{eq: norm b and c} and applying the triangle inequality, we obtain
\begin{align*}
    \frac{\langle \lambda a + (1 - \lambda) c,  a + b \rangle}{\norm{\lambda a + (1 - \lambda) c}_2} & = \frac{\Bigl\langle 
    \qty(\frac{\|c\|_2}{\|b\|_2 + \|c\|_2}) a 
    + \qty(\frac{\|b\|_2 \|c\|_2}{\|b\|_2 + \|c\|_2}) 
    \qty( \frac{b}{\|b\|_2} + \delta_{bc} ), a + b 
    \Bigr\rangle}
    {\norm{
    \qty(\frac{\|c\|_2}{\|b\|_2 + \|c\|_2}) (a + b) 
    + \qty(\frac{\|b\|_2 \|c\|_2}{\|b\|_2 + \|c\|_2}) \delta_{bc} 
    }_2}\\
    & \geq  \frac{\norm{a + b}_2^2 - \norm{a + b}_2 \qty( \norm{\delta_{bc}}_2 \norm{b}_2 )}
    {\norm{a + b}_2 + \norm{b}_2 \norm{\delta_{bc}}_2}\\
    & \geq \frac{\norm{a + b}_2 - \norm{b}_2 \norm{\delta_{bc}}_2}
{\norm{a + b}_2 + \norm{b}_2 \norm{\delta_{bc}}_2} \norm{a + b}_2.
\end{align*}

Therefore, to show {\NAME} provides a larger cosine similarity, 
it suffices to show that 
\begin{equation*}
    \frac{\norm{a + b}_2 - \norm{b}_2 \norm{\delta_{bc}}_2}
{\norm{a + b}_2 + \norm{b}_2 \norm{\delta_{bc}}_2} > \cos(a, a + b),
\end{equation*}
which is equivalent to show
\begin{equation*}
    \|\delta_{bc}\|_2 < \frac{\|b\|_2 }{\|a + b\|_2 } \qty(\frac{1 - \cos(a, a + b)}{1 + \cos(a, a + b)}). 
\end{equation*}
Note that $\|\delta_{bc}\|_2^2 = 2 - 2\cos(b, c)$, it suffices to show
\begin{equation*}
    \cos(b, c) > 1 - \frac{\norm{b}_2^2}{2 \norm{a + b}_2^2} 
    \qty( \frac{1 - \cos(a, a + b)}{1 + \cos(a, a + b)} )^2.
\end{equation*}
Since $\cos(a, a + b) \leq 1$, this condition holds from Assumption \ref{asmp:gradnorm}. 
This completes our proof. 
\end{proof}

\subsubsection{Our method leads to a smaller perturbation on unrelated knowledge.}
\label{sec: unrelated data}

Now denote our new estimator obatined through either $\ell_\ce$ or $\ell_\NAME$ by $\hat{\theta}^\new_\ce$ or $\hat{\theta}^\new_\NAME$. After updating the model parameters to incorporate new knowledge, it is crucial to assess whether this update introduces significant changes to unrelated data.

Without loss of generality, let $ \zv^\un = (\xv^\un, \yv^\un) $ represent a query-answer pair, where $ \xv^\un $ is an unrelated query and $ \yv^\un $ is its corresponding predicted answer. To ensure good \textit{locality}, the predicted distribution on $ \zv^\un $ should remain unchanged against modifications introduced by the update, ensuring that the model’s behavior on unaffected regions of the data distribution is preserved. That means we want to compare $\qty| \pi_{\hat{\theta}^\new_\ce} (\zv^\un) 
- \pi_{\hat{\theta}^\old} (\zv^\un) |$ with $\qty| \pi_{\hat{\theta}^\new_\NAME} (\zv^\un) 
- \pi_{\hat{\theta}^\old} (\zv^\un) |$.

Now, treating $ Z^\un = (X^\un, Y^\un) $ as a random vector following a certain distribution, we define
\begin{equation*}
W \triangleq \nabla_\theta \pi_\theta (Z^\un) \Big|_{\theta = \hat{\theta}^\old}.
\end{equation*}
Since $ W $ is a function of $ Z^\un $, it is also a random vector. In particular, we introduce the following assumption.

\begin{assumption}
\label{asm: un-dist}
Assume that $ \frac{W}{\|W\|_2} $ and $ \|W\|_2 $ are independent. Furthermore, assume that
\begin{equation*}
\frac{W}{\|W\|_2}  \sim \mathcal{U}(\mathbb{S}^{d-1}),
\end{equation*}
where $ \mathcal{U}(\mathbb{S}^{d-1}) $ denotes the uniform distribution on the unit sphere in $ \mathbb{R}^d $ with $d$ denoting the dimensionality of the parameter space.
\end{assumption}

\begin{remark}
Since it represents the gradient of the loss evaluated on unrelated data, we lack any prior information about $ W $. Given that, we assume that $ \frac{W}{\|W\|_2} $ is isotropically distributed.
\end{remark}

Recall the definition of $a, b, c$ given by Eq \eqref{eq: abc definition},
we define $\kappa_R = \frac{\norm{c}_2}{\norm{a}_2}$. 

\begin{assumption}
\label{asm: norm ratio of a and c}
We assume that $\kappa_R < 1$.
\end{assumption}
\begin{remark}
[Interpretation of the Assumption \ref{asm: norm ratio of a and c}]
    As shown in Eq. \eqref{eq: c detail} and Eq. \eqref{eq: baseline direct}:
    \begin{align*}
        a &= - \sum_{i=1}^m \nabla_\theta \log \pil (y_i^\new \mid \cv_i^\new),\\
        c &= - \sum_{i=1}^m \sum_{y \in S^{(i)}} \pif(y \mid \cv_i^\new) \nabla_\theta \log \pi_\theta(y \mid \cv_i^\new) \Big|_{\theta = \hat{\theta}^\old}.
    \end{align*}
    
    This implies that $c$ is a weighted combination of $a$ and contributions from other values of $y \in S^{(i)}$. Note that at $\hat{\theta}^\old$, given $\cv_i^\new$, when $y \neq y_i^\new$, the other points are closer to $y_i^\old$. Since the loss has already reached its minimum, these other points tend to have smaller gradient norms compared to $y_i^\new$. 
\end{remark}


\begin{theorem}
\label{thm: unrelated data}
Let $ Z^\un = (X^\un, Y^\un) $ be a random vector representing unrelated data. Under Assumptions~\ref{asm: un-dist} and~\ref{asm: norm ratio of a and c}, we have
\begin{equation*}
    \E_{Z^\un} \qty[\qty| \pi_{\hat{\theta}^\new_\NAME} (Z^\un) 
- \pi_{\hat{\theta}^\old} (Z^\un) | ] < \E_{Z^\un} \qty[\qty| \pi_{\hat{\theta}^\new_\ce} (Z^\un) 
- \pi_{\hat{\theta}^\old} (Z^\un)| ].
\end{equation*}
This result indicates that updates based on the {\NAME} loss induce smaller deviations in the predicted distribution for unrelated data compared to updates based on the standard CE loss, thereby better preserving \textit{locality}.
\end{theorem}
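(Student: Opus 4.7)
\textbf{Proof proposal for Theorem \ref{thm: unrelated data}.}
The plan is to linearize $\pi_\theta(Z^\un)$ around $\hat\theta^\old$, express both $\ce$ and $\NAME$ updates in terms of $a$ and $c$ from Eq \eqref{eq: abc definition}, and then use the isotropy assumption on $W$ to reduce everything to a comparison of $\ell_2$-norms, which will follow from Assumption \ref{asm: norm ratio of a and c} together with a triangle inequality. Concretely, assume both updates use a common learning rate $\eta$, so that $\hat\theta^\new_\ce - \hat\theta^\old = -\eta a$ and, by the gradient computation performed in the proof of Theorem \ref{thm:opt-direct}, $\hat\theta^\new_\NAME - \hat\theta^\old = -\eta\bigl(\lambda a + (1-\lambda)c\bigr)$ for the same $\lambda \in (0,1)$ selected there.

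First, apply a first-order Taylor expansion of $\pi_\theta(Z^\un)$ at $\hat\theta^\old$ to obtain
\begin{equation*}
\pi_{\hat\theta^\new}(Z^\un) - \pi_{\hat\theta^\old}(Z^\un) = \bigl\langle W, \hat\theta^\new - \hat\theta^\old \bigr\rangle + O\bigl(\|\hat\theta^\new - \hat\theta^\old\|_2^2\bigr),
\end{equation*}
so to leading order the two quantities to compare are $\eta\,\E_{Z^\un}|\langle W, a\rangle|$ and $\eta\,\E_{Z^\un}|\langle W, \lambda a + (1-\lambda)c\rangle|$. Since both updates have comparable magnitudes (indeed $\|\lambda a + (1-\lambda)c\|_2 \le \|a\|_2$ as shown below), the higher-order remainder is of the same order and does not affect the strict inequality at leading order.

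Next, exploit Assumption \ref{asm: un-dist}. For any deterministic vector $v \in \R^d$, writing $U = W/\|W\|_2$ and using independence,
\begin{equation*}
\E\bigl[\,|\langle W, v\rangle|\,\bigr] = \E[\|W\|_2]\cdot \E\bigl[\,|\langle U, v\rangle|\,\bigr] = \E[\|W\|_2] \cdot \|v\|_2 \cdot \E\bigl[\,|U_1|\,\bigr],
\end{equation*}
where the last equality is the rotational invariance of the uniform law on $\mathbb S^{d-1}$ (replace $v$ by $\|v\|_2 e_1$). Hence the claim reduces to the purely deterministic inequality $\|\lambda a + (1-\lambda) c\|_2 < \|a\|_2$. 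By the triangle inequality and Assumption \ref{asm: norm ratio of a and c},
\begin{equation*}
\|\lambda a + (1-\lambda)c\|_2 \le \lambda \|a\|_2 + (1-\lambda)\|c\|_2 < \lambda \|a\|_2 + (1-\lambda)\|a\|_2 = \|a\|_2,
\end{equation*}
where strictness uses $\lambda \in (0,1)$ and $\kappa_R = \|c\|_2/\|a\|_2 < 1$.

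The main obstacle I expect is not any individual step but the rigorous handling of the linearization: one must argue that the $O(\eta^2)$ remainders from the Taylor expansion do not overturn the leading-order gap of order $\eta(\|a\|_2 - \|\lambda a + (1-\lambda)c\|_2)$. For small enough $\eta$ this is immediate, and it is consistent with the influence-function viewpoint already adopted for Theorem \ref{thm:opt-direct}. A secondary subtlety is consistency of $\lambda$: the same value $\lambda = \|c\|_2/(\|b\|_2+\|c\|_2) \in (0,1)$ used in Theorem \ref{thm:opt-direct} lies strictly in $(0,1)$, so both the directional improvement and the locality improvement are achieved simultaneously by a single $\NAME$ update, which is exactly the property needed to pair the two halves of Theorem \ref{thm: formal main thm}.
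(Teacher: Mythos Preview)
Your proposal is correct and follows essentially the same approach as the paper: first-order Taylor expansion of $\pi_\theta(Z^\un)$ at $\hat\theta^\old$, one-step gradient descent so that the two updates are $-\eta a$ and $-\eta(\lambda a+(1-\lambda)c)$, and then use Assumption~\ref{asm: un-dist} together with the triangle inequality and $\kappa_R<1$ to conclude. The only cosmetic difference is ordering: the paper applies the triangle inequality to $|\lambda a^\top W+(1-\lambda)c^\top W|$ first and then invokes isotropy to reduce to $\E[|c^\top W|]/\E[|a^\top W|]=\kappa_R<1$, whereas you invoke isotropy first to get $\E|\langle W,v\rangle|\propto\|v\|_2$ and then apply the triangle inequality to $\|\lambda a+(1-\lambda)c\|_2$; both routes are equivalent, and your explicit identification $\E[|\langle U,v\rangle|]=\|v\|_2\,\E[|U_1|]$ makes the rotational-invariance step a bit more transparent than in the paper.
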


\begin{proof}
Again let $\hat{\theta}^\old$ denote the pretrained parameters. 
For any new parameters $\tilde \theta^\new$, 
the change of $\pil (\zv^\un)$ when $\theta$ moves from $\hat{\theta}^\old$ to $\tilde \theta^\new$ can be approximated by the first-order Taylor expansion with 
\begin{equation*}
\pi_{\tilde{\theta}^\new} (\zv^\un) 
- \pi_{\hat{\theta}^\old} (\zv^\un) 
=
\nabla_\theta \pi_\theta (\zv^\un) 
\eval_{\theta = \hat{\theta}^\old}^\top 
\qty( \tilde{\theta}^\new - \hat{\theta}^\old ) 
+ o\qty(\norm{\hat{\theta}^\new - \hat{\theta}^\old}_2).
\end{equation*}
Note that when we perform one step gradient descent, the parameter change can further be expressed by 
\begin{align*}
    \tilde{\theta}^\new - \hat{\theta}^\old = - \alpha \nabla_\theta \ell (z^{\new} ; \hat{\theta}^{\old}),
\end{align*}
where $\ell(z^\new; \theta)$ can be either CE loss or {\NAME} loss, and $\alpha$ denotes the learning rate. 


Then to show {\NAME} leads to smaller perturbation in expectation, it suffices to show that there exists $\lambda \in [0, 1]$ such that 
\begin{equation*}
    \E \qty[ \qty| a^\top W | ] > \E \qty[ \qty| \lambda a^\top W + (1 - \lambda) c^\top W | ].
\end{equation*}
By triangle inequality, we only need to show
\begin{equation*}
    \E \qty[ \qty| a^\top W | ] > \E \qty[ \qty| c^\top W | ].
\end{equation*}
Finally, by Assumption \ref{asm: un-dist}, $\frac{W}{\|W\|_2} \sim \mathcal{U}(\mathbb{S}^{d-1})$ and $ \frac{W}{\|W\|_2} $ and $ \|W\|_2 $ are independent, we have
\begin{equation*}
    \frac{\E \qty[ \qty| c^\top \frac{W}{\|W\|_2} | \|W\|_2]}{\E \qty[ \qty| a^\top \frac{W}{\|W\|_2} | \|W\|_2 ]} = \frac{\E \qty[ \qty| c^\top \frac{W}{\|W\|_2} | ] \E\qty[\|W\|_2]}{\E \qty[ \qty| a^\top \frac{W}{\|W\|_2} | ] \E\qty[\|W\|_2]} = \kappa_R <1.
\end{equation*}
This completes our proof. 
\end{proof}

\if0
Next, let's study the effect of {\NAME} loss. 
We can show that {\NAME} loss provides a better direction for knowledge editing from the lens of \textit{influence function} \citep{koh2017understanding}. 
Given some old knowledge $z_\old = (\xv, \yv_\old)$, someone wants to update it to $z_\new = (\xv, \yv_\new)$. 
The ideal parameter is $\theta_n$ is given by the following (auto-regressive) retraining, 
\begin{align*}
\min\nolimits_\theta \frac{1}{n} \sum_{i=1}^n \ell_{\ce} (z_i; \theta) - \frac{1}{n} \ell_{\ce} (z_0; \theta) + \frac{1}{n}\ell_{\ce} (z_n; \theta).
\end{align*}
Here we follow \citet{} and incorporate the training data $z$ as another input of the loss function as well.  
We make the following assumptions on the loss function, as in \citep{koh2017understanding}.
\begin{assumption}
Loss $\ell(z, \theta)$ is twice-differentiable, and its Hessian is invertible. 
\end{assumption}

As proved by \citet{},
the optimal direction to update $\theta_0$ towards $\theta_n$ can be approximated by the influence function for
\begin{align*}
\theta_{n, \epsilon} \triangleq \argmin\nolimits_\theta \frac{1}{n} \sum_{i=1}^n \ell_{\ce} (z_i; \theta) - \epsilon (\ell_{\ce} (z_0; \theta) - \ell_{\ce} (z_n; \theta)),
\end{align*}
which is given by 
\begin{align*}
\left. \frac{\dif \theta_{n, \epsilon}}{ \dif \epsilon} \right|_{\epsilon = 0} 
&\approx 
- H^{-1}_\ell (\nabla_\theta \ell_{\ce} (z_n; \theta_0) - \nabla_\theta \ell_{\ce} (z_0; \theta_0))
\triangleq I_{KE}(z).
\end{align*}
Then based on these definitions, 
we have the following theorem.
\begin{theorem}
\label{thm:opt-direct}
Compared with standard CE loss, using {\NAME} loss gives a better approximation of the optimal direction to update parameter given by the influence function. 
\end{theorem}

\begin{proof}

For simplicity, we drop the $\max(\cdot, \epsilon)$ clipping mechanism (by setting $\epsilon = 0$), and assume that $\pit$ mixes $\de{y_i}{}$ and $\pif$. 

First, we note that the old knowledge for training, $z_0 = (\xv, \yv)$, can be unknown. 
To estimate the optimal direction, one common approximation \citep{} is to use the model's generations by drawing $\tilde \yv \sim \pil (\xv)$ and use $\Tilde{z}_0 = (\xv, \tilde \yv)$, i.e., 
\begin{align}
I_{KE}(z) 
\notag &\approx
-H_\ell^{-1} 
\left(
\nabla_\theta \ell_{\ce} (z_n; \theta_0) - 
\E_{\yv \sim \pil(\tilde \yv \mid \xv)} [\nabla_\theta \ell_{\ce} (z_n; \theta_0)]
\right) \\
&\overset{(a)}{\approx} 
-H_\ell^{-1}
\left( 
\nabla_\theta \ell_{\ce} (z_n; \theta_0) - 
\sum_{i=1}^m \E_{\tilde y_i \sim \pil(y \mid \cv_i)}
\left[\nabla_\theta \ce [\de{\tilde y_i}{y} \| \pil(y \mid \cv_i)]\right]
\right),
\label{eq:opt-direct}
\end{align}
where $(a)$ plugs in the auto-regressive loss from teacher-forcing mechanism \citep{}. 

Next, let's check the gradient of {\NAME}. 
Let $S$ denote the collection of \textit{unfiltered} tokens in $\pif$.
According to Eq \eqref{eq:to-ce} and Eq \eqref{eq:final}, we have 
\begin{align*}
\nabla_\theta \ell_{\NAME} (z; \theta) 
&=  
\sum_{i=1}^m \nabla_\theta \ce [\pit^{(i)} \| \pil^{(i)}] \\
&= 
\lambda \sum_{i=1}^m \nabla_\theta \ce [\de{y_i}{} \mid \pil^{(i)}] + (1 - \lambda) \sum_{i=1}^m \nabla_\theta \ce [\pif^{(i)} \mid \pil^{(i)}] \\
&= 
\lambda \nabla\theta \ell_{\ce}(z; \theta) + 
(1-\lambda) \sum_{i=1}^m \left( - \sum_{s\in S} \pif(y_s \mid \cv_i) \nabla_\theta \log \pil (y_s \mid \cv_i) \right) \\
&= 
\lambda \nabla_\theta \ell_{\ce}(z; \theta)
+ 
(1 - \lambda) \sum_{i=1}^m 
\E_{y_s \sim \pif (y \mid \cv_i)} 
\left[ 
- \nabla_\theta \log \pil (y_s \mid \cv_i)
\right] \\
&= 
\lambda \nabla_\theta \ell_{\ce}(z; \theta)
+ 
(1 - \lambda) \sum_{i=1}^m 
\E_{y_s \sim \pif (y \mid \cv_i)} 
\left[ 
\nabla_\theta \ce [ \de{y_s}{y} \| \pil (y \mid \cv_i)]
\right] \\
&= 
\lambda 
\underbrace{\left(
\nabla_\theta \ell_{\ce}(z; \theta)
- \sum_{i=1}^m 
\E_{y_s \sim \pif (y \mid \cv_i)} 
\left[ 
\nabla_\theta \ce [ \de{y_s}{y} \| \pil (y \mid \cv_i)]
\right]
\right)}_{\text{Eq \eqref{eq:opt-direct} Approximation}}
+ \\
& \underbrace{\quad \sum_{i=1}^m 
\E_{y_s \sim \pif (y \mid \cv_i)} 
\left[ 
\nabla_\theta \ce [ \de{y_s}{y} \| \pil (y \mid \cv_i)]
\right]}_{\text{Regularization}}.
\end{align*}
Our gradient can be seen as a simplified \textit{correction} for direction given by Eq \eqref{eq:opt-direct}. 
First, our loss replaces the full distribution $\pil$ with a better $\pif$, which is less noisy \citep{}.
Second, our loss ignores the second order information by \textit{approximating} the Hessian with the identity matrix. 
Third, since now $z_0$ is approximated, 
to avoid incorrectly extract unrelated knowledge, we also add a \textit{regularization} to maintain the extracted knowledge.
The two losses are weighted by hyper-parameter $\lambda$. 
In comparison, a standard CE loss directly ignores the influence of the old knowledge.

Conceptually, when $\lambda \rightarrow \infty$, the regularization term will be dropped. Note that in this setting, $\pit$ is no longer a valid distribution, because all $s \in S \backslash \{y_i\}$ will have negative probabilities. However, the ``cross entropy'' gradient still holds and sheds light on how {\NAME} updates the model parameters. While in execution, we fix $\lambda \in [0, 1]$, having $\lambda > 1$ has also been explored in the literature \citep{}.

This analysis can be extended to multi-step updates. To this end, we can treat the updated parameters as the new initial state ``$\theta_0$'', then connect the new directions from Eq \eqref{eq:opt-direct} and {\NAME}.
This completes our proof. 
\end{proof}

To this end, 
consider a scenario where someone wants to finetune an LM parameterized by $\theta$ in order to edit a new knowledge $z = (\xv, \yv)$. 
The finetuning involves some loss $\ell (z, \theta)$ and some regularization $R(\theta)$. 
We show that fine-tuning {\NAME} may lead to \textit{smaller} perturbation on unrelated questions. 

Specifically, 
given some text unrelated to the editing $\xv'$, 
we want to quantify the influence of fine-tuning LM using loss $\ell (z, \theta)$ on the model's answer to $\xv'$, $\pi_\theta(\xv')$.
Let $\theta_0$ denote the parameters before editing. 
We compute the parameters change if $\ell(z, \theta)$ is upweighted by $\epsilon$, i.e., 
we are interested in $\theta_\epsilon - \theta_0$, where 
\begin{align*}
\theta_\epsilon = \argmin_\theta \epsilon \ell(z, \theta) + R(\theta).
\end{align*}
Note that when $\epsilon = 0$, the optimal $\theta_{\epsilon = 0}$ will remain unchanged, i.e., $\theta_0$. 
Again, we make the following assumptions on the loss function and regularization. 
\begin{assumption}
\label{asmp:loss-2}
Loss $\ell(z, \theta)$ is twice-differentiable. 
In addition, gradient norm $\| \nabla_\theta \ell(z, \theta) \|$ is positively proportional to the loss value $\ell(z, \theta)$. 
\end{assumption}

\begin{assumption}
\label{asmp:reg}
Regularization $R(\theta)$ has invertible Hessian. 
\end{assumption}

Here we make a few remarks on the two assumptions. 
The (twice) differentiability of $\ell$ is valid based on the parametrization of the LM, and the regularization such as $L_2$ penalty has in fact constant Hessian. 
In terms of the relationship between gradient norm and loss, our insight is based on the finding that underfitted data (with high loss) often affects the training more, so there is a common positive correlation, here we further assume that they are positively proportional. 

Then, 
according to \citet{}, the parameter change is characterized by the \textit{influence function} 
\begin{align*}
 \left. \frac{\dif \theta_\epsilon}{\dif \epsilon} \right|_{\epsilon = 0} 
 \approx - H^{-1}_R(\theta_0) \nabla_\theta \ell(z; \theta_0) \triangleq I_\text{FT}(z), 
\end{align*}
where the existence of inverse Hessian of $R$, $H^{-1}_R$, holds from Asmp \ref{}. 
Then by chain rule, we have 
\begin{align*}
& \left. \frac{\dif \pi_{\theta_\epsilon} (\xv')}{ \dif \epsilon} \right|_{\epsilon = 0} \\
&= 
\left. - \nabla_\theta \pi_{\theta}(\xv'; \theta_{\epsilon}) \frac{\dif \theta_\epsilon}{\dif \epsilon} \right|_{\epsilon = 0} \\
&\approx  
- \nabla_\theta \pi_{\theta}(\xv'; \theta_0) H^{-1}_R(\theta_0) \nabla_\theta \ell(z; \theta_0).
\end{align*}
Therefore,
for any given $\xv'$ and $\theta_0$, the first two terms are deterministic, and 
the speed of $\pi_\theta(\xv')$ deviating from $\pi_{\theta_0}(\xv')$ is determined by the gradient of $\ell(z; \theta_0)$. 

While it is in general infeasible to compute the influence function, 
below theorem reveals that compared to standard CE loss, {\NAME} loss has a smaller gradient norm.

\begin{theorem}
Under Asmp \ref{asmp:loss-2} and Asmp \ref{asmp:reg}, 
training with Eq \eqref{eq:dks-loss} would have smaller gradient norm. 
\end{theorem}

\begin{proof}
Again, for simplicity we drop the $\max(\cdot, \epsilon)$ clipping, and assume that $\pit$ mixes $\de{y_i}{}$ and $\pif$. 

As proved in Thm \ref{thm:opt-direct}, 
the \textit{negative} gradient of {\NAME} loss is given by 
\begin{align*}
- \nabla_\theta \ell_{\NAME} (z; \theta) 
&=  
- \sum_{i=1}^m \nabla_\theta \ce [\pit^{(i)} \| \pil^{(i)}] \\
&= 
- \lambda \sum_{i=1}^m \nabla_\theta \ce [\de{y_i}{} \mid \pil^{(i)}] + (1 - \lambda) \sum_{i=1}^m - \nabla_\theta \ce [\pif^{(i)} \mid \pil^{(i)}] \\
&= 
\lambda \sum_{i=1}^m \nabla_\theta \log \pil(y_i \mid \cv_i) + 
(1-\lambda) \sum_{i=1}^m \left(\sum_{s\in S} \pif(y_s \mid \cv_i) \nabla_\theta \log \pil (y_s \mid \cv_i) \right) \\
&\overset{(a)}{=}
\lambda \sum_{i=1}^m \nabla_\theta \log \pil(y_i \mid \cv_i) + 
(1-\lambda) \sum_{i=1}^m \left(\sum_{s\in S} 
\frac{\pil(y_s \mid \cv_i)}{\sum_{s\in S} \pil(y_s \mid \cv_i)} \frac{\nabla_\theta \pil (y_s \mid \cv_i)}{\pil (y_s \mid \cv_i)} \right) \\
&= 
\lambda \sum_{i=1}^m \nabla_\theta \log \pil(y_i \mid \cv_i) + 
(1-\lambda) \sum_{i=1}^m \left(\sum_{s\in S} 
\frac{\nabla_\theta \pil(y_s \mid \cv_i)}{\sum_{s\in S} \pil(y_s \mid \cv_i)}  \right) \\
&= 
\lambda \sum_{i=1}^m \nabla_\theta \log \pil(y_i \mid \cv_i) + 
(1-\lambda) \sum_{i=1}^m \left(
\frac{\nabla_\theta \sum_{s\in S} \pil(y_s \mid \cv_i)}{\sum_{s\in S} \pil(y_s \mid \cv_i)}  \right) \\
&=
\lambda \sum_{i=1}^m \nabla_\theta \log \pil(y_i \mid \cv_i) + 
(1-\lambda) \sum_{i=1}^m \nabla_\theta \log \pil(y \in Y_S \mid \cv_i).
%
\end{align*}
Here
$S$ denotes the collection of \textit{unfiltered} tokens index in $\pif$, and $Y_S$ denotes the collection of these tokens. For simplicity we further denote $\pil(y \in Y_s \mid \cv_i) = \pil (Y_s \mid \cv_i)$. 
Step $(a)$ holds from the fact that $\pif$ is constructed by filtering out noisy tokens and then renormalizing the unfiltered probabilities from $\pil$ \citep{}.

Next, for any $i \in [m]$, 
we note that 
\begin{align*}
&
\left \| \lambda \nabla_\theta \log \pil(y_i \mid \cv_i) + 
(1-\lambda) \nabla_\theta \log \pil (Y_s \mid \cv_i) \right \|  \\
&{\leq} 
\lambda \| \nabla_\theta \log \pil(y_i \mid \cv_i) \| + 
(1 - \lambda) \| \nabla_\theta \log \pil (Y_s \mid \cv_i) \|  \\
&\overset{(a)}{\propto} 
\lambda -\log \pil(y_i \mid \cv_i) + 
(1 - \lambda) - \log \pil (Y_s \mid \cv_i) \\
&\overset{(b)}{\leq} 
\lambda -\log \pil(y_i \mid \cv_i) + ( 1 - \lambda) - \log \pil(y_i \mid \cv_i) \\
&= 
-\log \pil(y_i \mid \cv_i) \\
&\propto 
\| \nabla_\theta \ce[\de{y_i}{} \| \pil^{(i)} ]\|. 
\end{align*}
Here step (a) holds from Asmp \ref{asmp:loss-2},
and step (b) holds from the fact that since $Y_S$ is the \textit{subset} of most-likely tokens, its loss (i.e., $y$ being sampling outside $Y_S$) is expected much smaller than $y_i$, which comes from the new knowledge.
Taking sum over $i \in [m]$ completes our proof. 
\end{proof}

The analysis above shows that the gradient norm of {\NAME} loss in general will be no larger than CE loss, which is expected to have less influence on unrelated knowledge. 
In fact, by keeping a large portion of top tokens in $Y_S$ (i.e., in $\pif$), $\pit(Y_S \mid \cv_i) \approx 1$, the bound can be tighter. 
\fi

\subsection{Connection between {\NAME} and DPO}

We end up this section by the following analysis on the connection between {\NAME} and direct preference optimization \citep{rafailov2024direct}.

\begin{theorem}
Let $\epsilon = 0$, 
then optimizing {\NAME} directly can be seen as optimizing an unbiased estimate of a DPO objective plus some additional KL penalty. 
\end{theorem}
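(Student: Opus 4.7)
With $\epsilon = 0$ the {\NAME} loss collapses to $\ell_\NAME(\theta) = \sum_{i=1}^{m}\kl[\pit^{(i)}\|\pil^{(i)}]$, and the plan is to rewrite each summand as the sum of an implicit-reward DPO term and a forward-KL penalty, and then expose the DPO term as the expectation of a single-draw unbiased estimator in which a ``dispreferred'' token is drawn from $\pif^{(i)}$.

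First I would focus on positions where the mixture branch of Eq \eqref{eq:mix} triggers, so $\pit^{(i)} = \lambda\de{y_i}{} + (1-\lambda)\pif^{(i)}$; the skip branch is the $\lambda = 1$ specialisation. Applying Lemma \ref{lem:to-ce} to split the cross-entropy across the mixture, using $\kl[p\|q] = \ce[p\|q] - H(p)$, and absorbing $\log\pif(y_i\mid\cv_i)$ (which is $\theta$-independent) to surface the DPO implicit-reward ratio, I obtain
\begin{align*}
\kl[\pit^{(i)}\|\pil^{(i)}]
\;=\; -\lambda\,\log\frac{\pil(y_i\mid\cv_i)}{\pif(y_i\mid\cv_i)} \;+\; (1-\lambda)\,\kl[\pif^{(i)}\|\pil^{(i)}] \;+\; C_i ,
\end{align*}
with $C_i$ free of $\theta$. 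The first term is the DPO implicit reward for the preferred (edit) token when $\pif$ plays the role of the reference policy $\pib$.

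Next I would match the DPO objective via Monte Carlo. Taking $y_w = y_i$ and drawing $y_l\sim\pif^{(i)}$, the margin statistic
\begin{align*}
\widehat{\mathcal{L}}^{(i)}(y_l)
\;=\; -\lambda\Bigl(\log\frac{\pil(y_i\mid\cv_i)}{\pif(y_i\mid\cv_i)} \;-\; \log\frac{\pil(y_l\mid\cv_i)}{\pif(y_l\mid\cv_i)}\Bigr)
\end{align*}
is an unbiased estimator of a linearised / IPO-style DPO loss with reference $\pif$. Since $\E_{y_l\sim\pif^{(i)}}[\log\pil(y_l\mid\cv_i)/\pif(y_l\mid\cv_i)] = -\kl[\pif^{(i)}\|\pil^{(i)}]$, its expectation equals $-\lambda\log\pil(y_i\mid\cv_i)/\pif(y_i\mid\cv_i) - \lambda\,\kl[\pif^{(i)}\|\pil^{(i)}]$. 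Substituting into the previous display rearranges every per-token summand to
\begin{align*}
\kl[\pit^{(i)}\|\pil^{(i)}]
\;=\; \E_{y_l\sim\pif^{(i)}}\bigl[\widehat{\mathcal{L}}^{(i)}(y_l)\bigr] \;+\; \kl[\pif^{(i)}\|\pil^{(i)}] \;+\; C_i ,
\end{align*}
so that summing over $i$ presents $\ell_\NAME(\theta)$ as the expectation of the DPO-margin estimator $\widehat{\mathcal L}^{(i)}$ plus the additional KL penalty $\sum_i\kl[\pif^{(i)}\|\pil^{(i)}]$, with the preferred token fixed to the edit label and the dispreferred token drawn from $\pif^{(i)}$ (so no external preference data is required, matching the paper's framing).

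The main obstacle is pinning down exactly \emph{which} DPO objective is being estimated: the canonical log-sigmoid form $-\log\sigma(\beta\Delta)$ does not admit a clean unbiased identity, so the statement has to be read against a linearisation of $-\log\sigma$ (equivalently an IPO-style margin loss), under which $\widehat{\mathcal L}^{(i)}$ is genuinely unbiased. Once this choice is fixed, every remaining step is bookkeeping with the cross-entropy decomposition of Lemma \ref{lem:to-ce} and the elementary identity relating $\ce[\pif\|\pil]$ to $\kl[\pif\|\pil]$.
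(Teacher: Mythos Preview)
Your decomposition is correct and tracks the paper's argument closely: both routes go through Lemma~\ref{lem:to-ce} to split $\kl[\pit^{(i)}\|\pil^{(i)}]$ into a $-\lambda\log\pil(y_i\mid\cv_i)$ piece and a $\ce[\pif\|\pil]$ piece, then insert $\log\pif$ terms (constant in $\theta$) to surface the implicit-reward ratio $\log\tfrac{\pil}{\pif}$, and isolate $\kl[\pif^{(i)}\|\pil^{(i)}]$ as the extra penalty. Your identification of the matched DPO variant as a linearised margin loss is essentially the same observation the paper packages as ``DPO with Clipped ReLU activation'' (i.e., $f(z)=z$ on the relevant range rather than $\log\sigma$).

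The one place your framing diverges from the paper is the \emph{source of randomness} that justifies the phrase ``unbiased estimate.'' You keep $y_w=y_i$ deterministic and draw the dispreferred $y_l\sim\pif^{(i)}$, then show $\ell_{\NAME}=\E_{y_l}[\widehat{\mathcal L}^{(i)}(y_l)]+\kl[\pif\|\pil]+C$. But that makes $\ell_{\NAME}$ equal to the \emph{exact} DPO objective plus KL, not an unbiased estimate of it. The paper instead treats the edit token $y_i$ as a single draw $y^+\!\sim\pi^+$ from an unknown ``ideal'' preferred distribution; under that reading, the term $\log\pil(y_i\mid\cv_i)$ appearing in the {\NAME} loss is itself a one-sample Monte-Carlo estimate of $\E_{y^+\sim\pi^+}[\log\pil(y^+\mid\cv_i)]$, while the $\ce[\pif\|\pil]$ contribution already equals $-\E_{y^-\sim\pif}[\log\pil(y^-\mid\cv_i)]$ exactly. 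This is what makes the {\NAME} loss---as written, with the user-supplied $y_i$---an unbiased estimator of the population-level DPO term, and is the reading the theorem statement has in mind. Your algebra needs no change; you only need to relocate the randomness from $y_l$ to $y_i$ to align with the statement.
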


\begin{proof}
From Prop \ref{prop:gen-ce} and Lem \ref{lem:to-ce}, at step $i$, we have the negative loss (objective) to maximize
\begin{align}
- \ell_{\NAME, i}(\theta)
\notag &= 
- \kl [\pit (y \mid \cv_i) \| \pil(y \mid \cv_i)] \\
\notag &= 
- \left(\lambda \ce [\de{y_i}{y} \| \pil (y \mid \cv_i)] + (1 - \lambda) \ce [\pif(y \mid \cv_i) \| \pil (y \mid \cv_i)] \right) \\
&= 
- \lambda \left( \ce [\de{y_i}{y} \| \pil (y \mid \cv_i)] - \ce [\pif(y \mid \cv) \| \pil (y \mid \cv_i)]\right) - \ce [\pif(y \mid \cv_i) \| \pil (y \mid \cv_i)] \\
&=
\lambda \left ( \log \pil (y_i \mid \cv_i) + \ce [\pif(y \mid \cv) \| \pil (y \mid \cv_i)]\right) - \ce [\pif(y \mid \cv_i) \| \pil (y \mid \cv_i)]
\label{eq:dpo-sample}
\end{align}
From the lens of DPO, 
note that the editing knowledge $(\xv, \yv)$ can be seen as a \textit{preferred sample} drawn from unknown $\pi^+$ (e.g., retraining the LM from scratch).
Consequently, 
Eq \eqref{eq:dpo-sample} is in fact an \textit{unbiased estimator} of 
\begin{align}\label{eq:dpo-obj}
& \lambda \left(\underbrace{\E_{y^+ \sim \pi^+(y \mid \cv_i) } [\log \pil (y^+ \mid \cv_i)]}_{\text{Preferred distriibution}} - \E_{y^- \sim \pif(y \mid \cv_i)} [\log \pil(y^- \mid \cv_i)] \right) + \ce [\pif(y \mid \cv_i) \| \pil (y \mid \cv_i)] \notag\\
&= 
\lambda \E_{y^+, y^-}\left [ \log  \frac{\pil (y^+ \mid \cv_i)}{\pil(y^- \mid \cv_i)} \right ] + \kl [\pif(y \mid \cv_i) \| \pil (y \mid \cv_i)] + C \notag\\
&\overset{(a)}{=} 
\lambda 
\left(\E_{y^+, y^-}\left [ \log \frac{\pil (y^+ \mid \cv_i)}{\pil(y^- \mid \cv_i)} - \log \frac{ \pif (y^+ \mid \cv_i)}{ \pif(y^- \mid \cv_i)} \right ] + 
\E_{y^+} [\log \pif (y^+ \mid \cv_i) - \E_{y^-} [\log \pif (y^- \mid \cv_i)]
\right) \notag\\
&\quad
+ \kl [\pif(y \mid \cv_i) \| \pil (y \mid \cv_i)] + C \notag\\
&= 
\lambda \left(\E_{y^+, y^-}\left [ \log \frac{\pil (y^+ \mid \cv_i)}{\pil(y^- \mid \cv_i)} - \log \frac{\pif (y^+ \mid \cv_i)}{\log \pif(y^- \mid \cv_i)} \right ] + 
\underbrace{\E_{y^+} [\pif (y^+ \mid \cv_i)] - \E_{y^-} [\pif (y^- \mid \cv_i)]}_{\text{constant wrt $\theta$}}
\right) \notag\\
&\quad
+ \kl [\pif(y \mid \cv_i) \| \pil (y \mid \cv_i)] + C \notag \\
&= 
{\underbrace{\E_{y^+, y^-}\left [ \lambda \log \frac{\pil (y^+ \mid \cv_i)}{\pif (y^+ \mid \cv_i)} - \lambda \log \frac{\pil(y^- \mid \cv_i) }{\pif(y^- \mid \cv_i)} \right ]}_{\text{DPO with a clipped exponential preference}}} 
+ 
\underbrace{\kl [\pif(y \mid \cv_i) \| \pil (y \mid \cv_i)]}_{\text{Additional Penalty}} + C,
\end{align}
where the first term incorporates a \textit{preferred} distribution, of which the user-provided new knowledge $y_i$ serves an unbiased estimate.
Step (a) plugs in the log-likelihood ratio between the $(y^+, y^-)$ pair from $\pif$, which is constant with respect to $\theta$ and doesn't affect the objective thereof. 
In the final step, we treat the first term as a token-level DPO objective using current $\pif$ as the \textit{reference} model, and the preference model is given by a \textit{clipped exponential preference model}
\begin{align*}
\pr (y^+ \succ y^- \mid \cv_i) 
&= \min(\exp(r(\cv_i, y^+) - r(\cv_i, y^-)) / Z, 1),
\end{align*}
where $Z \geq 1$ is some constant. 
Notably, since our base distribution, $\pif$, is the clipped version of $\pil$, and $\lambda \in [0, 1]$, the difference in probability of $y^+$($y^-$) given $\cv_i$ is expected small, so that we can impose 
\begin{align*}
0 \leq \lambda \log \frac{\pil (y^+ \mid \cv_i)}{\pif (y^+ \mid \cv_i)} - \lambda \log \frac{\pil(y^- \mid \cv_i) }{\pif(y^- \mid \cv_i)} \leq 1,
\end{align*}
this allows us to set $Z = e$ and get rid of the clipping operator. 
Then, the first term becomes
\begin{align*}
&\E_{y^+, y^-}\left [ \lambda \log \frac{\pil (y^+ \mid \cv_i)}{\pif (y^+ \mid \cv_i)} - \lambda \log \frac{\pil(y^- \mid \cv_i) }{\pif(y^- \mid \cv_i)} \right ] \\
=& 
\E_{y^+, y^-}\left [ \log \left( \exp\left( \lambda \log \frac{\pil (y^+ \mid \cv_i)}{\pif (y^+ \mid \cv_i)} - \lambda \log \frac{\pil(y^- \mid \cv_i) }{\pif(y^- \mid \cv_i)} \right) / Z \right) \right ] + \log Z \\
=&
\E_{y^+, y^-}\left [ \log \pr (y^+ \succ y^- \mid \cv_i)  \right] + \log Z,
\end{align*}
where $\log Z$ is constant in parameter $\theta$.
Comparing this equation with \citet{rafailov2024direct} draws a connection between {\NAME} and DPO. 
The second term of Eq \eqref{eq:dpo-obj}, on the other hand, is an additional penalty to push $\pil$ stay close to $\pif$ by using a forward KL, which has been explored in preference learning \citep{wang2024beyond}.  

In conclusion, 
{\NAME} can be seen as an unbiased estimator of a special DPO problem. This completes our proof. 
\end{proof}

\section{Implementation Details}
\label{app:implementation}

\subsection{Hyperparameters used in KE}
We present the implementation details of our algorithms.
All of our experiments are run on EasyEdit \citep{wang2024easyedit}. 
In general, we tuned hyperparameters for each KE method basis \textit{using to the base version}, if the default setting from EasyEdit showed noticable inferior performance.
See below for more details.

\textbf{FT-M} used the following hyperparameters:
\begin{itemize}
    \item On ZsRE, Wiki\sub{recent}, Wiki\sub{counterfact}, and WikiBio: default training parameters from EasyEdit for both LLaMA 2 and LLaMA 3. 
    \item On MQuAKE: Layers to tune: {(20,21,22,23,24)}. Learning rate: {1e-3}. Others unchanged. 
\end{itemize}

\textbf{LoRA} used the following hyperparameters:
\begin{itemize}
    \item On ZsRE, Wiki\sub{recent}, Wiki\sub{counterfact}, and WikiBio: default training parameters from EasyEdit for both LLaMA 2 and LLaMA 3. 
    \item On MQuAKE: LoRA rank: {12}. Iteration numbers: {50}. Others unchanged. 
\end{itemize}

\textbf{MELO} used the following hyperparameters:
\begin{itemize}
    \item We set initial radius for each code in the code-book to 60 for LLaMA 2, and 30 for LLaMA 3. 
    Due to the fact that the default choice 0.1 was too small to retrieve any edited parameters for rephrased queries or reasoning. 
    
    \item 
    Others unchanged. 
    
\end{itemize}

\textbf{WISE} used the following hyperparameters:
\begin{itemize}
    \item 
    On {\NAME}, we shrunk activation thresholds by 0.6 in consideration of the milder overfitting from our method. 
    We didn't tune this shrinkage factor so it can be suboptimal. All other parameters used default values from EasyEdit. 

    \item
    We removed data augmentation for better measure HTO influence. This led to significantly faster editing speed (around 5 times speedup). 
\end{itemize}

\textbf{ROME} and \textbf{MEMIT} used default choices from EasyEdit.

Finally, {\NAME} is tuned on a KE model base and applied to both LLMs. 
We didn't tune hyper-parameters extensively, so below $\epsilon$ and $n$ can be suboptimal. 
\begin{itemize}
    \item FT-M: $\epsilon = 0.01$, $n = 0.5$ for $n \sigma$-filtering, $\lambda = 0.1$ for mixing. 
    \item LoRA: $\epsilon = 0.05$, $n = 0.5$ for $n \sigma$-filtering, $\lambda = 0.1$ for mixing. 
    \item MELO: $\epsilon = 0.05$, $n = 1$ for $n \sigma$-filtering, $\lambda = 0.1$ for mixing. 
    \item WISE: $\epsilon = 0.05$, $n = 1$ for $n \sigma$-filtering, $\lambda = 0.1$ for mixing. 
\end{itemize}

\subsection{MQuAKE Experiment Details}

MQuAKE benchmark follows a different evaluation pipeline for Single-Hop and Multi-Hop reasoning questions \citep{zhong2023mquake,wang2024deepedit} that checks the existence of ground truth answer in LLM's generation.
Our evaluation rubric followed \citet{zhong2023mquake}. 
We noted that the reliability of evaluation results heavily relies on the use of a good prompt, our prompts are given below.

\begin{itemize}
\item \textbf{Single-Hop questions}:  we used 1-shot prompting to guide the model provide answers directly, the complete prompt is 

\noindent\fbox{%
    \parbox{0.8\textwidth}{%
    You are a helpful AI assistant. Answer questions directly.

    Always format your response as:
    
    Final answer: [concise and direct final answer]

    Question: Who is the spouse of the head of state in United States of America?

    Answer: Jill Biden

    Question: \textit{ \# Single-Hop question related to the new knowledge \#}

    Answer: 
    }%
}

\item \textbf{Multi-Hop questions}:  Again we used 1-shot prompting to guide the model provide answers based on chain-of-thought~\citep{wei2022chain}, the complete prompt is

\noindent\fbox{%
    \parbox{0.8\textwidth}{%
    You are a helpful AI assistant. For each question:
    
    1. Break it down into simpler subquestions
    
    2. Answer each subquestion step by step. 
    
    3. Use your answers to provide a final answer after "Final answer: "
    
    Always format your response as:
        
    Subquestion: [your subquestion]
    
    Generated answer: [your answer]
    
    Final answer: [concise and direct final answer]
    
    Question: Who is the spouse of the head of state in United States of America?
    
    Subquestion: Who is the head of state in United States of America?
    
    Answer: The head of state in United States of America is Joe Biden.
    
    Subquestion: Who is the spouse of Joe Biden?
    
    Answer: The spouse of Joe Biden is Jill Biden.
    
    Final answer: Jill Biden
    
    Question: \textit{ \# Multi-Hop question related to the new knowledge \#}

    }%
}
\end{itemize}

In generation, we set temperature to 0.1. The maximum length was 30 for Single-Hop questions, and 200 for Multi-Hop questions. 
\textbf{Chat templates} are applied. 

\section{More Experiment Results}
\label{app:results}

We present the complete Continual Editing results here. 
Note that sequence $T=1$ reduces to Single Edit results, but we present them again for completeness.

\begin{table*}[htb!]
\definecolor{verylightgray}{gray}{0.9}

\centering
\caption{
Continual Editing performance (LLaMA 2). 
WISE requires additional irrelevant data for training, which is only available in ZsRE benchmark. 
}
\label{tab:con-llama2}
\resizebox{0.95\linewidth}{!}{
\renewcommand{\tabcolsep}{4pt}
\begin{tabular}{
>{\bfseries}r 
ccccc c 
cccc c 
cccc c 
ccc c 
}
\toprule[0.4ex]

& \multicolumn{5}{c}{\bf ZsRE} && \multicolumn{4}{c}{\bf Wiki\sub{recent}} && \multicolumn{4}{c}{\bf Wiki\sub{counterfact}} && \multicolumn{3}{c}{\bf WikiBio} \\

\midrule[0.2ex]
& \multicolumn{19}{c}{\bf $T=1$} \\
\cmidrule[0.2ex]{2-20}
&\bf Rel. &\bf Gen. &\bf Por. &\bf Loc. &\bf Avg. &&\bf Rel. &\bf Por. &\bf Loc. &\bf Avg. &&\bf Rel. &\bf Por. &\bf Loc. & \bf Avg. &&\bf Rel. &\bf Loc. &\bf Avg. \\
\cmidrule{2-6} \cmidrule{8-11} \cmidrule{13-16} \cmidrule{18-20}
ROME  & 96.61 & 83.91 & 55.7 & 96.96 & 83.3  && 99.02 & 54.21 & 55.91 & 69.71  && 97.2 & 56.85 & 50.4 & 68.15  && 96.41 & 59.14 & 77.78 \\
MEMIT & 94.22 & 88.2 & 57.91 & 98.28 & 84.65  && 97.71 & 52.93 & 55.05 & 68.56  && 96.38 & 59.34 & 45.7 & 67.14  && 93.78  & 56.74 & 75.26 \\
\noalign{\vskip 0.2ex}\cdashline{2-20}\noalign{\vskip 0.2ex}

FT-M & 99.75 & 99.33 & 54.32 & 93.01 & 86.60 && 100.0 & 62.93 & 45.92 & 69.62 && 100.0 & 74.7 & 54.86 & 76.52 && 100.0 & 90.04 & 95.02 \\
\rowcolor{gray!15}
+ Ours & 99.75 & 96.8 & 57.08 & 96.54 & 87.54 &&  100.0 & 63.91 & 60.4 & 74.77 &&   100.0 & 73.62 & 75.34 & 82.99 &&  100.0 & 93.46 & 96.73 \\
\noalign{\vskip 0.2ex}\cdashline{2-20}\noalign{\vskip 0.2ex}

LoRA & 100.0 & 100.0 & 23.34 & 30.44 & 63.45 &&  100.0 & 55.41 & 28.29 & 61.23 &&  100.0 & 71.92 & 9.99 & 60.64 &&  100.0 & 48.84 & 74.42 \\
\rowcolor{gray!15}
+ Ours & 100.0 & 94.31 & 61.16 & 87.2 & 85.67 &&  100.0 & 63.67 & 58.72 & 74.13 &&  100.0 & 73.96 & 57.85 & 77.27 &&  97.68 & 68.45 & 83.06 \\
\noalign{\vskip 0.2ex}\cdashline{2-20}\noalign{\vskip 0.2ex}

MELO & 100.0 & 96.77 & 27.11 & 92.35 & 79.06 &&  99.13 & 54.04 & 40.96 & 64.71 &&  99.0 & 71.78 & 55.83 & 75.54 &&  99.97 & 80.77 & 90.37 \\
\rowcolor{gray!15}
+ Ours  & 100.0 & 93.31 & 50.36  & 97.2 & 85.22 &&  100.0 & 60.25 & 66.48 & 75.58 &&  99.91 & 71.81 & 78.09 & 83.27 &&  99.68 & 82.58 & 91.13 \\
\noalign{\vskip 0.2ex}\cdashline{2-20}\noalign{\vskip 0.2ex}

WISE & 92.42 & 70.86 & 54.57 & 100.0 & 79.46 && - & - & - & -  && - & - & - & - && - & - & - \\
\rowcolor{gray!15}
+ Ours & 97.55 & 76.09 & 54.17 & 100.0 & 81.95 && - & - & - & -  && - & - & - & - && - & - & - \\

\midrule[0.2ex]
& \multicolumn{19}{c}{\bf $T=10$} \\
\cmidrule[0.2ex]{2-20}
&\bf Rel. &\bf Gen. &\bf Por. &\bf Loc. &\bf Avg. &&\bf Rel. &\bf Por. &\bf Loc. &\bf Avg. &&\bf Rel. &\bf Por. &\bf Loc. & \bf Avg. &&\bf Rel. &\bf Loc. &\bf Avg. \\
\cmidrule{2-6} \cmidrule{8-11} \cmidrule{13-16} \cmidrule{18-20}
ROME  & 74.94  &  69.67  &  51.12  &  71.72  &  66.86  && 98.14  &  55.16  &  54.73  &  69.34  && 86.17  &  47.36  &  38.99  &  57.51  && 40.55  & 25.98  &  33.27 \\
MEMIT & 68.39  &  66.26  &  46.66  &  84.22  &  66.38  && 96.51  &  54.2  &  52.56  &  67.76  && 89.64  &  54.71  &  38.2  &  60.85  && 52.2  &  38.54  &  45.37   \\
\noalign{\vskip 0.2ex}\cdashline{2-20}\noalign{\vskip 0.2ex}

FT-M & 89.14 & 87.43 & 47.13 & 84.26 & 76.99 &&  97.4 & 56.47 & 41.4 & 65.09 &&  96.41 & 70.32 & 42.44 & 69.72 &&  92.96 & 77.69
 & 85.32 \\
 \rowcolor{gray!15}
 + Ours & 92.8 & 88.21 & 55.74 & 91.06 & 81.95 &&  96.42 & 61.65 & 53.13 & 70.40 &&  98.72 & 72.47 & 65.46 & 78.88 &&  95.26 & 84.43 
 & 89.84 \\
\noalign{\vskip 0.2ex}\cdashline{2-20}\noalign{\vskip 0.2ex}

LoRA & 29.25 & 30.41 & 19.83 & 24.81 & 26.07 &&  35.17 & 23.8 & 24.98 & 27.98 &&  22.64 & 13.87 & 10.24 & 15.58 &&  70.45 & 46.82
 & 58.64 \\
\rowcolor{gray!15}
+ Ours & 85.4  & 81.5 & 61.03 & 74.41 & 75.59 &&  94.55 & 59.16 & 49.09 & 67.60 &&  71.61 & 51.91 & 32.65 & 52.06 &&  74.74 & 48.35 
 & 61.55 \\
\noalign{\vskip 0.2ex}\cdashline{2-20}\noalign{\vskip 0.2ex}

MELO & 94.13 & 83.06 & 50.48 & 96.5 & 81.04 &&   91.73 & 53.02 & 81.09 & 75.28 &&  92.52 & 64.55 & 99.98 & 85.68 &&  95.44 & 97.94
 & 96.69 \\
 \rowcolor{gray!15}
+ Ours & 94.38 & 81.89 & 54.92 & 98.41 & 82.40 &&  91.69 & 54.95 & 93.22 & 79.95 &&  93.49 & 63.36 & 99.98 & 85.61 &&  95.24 & 97.77
 & 96.50 \\
 \noalign{\vskip 0.2ex}\cdashline{2-20}\noalign{\vskip 0.2ex}

WISE & 84.5 & 73.81 & 53.19 & 100.0 & 77.88 && - & - & - & -  && - & - & - & - && - & - & - \\
\rowcolor{gray!15}
+ Ours & 86.68 & 77.24 & 54.0 & 100.0 & 79.48 && - & - & - & -  && - & - & - & - && - & - & - \\

\midrule[0.2ex]
& \multicolumn{19}{c}{\bf $T=100$} \\
\cmidrule[0.2ex]{2-20}
&\bf Rel. &\bf Gen. &\bf Por. &\bf Loc. &\bf Avg. &&\bf Rel. &\bf Por. &\bf Loc. &\bf Avg. &&\bf Rel. &\bf Por. &\bf Loc. & \bf Avg. &&\bf Rel. &\bf Loc. &\bf Avg. \\
\cmidrule{2-6} \cmidrule{8-11} \cmidrule{13-16} \cmidrule{18-20}
ROME  & 25.37  &  22.68  &  4.73  &  5.1  &  14.47  && 24.99  &  13.12  &  8.55  &  15.56  && 0.0  &  0.0  &  0.0  &  0.0  && 2.63  & 15.74  &  9.18 \\
MEMIT & 2.58  &  2.88  &  0.24  &  2.5  &  2.05  && 70.22  &  41.12  &  38.43  &  49.92  && 0.82  &  0.97  &  0.26  &  0.69  && 0.0  & 15.74  &  7.87  \\
\noalign{\vskip 0.2ex}\cdashline{2-20}\noalign{\vskip 0.2ex}

FT-M & 88.36 & 84.51 & 41.76 & 54.11 & 67.19 &&  97.51 & 53.73 & 33.88 & 61.71 &&  95.69 & 66.23 & 26.69 & 62.87 &&  93.56 & 67.51  & 80.53 \\
\rowcolor{gray!15}
+ Ours & 89.38 & 82.13 & 52.69 & 72.39 & 74.15 &&  96.32 & 58.28 & 47.04 & 67.21 &&  95.93 & 68.16 & 44.28 & 69.46 &&  95.35 & 74.91 & 85.13 \\
\noalign{\vskip 0.2ex}\cdashline{2-20}\noalign{\vskip 0.2ex}

LoRA & 0.67 & 0.78 & 1.00 & 0.03 & 0.62 &&  0.5 & 0.5 & 0.12 & 0.37 &&  0.67 & 0.0 & 0.0 & 0.22 &&  47.02 & 27.06 & 37.04 \\
\rowcolor{gray!15}
+ Ours & 62.23 & 58.06 & 56.62 & 59.57 & 59.12 &&  70.49 & 47.05 & 49.87 & 55.80 &&  32.17 & 28.99 & 29.19 & 30.12 &&  52.96 & 25.73 & 39.34 \\

MELO & 38.13 & 36.12 & 53.88 & 98.08 & 56.55 &&  26.33 & 24.98 & 53.73 & 35.01 &&  24.87 & 24.21 & 78.71 & 42.60 &&  48.88 & 97.61 & 48.88 \\
\rowcolor{gray!15}
+ Ours & 39.13 & 37.28 & 54.75 & 98.58 & 57.44 &&  47.95 & 39.65 & 86.77 & 58.12 &&  24.92 & 25.39 & 97.12 & 49.14 &&  52.17 & 97.44 & 74.81 \\
\noalign{\vskip 0.2ex}\cdashline{2-20}\noalign{\vskip 0.2ex}

WISE & 84.59 & 71.59 & 54.45 & 100.0 & 77.66 && - & - & - & -  && - & - & - & - && - & - & - \\
\rowcolor{gray!15}
+ Ours & 92.42 & 84.22 & 56.71 & 100.0 & 83.34  && - & - & - & -  && - & - & - & - && - & - & - \\

\bottomrule[0.4ex]
\end{tabular}
}
\vspace{-0.2cm}
\end{table*}

\begin{table*}[htb!]
\definecolor{verylightgray}{gray}{0.9}

\centering
\caption{
Continual Editing performance (LLaMA 3). 
WISE requires additional irrelevant data for training, which is only available in ZsRE benchmark. 
}
\label{tab:con-llama3}
\resizebox{0.95\linewidth}{!}{
\renewcommand{\tabcolsep}{4pt}
\begin{tabular}{
>{\bfseries}r 
ccccc c 
cccc c 
cccc c 
ccc c 
}
\toprule[0.4ex]

& \multicolumn{5}{c}{\bf ZsRE} && \multicolumn{4}{c}{\bf Wiki\sub{recent}} && \multicolumn{4}{c}{\bf Wiki\sub{counterfact}} && \multicolumn{3}{c}{\bf WikiBio} \\

\midrule[0.2ex]
& \multicolumn{19}{c}{\bf $T=1$} \\
\cmidrule[0.2ex]{2-20}
&\bf Rel. &\bf Gen. &\bf Por. &\bf Loc. &\bf Avg. &&\bf Rel. &\bf Por. &\bf Loc. &\bf Avg. &&\bf Rel. &\bf Por. &\bf Loc. & \bf Avg. &&\bf Rel. &\bf Loc. &\bf Avg. \\
\cmidrule{2-6} \cmidrule{8-11} \cmidrule{13-16} \cmidrule{18-20}

ROME  & 99.17 & 97.91 & 58.12 & 95.9 & 87.78  && 98.84 & 54.76 & 49.74 & 67.78  && 99.94 & 58.0 & 42.94 & 66.96  && 92.43  & 72.63 & 82.53 \\
MEMIT & 96.67 & 92.46 & 58.78 & 98.23 & 86.53  && 98.51 & 53.65 & 48.45 & 66.87  && 99.44 & 57.81 & 42.73 & 66.66  && 96.26  & 71.23 & 83.75  \\
\noalign{\vskip 0.2ex}\cdashline{2-20}\noalign{\vskip 0.2ex}

FT-M & 100.0 & 99.75 & 40.43 & 79.43 & 79.90 &&  100.0 & 57.13 & 30.01 & 62.38 &&  100.0 & 72.62 & 31.47 & 68.03 &&  100.0  & 92.96 & 96.48 \\
\rowcolor{gray!15}
+  Ours & 100.0 & 99.75 & 48.63 & 94.78 & 85.79 &&  100.0 & 60.88 & 44.67 & 68.52 &&  100.0 & 73.5 & 58.29 & 77.26 &&  99.99  & 94.87 & 97.43 \\
\noalign{\vskip 0.2ex}\cdashline{2-20}\noalign{\vskip 0.2ex}

LoRA & 100.0 & 100.0 & 26.55 & 38.85 & 66.35 &&  100.0 & 52.99 & 26.46 & 59.82 &&  100.0 & 71.1 & 9.02 & 60.04 && 100.0 & 59.77 & 79.88 \\
\rowcolor{gray!15}
+ Ours & 100.0 & 98.5 & 51.57 & 93.13 & 85.80 &&  100.0 & 61.46 & 56.1 & 72.52 &&  100.0 & 72.8 & 57.54 & 76.78 && 98.16 & 77.24 & 87.7 \\
\noalign{\vskip 0.2ex}\cdashline{2-20}\noalign{\vskip 0.2ex}

MELO & 100.0 & 96.84 & 39.63 & 98.8 & 83.82 &&  100.0 & 59.07 & 65.78 & 74.95 &&  100.0 & 71.55 & 87.77 & 86.44 && 100.0  & 98.56 & 99.28 \\
\rowcolor{gray!15}
+ Ours & 100.0 & 95.77 & 43.08 & 98.8 & 84.41 &&  100.0 & 58.72 & 69.1 & 75.94 &&  100.0 & 70.26 & 89.81 & 86.69 && 99.98  & 98.56 & 99.27 \\
\noalign{\vskip 0.2ex}\cdashline{2-20}\noalign{\vskip 0.2ex}

WISE    & 71.67 & 51.29 & 49.27 & 100.0 & 68.06  && - & - & - & -  && - & - & - & - && - & - & - \\
\rowcolor{gray!15}
+ Ours  & 82.67 & 62.34 & 47.54 & 100.0 & 73.14  && - & - & - & -  && - & - & - & - && - & - & - \\

\midrule[0.2ex]
& \multicolumn{19}{c}{\bf $T=10$} \\
\cmidrule[0.2ex]{2-20}
&\bf Rel. &\bf Gen. &\bf Por. &\bf Loc. &\bf Avg. &&\bf Rel. &\bf Por. &\bf Loc. &\bf Avg. &&\bf Rel. &\bf Por. &\bf Loc. & \bf Avg. &&\bf Rel. &\bf Loc. &\bf Avg. \\
\cmidrule{2-6} \cmidrule{8-11} \cmidrule{13-16} \cmidrule{18-20}

ROME  & 43.91  &  40.14  &  25.11  &  31.7  &  35.22  && 91.17  &  51.25  &  43.67  &  62.03  && 86.52  &  45.37  &  32.9  &  54.93  && 4.01  & 7.58  &  5.79 
 \\
MEMIT & 59.74  &  58.36  &  37.34  &  71.06  &  56.62  && 98.38  &  54.42  &  47.08  &  66.63  && 98.61  &  58.48  &  36.28  &  64.46  && 5.4  & 1.61  &  3.5 
\\
\noalign{\vskip 0.2ex}\cdashline{2-20}\noalign{\vskip 0.2ex}

FT-M & 79.54 & 78.44 & 25.03 & 43.97 & 56.75 &&  87.22 & 48.12 & 25.8 & 53.71 &&  90.13 & 62.37 & 13.83 & 55.44 &&  95.59  & 87.45
 & 91.52 \\
 \rowcolor{gray!15}
+  Ours & 84.74 & 81.41 & 44.2 & 75.67 & 71.50 &&  92.77 & 52.65 & 38.99 & 61.47 &&  93.04 & 66.5 & 39.99 & 66.51 &&  96.81  &  91.17
 & 93.99 \\
 \noalign{\vskip 0.2ex}\cdashline{2-20}\noalign{\vskip 0.2ex}

LoRA & 18.54 & 17.55 & 6.63 & 6.56 & 12.32 &&  21.7 & 13.66 & 11.97 & 15.78 &&  12.59 & 5.92 & 0.69 & 6.40 && 51.09  & 44.45  &  47.77 \\
\rowcolor{gray!15}
+  Ours & 73.28 & 72.39 & 53.13 & 69.36 & 67.04 &&  93.68 & 56.97 & 49.34 & 66.66 &&  71.99 & 49.52 & 32.24 & 51.25 && 64.26 & 55.11  &  59.69 \\
\noalign{\vskip 0.2ex}\cdashline{2-20}\noalign{\vskip 0.2ex}

MELO & 94.08 & 80.47 & 47.97 & 98.8 & 80.33 &&  92.56 & 54.51 & 86.58 & 77.88 &&  92.97 & 63.74 & 98.3 & 85.00 && 94.77  & 98.56  &  96.67\\
\rowcolor{gray!15}
+ Ours & 94.08 & 80.94 & 49.77 & 98.8 & 80.90 &&  91.56 & 54.24 & 89.16 & 78.32 &&  92.97 & 62.69 & 98.32 & 84.66 && 94.91  &  98.56  &  96.74 \\
\noalign{\vskip 0.2ex}\cdashline{2-20}\noalign{\vskip 0.2ex}

WISE  & 51.14 & 43.36 & 51.0 & 100.0 & 61.38  && - & - & - & -  && - & - & - & - && - & - & - \\
\rowcolor{gray!15}
+ Ours & 58.21 & 53.22 & 49.21 & 100.0 & 65.16 && - & - & - & -  && - & - & - & - && - & - & - \\

\midrule[0.2ex]
& \multicolumn{19}{c}{\bf $T=100$} \\
\cmidrule[0.2ex]{2-20}
&\bf Rel. &\bf Gen. &\bf Por. &\bf Loc. &\bf Avg. &&\bf Rel. &\bf Por. &\bf Loc. &\bf Avg. &&\bf Rel. &\bf Por. &\bf Loc. & \bf Avg. &&\bf Rel. &\bf Loc. &\bf Avg. \\
\cmidrule{2-6} \cmidrule{8-11} \cmidrule{13-16} \cmidrule{18-20}

ROME  & 7.18  &  6.02  &  1.04  &  2.24  &  4.12  && 8.89  &  1.36  &  0.31  &  3.52  && 3.92  &  0.99  &  0.0  &  1.64  && 0.88  &  7.47  &  4.18 \\
MEMIT & 0.0  &  0.0  &  0.0  &  0.0  &  0.0  && 0.57  &  0.92  &  0.4  &  0.63  && 0.81  &  0.86  &  0.0  &  0.56  && 0.01  & 23.44  &  11.73 \\
\noalign{\vskip 0.2ex}\cdashline{2-20}\noalign{\vskip 0.2ex}

FT-M & 78.79 & 78.29 & 13.7 & 15.42 & 46.55 &&  94.27 & 44.09 & 22.99 & 53.78 &&  87.47 & 55.62 & 2.78 & 48.62 &&  93.65  & 85.83
 & 89.74 \\
\rowcolor{gray!15}
+  Ours & 81.2 & 77.87 & 32.65 & 44.66 & 59.09 &&  96.19 & 53.73 & 32.42 & 60.78 &&  92.97 & 62.02 & 20.71 & 58.57 &&  94.23  &  85.83 
 & 94.23 \\
\noalign{\vskip 0.2ex}\cdashline{2-20}\noalign{\vskip 0.2ex}

LoRA & 1.75 & 1.81 & 1.29 & 2.13 & 1.74 &&  1.33 & 1.58 & 0.93 & 1.28 &&  1.00 & 0.00 & 0.00 & 0.33 && 15.88  & 17.61  &  16.74 \\
\rowcolor{gray!15}
+  Ours & 51.38 & 50.3 & 49.72 & 35.83 & 46.81 &&  64.82 & 42.92 & 44.27 & 50.67 &&  25.31 & 20.18 & 17.49 & 20.99 && 19.03  & 10.9  &  14.96  \\
 \noalign{\vskip 0.2ex}\cdashline{2-20}\noalign{\vskip 0.2ex}

MELO & 29.79 & 28.83 & 50.01 & 98.8 & 51.86 &&  36.71 & 29.02 & 83.23 & 49.65 &&  22.2 & 22.9 & 97.85 & 22.55 && 52.19  & 98.56  &  75.37  \\
\rowcolor{gray!15}
+ Ours & 29.79 & 28.73 & 50.01 & 98.8 & 51.83 &&  40.42 & 34.85 & 92.67 & 55.98 &&  22.45 & 22.9 & 97.85  & 47.73  && 52.15  &  98.56  &  75.36 \\
 \noalign{\vskip 0.2ex}\cdashline{2-20}\noalign{\vskip 0.2ex}

WISE & 84.87 & 74.87 & 39.24 & 100.0 & 74.75  && - & - & - & -  && - & - & - & - && - & - & - \\
\rowcolor{gray!15}
+ Ours  & 86.83 & 77.54 & 34.99 & 100.0 & 74.84  && - & - & - & -  && - & - & - & - && - & - & - \\

\bottomrule[0.4ex]
\end{tabular}
}
\vspace{-0.2cm}
\end{table*}

\section{More Discussions}
\label{app:discuss}

We discuss some more conceptual characteristics and potential problems that future works may work on here. 

\paragraph{{\NAME} and ROME/MEMIT.} 
ROME~\citep{meng2022locating} and MEMIT~\citep{meng2022mass} are representative solutions of KE through the locate-and-editing paradigm that are built upon the causal tracing and explicitly constructed updated rules.
This leads to special KE losses which contains two unique designs other than those being used in the four backbone methods we studied.
First, the impact of auto-regressive loss, which {\NAME} alters, on ROME is weaker, in the sense that the MSE loss will determine the final parameter update. Second, ROME relies on random prefix augmentation, which affects overfitting as well. Given these facts, we plan to work on a more principled way to extend {\NAME}, a augmentation-free end-to-end training paradigm, in light of its principle. That is, we seek a better way to smooth (relax) different token fitting adaptively with the model's own knowledge, following the principle of {\NAME}. Therefore, it would be interesting to bring the idea of {\NAME} to training ROME and MEMIT to boost their generalizability. 

\paragraph{Potential Bias in {\NAME} Design.}
{\NAME} is designed to the model's own prediction to extract pretrained knowledge that should be maintained. 
To avoid misleading knowledge conflict and general noise, {\NAME} incorporates two mechanisms. First, the unreliable (noisy) part is filtered out. Second, mixing with the model's prediction is conducted only if the mixed distribution correctly assigns the ground truth label (i.e., training token) the highest probability. However, \textit{provably} solving the potential knowledge conflict and identifying the optimal target distribution for KE are still two open questions, and we advocate for future studies to work on these two directions towards better KE.

\end{document}